\newcommand{\todoch}[2][]{\todo[color=blue!20!white]{Ch: #2}}
\newcommand{\todoc}[2][]{\todo[color=orange!20!white]{Cs: #2}}
\newcommand{\revise}[1]{#1}
\newcommand{\revisedelete}[1]{}
\newcommand{\logpi}{{\pi_{\mathrm{log}}}}
\crefname{proposition}{Proposition}{Propositions}
\crefname{theorem}{Theorem}{Theorems}
\crefname{lemma}{Lemma}{Lemmas}
\crefname{update_rule}{Update}{Updates}
\crefname{algorithm}{Algorithm}{Algorithms}
\def\eqref#1{equation~\ref{#1}}
\def\1{\bm{1}}
\renewcommand{\epsilon}{\varepsilon}
\DeclareMathAlphabet{\mathsfit}{\encodingdefault}{\sfdefault}{m}{sl}
\SetMathAlphabet{\mathsfit}{bold}{\encodingdefault}{\sfdefault}{bx}{n}
\def\cA{{\mathcal{A}}}
\def\cB{{\mathcal{B}}}
\def\cD{{\mathcal{D}}}
\def\cE{{\mathcal{E}}}
\def\cF{{\mathcal{F}}}
\def\cL{{\mathcal{L}}}
\def\cM{{\mathcal{M}}}
\def\cN{{\mathcal{N}}}
\def\cP{{\mathcal{P}}}
\def\cS{{\mathcal{S}}}
\def\cX{{\mathcal{X}}}
\def\sI{{\mathbb{I}}}
\def\sP{{\mathbb{P}}}
\newcommand{\KL}{D_{\mathrm{KL}}}
\DeclareMathOperator*{\argmin}{arg\,min}
\newtheorem{theorem}{Theorem}
\newtheorem{lemma}{Lemma}
\newtheorem{definition}{Definition}
\newtheorem{proposition}{Proposition}
\newtheorem{remark}{Remark}
\newtheorem{corollary}{Corollary}
\newcommand{\PP}{\mathbb{P}}
\newcommand{\RR}{\mathbb{R}}
\newcommand{\EE}{\mathbb{E}}
\newcommand{\norm}[1]{\|#1\|}
\newcommand{\addeq}{\addtocounter{equation}{1}\tag{\theequation}}
\newcommand{\pushright}[1]{\ifmeasuring@#1\else\omit\hfill$\displaystyle#1$\fi\ignorespaces}
\newcommand{\pushleft}[1]{\ifmeasuring@#1\else\omit$\displaystyle#1$\hfill\fi\ignorespaces}
\newlength\tocrulewidth
\newcommand{\N}{\mathbb{N}}
\newcommand{\rS}{\mathrm{S}}
\newcommand{\rA}{\mathrm{A}}
\title{
The Curse of Working with Policy-induced Data in Batch Reinforcement Learning
}
\author{
{Chenjun Xiao}$^{1,2}$\thanks{{\color{red} Notes on revision:} we fix an error in previous upper bound results \cref{thm:plug-in-ub}. The original result states that the plug-in algorithm is nearly minimax optimal when using an uniform policy as the logging policy.  \cref{sec:revision-discussion} gives the details of the revision. } \ \
{Ilbin Lee}$^1$\ \ 
{Bo Dai}$^2$\ \  
{Dale Schuurmans}$^{1,2}$\ \ 
{Csaba Szepesvari}$^{1,3}$
}
\affil{
$^1$University of Alberta\quad
$^2$Google Research, Brain Team\quad
$^3$DeepMind \\
 {\{chenjun,ilbin,szepesva,daes\}@ualberta.ca},\ 
 {bodai@google.com}
}
\begin{document}

\maketitle

\begin{abstract}

In high stake applications, active experimentation may be considered too risky and thus data are often collected passively.
While in simple cases, such as in bandits, passive and active data collection are similarly effective, the price of passive sampling can be much higher when collecting data from a system with controlled states.
The main focus of the current paper is the characterization of this price.
For example, when learning in episodic finite state-action Markov decision processes (MDPs) with 
$\rS$ states and $\rA$ actions,
we show that even with the best (but passively chosen) logging policy,
$\Omega(\rA^{\min(\rS-1, H)}/\varepsilon^2)$ episodes are necessary (and sufficient) to obtain an $\epsilon$-optimal policy, where $H$ is the length of episodes.
Note that this shows that the sample complexity blows up exponentially compared to the case 
of active data collection, a result which is not unexpected, but, as far as we know, have not been published beforehand and perhaps the form of the exact expression is a little surprising.
%
We also extend these results in various directions, such as other criteria or learning in the presence of function approximation, with similar conclusions. 
A remarkable feature of our result is the sharp characterization of the exponent that appears, which is critical for understanding what makes passive learning hard.

\end{abstract}

\section{Introduction}
\vspace{-0.1in}

Batch reinforcement learning (RL) broadly refers to
the problem of finding a policy with high expected return in a stochastic control problem
when only a batch of data collected from the controlled system is available.
Here, we consider this problem for finite state-action Markov decision processes (MDPs),
with or without function approximation,
when the data is in the form of trajectories obtained by following some \emph{logging policy}.
In more details, the trajectories are composed of sequences of states, actions, and rewards, where the action is chosen by the logging policy, and the next states and rewards follow the distributions specified by the MDP's transition parameters.  

There are two subproblems underlying batch RL:
the \emph{design problem}, where the learner needs to specify 
a data collection mechanism that will be used to collect the batch data; 
and the \emph{policy optimization problem}, where the learner needs to specify the algorithm
that produces a policy given the batch data. 
For the design problem, often times one can use an adaptive data collection process where the next action to be taken is determined by the past data.
Another way to say this is that the data collection is done in an \emph{active} way. 
Recent theoretical advances in reward-free exploration \citep[e.g.,][]{
jin2020reward,kaufmann2021adaptive,zhang2021near} 
show that one can design algorithms to collect a batch data set with only polynomial samples to have good coverage over all possible scenarios in the environment.  
Near-optimal policies can be obtained for any given reward functions using standard policy optimization algorithms with the collected data. 
A complication arises in applications,
such as healthcare, education, autonomous driving, or hazard management,
where active data collection is either impractical or dangerous \citep{levine2020offline}. 
In these applications, the best one could do is to collect data using a fixed, logging policy, which needs to be chosen \emph{a priori}, that is before the data collection process begins, so that the stakeholders can approve it. Arguably, this is the most natural problem setting to consider in batch learning. 
The fundamental questions are:
how should one choose the logging policy so as to maximize the chance of obtaining a good policy with as little data as possible and
how many samples are sufficient and necessary to obtain a near optimal policy given a logging policy, and which algorithm to use to obtain such a policy?  

Perhaps surprisingly, before our paper, these questions remained unanswered.
In particular, while much work have studied the sample complexity of batch RL,
the results in these works are focusing only on the policy optimization subproblem and as such fall short in providing an answer to our questions.
In particular, some authors give sample complexity upper and lower bounds as 
a function of a parameter, $d_m$, which could be 
the smallest visit probability of state-action pairs under the logging policy
\citep{chen2019information,yin2020asymptotically,yin2021nearope,ren2021nearly,uehara2021finite,yin2021optimal,xie2021batch,xie2021bellman}, 
or the smallest ratio of visit probabilities of the logging versus the optimal policies, 
again over all state-action pairs
\citep{liu2019off,liu2020provably,yin2021nearopo,jin2021pessimism,rashidinejad2021bridging,xie2021policy}. 
The sample complexity results depend polynomially on $1/d_m$, $\rS,\rA$ and $H$, where $H$ is the episode length or the effective horizon.
Although these results are valuable in informing us the policy optimization step of batch RL,
they provide no clue as to how to choose the logging policy to get a high value for $d_m$
and whether $d_m$ will be uniformly bounded from below when adopting such a logging policy.
In particular, 
if we take the first definition for $d_m$, in some MDPs $d_m$ will be zero regardless of the logging policy if some state is not accessible from the initial distribution.
While this predicts an infinite sample complexity for our problem, this is clearly too conservative, since if a state is not accessible under any policy, it is unimportant to learn about it.
This is corrected by the second definition.
However,  even with this definition it remains unclear whether $d_m$ will be uniformly bounded away from zero for an MDP with a fixed number of states and actions and the best instance-agnostic choice of the logging policy. 
The lower bounds in these work also fail to provide a lower bound for our setting.
This is because in these lower bounds the instance will include an adversarially chosen logging policy, again falling short of helping us.
Essentially, these are the gaps that we fill with this paper.

In particular, 
we first show that in tabular MDPs the number of transitions necessary and sufficient to obtain a good policy, \emph{the sample complexity of learning}, is an exponential function of the minimum of the number of states and the planning horizon. 
In more details, we prove that the sample complexity of obtaining $\epsilon$-optimal policies is at least $\Omega(\rA^{\min(\rS-1, H+1)})$ for $\gamma$-discounted problems, where $\rS$ is the number of states, $\rA$ is the number of actions (per state), and $H$ is the effective horizon defined as 
$H=\lfloor \tfrac{\ln(\sfrac{1}{\epsilon})}{\ln(\sfrac1\gamma)} \rfloor$.
For finite horizon problems with horizon $H$, we prove the analogue
 $\Omega(\rA^{\min(\rS-1, H)}/\varepsilon^2)$ lower bound. 
These results for tabular MDPs immediately imply 
exponential lower bounds when linear value function approximation is applied with $\rS$ replaced by $d$, the number of features. 
We also show that warm starts (when one starts with a policy which is achieving almost as much as the optimal policy) do not help either, crushing the hope that one the ``curse'' of passivity can be broken by adopting a straightforward two-phase data collection process \citep{bai2019provably,zhang2020almost,gao2021provably}. 
We then establish nearly matching upper bounds for both
the plug-in algorithm and pessimistic algorithm, showing that the sample complexity
behaves essentially as shown in the lower bounds. 
While the upper bounds for these two algorithms may be off by a polynomial factor, we do not expect the pessimistic algorithm to have a major advantage over the plug-in method in the worst-case setting.
In fact, the recent work of \citet{xiao2021optimality} established this in a rigorous fashion for the bandit setting by showing an algorithm independent lower bound that matched the upper bound for both the plug-in method and the pessimistic algorithm. 
In the average reward case we show that the sample complexity is infinite.


How do our results relate to the expectations of RL practitioners (and theoreticians)?
We believe that most members of our community recognize that batch RL is hard at the fundamental level. 
Yet, it appears that many in the community are still highly optimistic about batch RL,
as demonstrated by the numerous empirical papers that document successes of various levels and kinds
\citep[e.g.,][]{laroche2019safe,kumar2019stabilizing,wu2019behavior,jaques2019way,agarwal2020optimistic,KiRaNeJo20,yu2020mopo,gulcehre2020rl,fu2020d4rl}, or by the optimistic tone of the above-mentioned theoretical results. 
The enthusiasm of the community is of course commendable and nothing is farthest from our intentions than to break it.
In connection to this, we would like to point out that our results show that if \emph{either $H$, or $\rS$} (or $d$ when we have $d$ features) is fixed, batch RL is in fact tractable.
Yet, the lower bound assures us that we cannot afford batch RL if \emph{both} of these parameters are large, a result which one should not hide from. 
Perhaps the most important finding here is the curious interplay between the horizon and the number of states (more generally, we expect a complexity parameter of the MDP to stand here), which is reasonable yet we find it non-obvious. Certainly, the proof that shows that the interplay is ``real'' required some new ideas.
Returning to the empirical works,  recent studies identify the tandem effect from the
issue of function approximation in the batch RL with passive data collection \citep{ostrovski2021difficulty}. 
Our results suggest that there is a need to rethink how batch RL methods are benchmarked.
In particular, a tedious examination of the benchmarks shows that the promising results are almost always produced on data sets that are collected by a noisy version of a near-optimal policy. The problem is that this choice biases the development of algorithms towards those that exploit this condition (e.g., the pessimistic algorithm), yet, this mode of data collection is unrealistic.%
\footnote{\citet{qin2021neorl} points to another problem with the benchmarks; namely that they fail to compare to the noise-free version of the near-optimal policy used in the data collection. 
\citet{brandfonbrener2021offline} observe that simply doing one step of constrained/regularized policy improvement using an
value estimate of the behavior policy performs surprisingly well in offline RL benchmarks.
They hypothesize that the strong performance is due to a combination of
favorable structure in the environment and behavior policy. }

Another highlight of our result is that it implies an exponential gap between the sample complexity of passive and active learning. 
Recently,
a similar conclusion is drawn in the paper of \citet{zanette2020exponential}, which served as the main motivation for our work. 
\citeauthor{zanette2020exponential} demonstrated an exponential separation for the case when batch learning is used in the presence of linear function approximation. A careful reading of the paper shows that the lower bound shown there does not apply to the tabular setting 
as the data collection method of this paper 
allows sampling from any distribution over the state-action space,
which, in the tabular setting is sufficient for polynomial sample complexity.

\section{Notation and Background} 

\paragraph{Notation} 
We let $\RR$ denote the set of real numbers, and for a positive integer $i$, let $[i] = \{0,\dots,i-1\}$ be the set of integers from $0$ to $i-1$.  We also let $\N=\{0,1,\dots\}$ be the set of nonnegative integers and $\N_+=\{1,2,\dots\}$ be the set of positive integers.
For a finite set $\cX$, we use $\Delta(\cX)$ to denote the set of probability distributions over $\cX$. 
We also use the same notation for infinite sets when the set has a clearly identifiable measurability structure such as $\RR$, which in this context is equipped by the $\sigma$-algebra of Borel measurable sets.
We use $\sI$ to denote the indicator function. We also use $\mathbf{1}$ to be the identically one function/vector; the domain/dimension is so that the expression that involves $\mathbf{1}$ is well-defined.

We consider finite {Markov decision processes (MDPs)} given by $M=( \cS, \cA, Q)$, where $\cS$ is a finite state space, $\cA$ is a finite action space, 
$Q$ is a stochastic kernel from the set $\cS \times \cA$ of state-action pairs to $\RR \times \cS$. 
In particular, for any $(s,a)\in \cS \times \cA$, $Q(\cdot|s,a)$ gives a distribution over pairs composed of a real number and a state. If $(R,S')\sim Q(\cdot|s,a)$, $R$ is interpreted as a random reward incurred and $S'$ the random next state when action $a$ is used in state $s$. Since the identity of the states and actions plays no role, without loss of generality, in what follows we assume that $\cS = [\rS]$ and $\cA = [\rA]$ for some $\mathrm{S},\mathrm{A}$ positive integers. 

Every MDP also induces a transition ``function'', $P: \cS \times \cA \to \Delta(\cS)$, which is the marginal of $Q$ with respect to $S'$, and an immediate mean reward function $r: \cS \times \cA \to \RR$, which is so that for any $(s,a)\in \cS \times \cA$, $r(s,a)$ gives the mean of $R$ above.
Since $\cS$ and $\cA$ are finite, without loss of generality, we assume that the immediate mean rewards lie in the $[-1,1]$ interval. We also assume that the reward distribution is $\rho$-subgaussian with a constant $\rho>0$. 
For simplicity, we assume that $\rho=1$.
For any (memoryless) policy $\pi:\cS\rightarrow \Delta(\cA)$, we define $P^\pi$ to be the transition matrix on state-action pairs induced by $\pi$, where 
$P^\pi_{(s,a), (s', a')} := \pi(a' | s') P(s' | s,a)$. 
We denote $\cM(\cS,\cA)$ the set of MDPs that satisfy the properties stated in this paragraph.
Since the identity of states and actions is unimportant, we also use $\cM(\rS,\rA)$ to denote the set of MDPs with $\rS$ states and $\rA$ actions (say, over the canonical sets $\cS = [\rS]$ and $\cA = [\rA]$). \todoc{I am still not (entirely) happy about the consistency of using $\cS,\cA$ and $\rS,\rA$.
Somehow, the identity of $\cS$ and $\cA$ are completely irrelevant, so they should somehow be deemphasized. I just don't know what's the best way of doing this.
}

We use $\EE^\pi$ to denote the expectation operator under the distribution $\mathbb{P}^\pi$ induced by the interconnection of policy $\pi$ and the MDP $M$ on trajectories $(S_0,A_0,R_0,S_1,A_1,R_1,\dots)$ formed of an infinite sequence of state, action, reward triplets.
Here, it is assumed that the initial state is randomly chosen from a fixed distribution; the dependence of $\sP^\pi$ on this distribution is suppressed. 
Further, under $\mathbb{P}^\pi$, the distribution of $S_{t+1}$ follows $P(\cdot|S_t,A_t)$ given the history $H_t = (S_0,A_0,R_0,\dots,R_{t-1},S_t,A_t)$ while the distribution of $A_t$ follows $\pi(\cdot|S_t)$ given $H_t$.
To minimize clutter the notation also suppresses the dependence on the MDP whenever it is clear which MDP is referred. \revise{A \emph{deterministic policy} $\pi$ is a policy such that for any state $s\in \cS$, $\pi(\cdot|s)$ puts all the probability mass on a single action.}

\revise{
Beyond memoryless policies, we will also allow for \emph{mixture} policies. A mixture policy $\pi$ is a collection of $k$ policies $\pi_1,\dots,\pi_k: \cS \to \Delta(\cA)$ and a distribution $p=(p_1,\dots,p_k)\in \Delta([k])$ over $[k]$ with $k>0$ a positive integer. The interconnection of $\pi = (p,\pi_1,\dots,\pi_k)$ and an MDP also induces a distribution over trajectories as above, with some differences. In particular, the idea is that at the beginning of the interaction $K$ is chosen, independently of $S_0$, so that $\mathbb{P}^\pi(K=i)=p_i$. Furthermore, 
the distribution of $A_t$ given $H_t$ and $K$ is $\pi_K(\cdot|S_t)$.
As a result, $\sP^{\pi} =\sum_i p_i \sP^{\pi_i}$.
}

For the \emph{discounted total reward criterion} with discount factor $0\leq \gamma <1$
the state value function $v^\pi:\cS\rightarrow \RR$ under $\pi$ is defined as,
\begin{align*}
v^\pi(s) := \EE^{\pi} \left[ \sum_{t=0}^\infty \gamma^t r(S_t, A_t)\, \Big\vert\, S_0 = s\right]\,.
\end{align*}
For $\mu\in\Delta(\cS)$ and $v: \cS \to \RR$,
we define $v(\mu) = \sum_{s\in \cS} \mu(s) v(s) (= \EE_{s\sim \mu}[v(s)])$.
The state-action value function of $\pi$, $q^\pi:\cS\times\cA\rightarrow \RR$, is defined as,
\begin{align*}
q^\pi(s,a) := r(s,a) + \gamma \sum\nolimits_{s'} P(s' | s,a) v^\pi(s')\, .
\end{align*}
The standard goal in a finite MDP under the discounted criterion is to identify the optimal policy $\pi^*$ that maximizes the value function in every state $s\in\cS$ such that
$
v^*(s) = \sup_{\pi} v^\pi(s)\, .
$
In this paper though, we consider the less demanding problem of finding a policy $\pi$ that maximizes $v^\pi(\mu)$ for a fixed initial state distribution $\mu$, i.e., finding a policy $\pi$ which achieves, or nearly achieves $v^*(\mu)$.

For an initial state distribution $\mu\in\Delta(\cS)$ and a policy $\pi$, 
we define the (unnormalized) discounted occupancy measure $\nu_\mu^\pi$ induced by $\mu$, $\pi$, and the MDP as 
\begin{align*}
\nu_\mu^\pi(s,a) := \sum_{t=0}^\infty \gamma^t \sP^\pi(S_t=s, A_t=a|S_0\sim \mu)\, .
\end{align*} 
The value of a policy can be represented as an inner product between the immediate reward function $r$ and the occupancy measure $\nu^\pi_\mu$
\begin{align*}
v^\pi(\mu) = \sum\nolimits_{s,a} r(s,a) \nu_\mu^\pi(s,a) =  \langle \nu_\mu^\pi, r \rangle\, .
\end{align*}
For $\varepsilon>0$, we define the effective horizon 
$
H_{\gamma, \varepsilon} :=   \lfloor \frac{\ln (1 / \epsilon)}{\ln (1 / \gamma)} \rfloor
$.%
\footnote{This is different with the normally used effective horizon, $\tfrac{\ln(\sfrac{1}{\epsilon(1-\gamma)})}{\ln(\sfrac{1}{\gamma})}$, with only a $\ln(1/(1-\gamma))$ factor.} 
In the case of \emph{fixed-horizon} policy optimization, instead of a discount factor, one is given a horizon $H>0$ and the value of a policy $\pi$ given $\mu$ is redefined to be $v^\pi(\mu) = \mathbb{E}^\pi[ \sum_{t=0}^{H-1} r(S_t,A_t) \big\vert \, S_0 \sim \mu ]$. As before, the goal is to identify a policy whose value is close to $v^*(\mu) = \sup_{\pi} v^{\pi}(\mu)$.
Finally, in the \emph{average reward} setting, $v^\pi(\mu)$ is redefined to be 
\begin{align*}
v^\pi(\mu)= \liminf_{T\to\infty} 
\EE^{\pi} \left[ \frac{1}{T}\sum_{t=0}^{T-1} r(S_t, A_t) \, \Big\vert\, S_0 \sim \mu \right]\,.
\end{align*}
For a given $\epsilon>0$, in all the various settings, we say that $\pi$ is \emph{$\epsilon$-optimal} in MDP $M$ given $\mu$ if $v^\pi(\mu)\ge v^*(\mu)-\epsilon$.

\section{Batch Policy Optimization}\label{sec:BPO} 

We consider \emph{policy optimization} in a \emph{batch mode}, or, in short, 
\emph{batch policy optimization} (BPO). 
A \emph{BPO problem} for a fixed sample size $n$ is given by the tuple
$\cB =(\cS,\cA,\mu,n,\cP)$ where $\cS$ and $\cA$ are finite sets, 
 $\mu$ is a probability distribution over $\cS$, $n$ is a positive integer, and
$\cP$ is a set of  MDP-distribution pairs of the form $(M,G)$, where 
$M\in \cM(\cS,\cA)$ is an MDP over $(\cS,\cA)$
and $G$ is a probability distribution over
$(\cS \times \cA \times \RR \times \cS)^n$. 
In what follows a pair $(M,G)$ of the above form will be called a BPO instance. 

A \emph{BPO algorithm} for a given sample size $n$ and sets $\cS,\cA$
takes data $\cD\in (\cS \times \cA \times \RR \times \cS)^n$ 
and returns a policy $\pi$ (possibly history-dependent).
Ignoring computational aspects, 
we will identify BPO algorithms with (possibly randomized) maps $\cL: (\cS \times \cA \times \RR \times \cS)^n \to \Pi$, 
where $\Pi$ is the set of all policies. 
The aim is to find BPO algorithms that find near-optimal policies with high probability on every instance within a BPO problem:
\begin{definition}[$(\varepsilon, \delta)$-sound algorithm]
\label{def:edsound}
Fix $\epsilon>0$ and $\delta\in (0,1)$.
A BPO algorithm $\cL$ is $(\varepsilon, \delta)$-sound
on instance $(M,G)$ given initial state distribution $\mu$
if 
\begin{align*}
\sP_{\cD \sim G}\left( v^{\cL(\cD)}(\mu) > v^*(\mu) - \varepsilon \right) > 1 - \delta\,,
\end{align*}
where the value functions are for the MDP $M$.
Further, we say that a BPO algorithm is $(\epsilon,\delta)$-sound on a BPO problem $\cB=(\cS,\cA,\mu,n,\cP)$ if it is sound on any $(M,G)\in \cP$ given the initial state distribution $\mu$.
\end{definition}



\paragraph{Data collection mechanisms}
A \emph{data collection mechanism} is a way of arriving at a distribution $G$ over the data given an MDP and some other inputs, such as the sample size.
We consider two types of data collection mechanisms.
One of them is governed by a distribution over the state-action pairs, the other is governed by a policy and a way of deciding how a fixed sample size $n$ should be split up into episodes in which the policy is followed.
We call the first $SA$-sampling, the second policy-induced data collection.
\begin{itemize}
\item \emph{$SA$-sampling:} 
An $SA$-sampling scheme is specified by
a probability distribution $\mu_{\text{log}}\in\Delta(\cS\times\cA)$ over the state-action pairs.
For a given sample size $n$, $\mu_{\log}$ together with an MDP $M$ induces a distribution 
$G_n(M,\mu_{\log})$ over $n$ tuples
$\cD = (S_i,A_i,R_i,S_i')_{i=0}^{n-1}$ so that the elements of this sequence form an i.i.d. sequence such that for any $i\in [n]$, $(S_i,A_i) \sim \mu_{\text{log}}$, $(R_i,S_{i}') \sim Q(\cdot|S_i,A_i)$.  

\item \emph{Policy-induced data collection}  
A policy induced data collection scheme is specified by  $(\logpi,\mathbf{h})$, where \revise{$\logpi$ is the
 \emph{logging policy}, which can be a mixture policy},
and $\mathbf{h} = (\mathbf{h}_n)_{n\ge 1}$: For each $n\ge 1$, $\mathbf{h}_n$ is an $m$-tuple $(h_j)_{j\in [m]}$ of positive integers for some $m$, specifying the length of the $m$ episodes in the data whose total length is $n$. Then, for any $n$, the pair $(\logpi,\mathbf{h}_n)$ together with an MDP $M$ and an initial distribution $\mu$ induces a distribution
$G(M,\logpi,\mathbf{h}_n,\mu)$ over the $n$ tuples $\cD=(S_i,A_i,R_i,S_i')_{i=0}^{n-1}$ as follows:
The data consists of $m$ episodes, with episode $j\in [m]$ having length $h_j$ and taking the form
$\tau_j=( S_0^{(j)},  A_0^{(j)},  R_0^{(j)}, \dots,  S_{h_j-1}^{(j)},  A_{h_j-1}^{(j)},  R_{h_j-1}^{(j)},  S_{h_j}^{(j)})$, where $ S_0^{(j)}\sim\mu$, $ A_t^{(j)}\sim \logpi(\cdot| S_{t}^{(j)})$, $(R_t^{(j)}, S_{t+1}^{(j)}) \sim Q(\cdot |  S_t^{(j)},  A_t^{(j)})$. 
Then, for $i\in [n]$, $(S_i,A_i,R_i,S_i') = ( S_{t}^{(j)}, A_{t}^{(j)},R_{t}^{(j)},S_{t+1}^{(j)})$ where 
$j\in [m]$,  $t\in [h_j]$ are unique integers such that $i=\sum_{j'<j} h_j + t$. \todoc{we are nice here because the initial distribution matches the initial distribution that the optimization happens for. perhaps we can add somewhere a comment about what happens when this is not the case. I guess we just need to pay the probability ratio..}
We call $\mathbf{h}$ \emph{data splitting scheme}. 
\end{itemize}

Now, under $SA$-sampling, 
the sets $\cS$, $\cA$, a logging distribution $\mu_{\log}$ and state-distribution $\mu$ over the respective sets give rise to the BPO problem $\cB(\mu_{\log},\mu,n)=(\cS,\cA,\mu,n,\cP(\mu_{\log},n))$, 
where $\cP(\mu_{\log},n)$ is the set of all pairs of the form 
$(M,G_n(M,\mu_{\log}))$, where $M\in \cM(\cS,\cA)$ is an MDP with the specified state-action spaces
and $G_n(M,\mu_{\log})$ is defined as above. 
Similarly, 
a fixed policy $\pi_{\log}$, fixed episode lengths $\mathbf{h}\in \mathbb{N}_+^m$ for some $m$ integer and a fixed state-distribution $\mu$ give rise to a BPO problem $\cB(\pi_{\log},\mu,\mathbf{h})=(\cS,\cA,\mu,|\mathbf{h}|, \cP(\pi_{\log},\mathbf{h}))$, where $\cP(\pi_{\log},\mathbf{h})$ 
is the set of pairs of the form 
$(M,G(M,\pi_{\log},\mathbf{h},\mu))$ where $M\in \cM(\cS,\cA)$ and $G(M,\pi_{\log},\mathbf{h},\mu)$ is a distribution as defined above. Here, we use $|\mathbf{h}|$ to denote $\sum_{s=0}^{m-1} h_s$ which is the sample size specified by $\mathbf{h}$.

The \emph{sample-complexity of BPO with $SA$-sampling} for a given pair $(\varepsilon,\delta)$ 
and a criterion (discounted, finite horizon, or average reward)
is the smallest integer $n$ such that for each $\mu$ there exists 
a logging distribution $\mu_{\log}$ and 
a BPO algorithm $\cL$ for this sample size such that $\cL$ is $(\varepsilon,\delta)$-sound on
the BPO problem $\cB(\mu_{\log},\mu,n)$. 
Similarly, \emph{the sample-complexity of BPO with policy-induced data collection} 
for a given pair $(\varepsilon,\delta)$
and a criterion 
 is the smallest integer $n$ such that for each $\mu$ there exists 
a logging policy $\pi_{\log}$ and episode lengths $\mathbf{h}\in \N_+^m$ with $|\mathbf{h}|=n$
and a BPO algorithm that is $(\varepsilon,\delta)$-sound on $\cB(\logpi,\mu,\mathbf{h})$.

The $SA$-sampling based data collection is realistic when there is a simulator that allows this type of data collection  \citep{agarwal2020model,azar2013minimax,cui2020plug,li2020breaking}. Besides this scenario,
it is hard to imagine a case when $SA$-sampling can be realistically applied.
Indeed, in most practical settings, data collection happens by following some policy, usually from the same initial state distribution that is used in the objective of policy optimization. 

For policy-induced data collection, 
a key restriction on the logging policy is that it is chosen without any knowledge of the MDP. Moreover, that the logging policy is memoryless rules out any adaptation to the MDP.
The intention here is to model a ``tabula rasa'' setting, which is relevant when one must find a good policy in a completely new environment but only passive data collection is available.
However, our lower bound shows 
that there is not much to be gained even if the logging policy is known to be a good policy: If the goal is to improve the suboptimality level of the logging policy, by saying, a factor of two, the exponential sample complexity lower bound still applies.

From a statistical perspective, the main difference between these two data collection mechanisms is that for policy-induced data-collection the distribution of $(S_i,A_i)$ will depend on the specific MDP instance,
while this is not the case for $SA$-sampling. As we shall see, this makes BPO under $SA$-sampling provably exponentially more efficient.

\section{Lower Bounds}
\label{sec:lb}

We first give a lower bound on the sample complexity for BPO when the data available for learning is obtained by following some logging policy: 

\begin{theorem}[Exponential sample complexity with policy-induced data collection in discounted problems]
\label{thm:pi-induced-lb}
For any positive integers $\rS$ and $\rA$, discount factor $\gamma\in [0,1)$ and a pair $(\epsilon,\delta)$ such that $0<\epsilon<1/2$ and $\delta\in (0,1)$, 
any $(\epsilon,\delta)$-sound algorithm needs at least $\Omega( \rA^{\min(\rS-1,H_{\gamma, 2\varepsilon}+1)} \ln (1/\delta) )$ \emph{episodes} of any length  
with policy-induced data collection for MDPs with $\rS$ states and $\rA$ actions 
under the $\gamma$-discounted total expected reward criterion. 
The result remains true if the MDPs are restricted to have deterministic transitions.
\end{theorem}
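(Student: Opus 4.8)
The plan is to exhibit a small family of hard MDPs with deterministic transitions on which any fixed, instance-oblivious logging policy is essentially blind, and to convert this into the sample-complexity lower bound via a two-point (change-of-measure) argument. Write $H := H_{\gamma,2\varepsilon}$ and $d := \min(\rS-1,\,H+1)$; note $d\ge 1$ since $\varepsilon<1/2$. For each secret action sequence $\sigma=(\sigma_0,\dots,\sigma_{d-1})\in\cA^{d}$, define an MDP $M_\sigma\in\cM(\rS,\rA)$ built from a chain $s_0\to s_1\to\cdots\to s_{d-1}$ together with an absorbing ``dead'' state $\bot$ (and, if $\rS>d+1$, additional absorbing states, all with reward $0$): at $s_i$ with $i<d-1$, action $\sigma_i$ moves to $s_{i+1}$ and every other action moves to $\bot$; at $s_{d-1}$, action $\sigma_{d-1}$ yields a reward of mean $1$ (with, say, Gaussian noise, hence $1$-subgaussian) and then moves to $\bot$, while every other action yields mean reward $0$ and moves to $\bot$; all remaining mean rewards are $0$. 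Transitions are deterministic and this uses exactly $d+1\le\rS$ states. Take $\mu=\delta_{s_0}$ (full support can be arranged by a negligible perturbation, which affects everything below by an arbitrarily small amount), so $v^{*}_{M_{\sigma}}(\mu)=\gamma^{d-1}$ and the only way to earn reward is to reach $s_{d-1}$ at time $d-1$ and play $\sigma_{d-1}$. The cap $d\le H+1$ ensures $\gamma^{d-1}\ge 2\varepsilon$, while $d\le\rS-1$ is exactly what the state budget permits; the exponent is the minimum of the two because these are the only two binding constraints.

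Fix now any logging policy $\pi_{\log}$ and any data-splitting scheme with $m$ episodes (of arbitrary lengths). Choose the prefix $(\sigma_0,\dots,\sigma_{d-2})$ so as to minimise $p:=\prod_{j=0}^{d-2}\pi_{\log}(\sigma_j\mid s_j)$, the probability that a single episode ever reaches $s_{d-1}$; since each factor may be taken no larger than its average $1/\rA$, this gives $p\le\rA^{-(d-1)}$. Choose $\sigma_{d-1}$ and an alternative $\sigma'_{d-1}\ne\sigma_{d-1}$ to be the two actions with smallest $\pi_{\log}(\cdot\mid s_{d-1})$, so that $\pi_{\log}(\sigma_{d-1}\mid s_{d-1})+\pi_{\log}(\sigma'_{d-1}\mid s_{d-1})=O(1/\rA)$, and put $\sigma':=(\sigma_0,\dots,\sigma_{d-2},\sigma'_{d-1})$. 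A one-line computation gives the value separation: any policy $\pi$ has $v^{\pi}_{M_{\sigma}}(\mu)=\gamma^{d-1}\prod_{i=0}^{d-1}\pi(\sigma_i\mid s_i)$, so $\varepsilon$-optimality on $M_\sigma$ forces $\prod_{i}\pi(\sigma_i\mid s_i)\ge 1-\varepsilon\gamma^{-(d-1)}\ge 1/2$, hence $\pi(\sigma_{d-1}\mid s_{d-1})\ge 1/2$; but then on $M_{\sigma'}$ the same policy achieves at most $\gamma^{d-1}\big(1-\pi(\sigma_{d-1}\mid s_{d-1})\big)\le\gamma^{d-1}/2\le v^{*}_{M_{\sigma'}}(\mu)-\varepsilon$. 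Thus no single policy is $\varepsilon$-optimal on both $M_\sigma$ and $M_{\sigma'}$, so an $(\varepsilon,\delta)$-sound algorithm must discriminate the two instances from data.

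The two MDPs are identical except for the reward law at $(s_{d-1},\sigma_{d-1})$ and $(s_{d-1},\sigma'_{d-1})$; moreover, once an episode leaves the chain it is trapped in $\bot$ and reveals nothing further, and it visits $s_{d-1}$ at most once --- so episode length is irrelevant and only the number $m$ of episodes matters. By the chain rule for KL, one episode contributes at most $p\cdot\tfrac12\big(\pi_{\log}(\sigma_{d-1}\mid s_{d-1})+\pi_{\log}(\sigma'_{d-1}\mid s_{d-1})\big)=O(\rA^{-d})$ to $\KL$ (using $\KL(\cN(1,1)\,\|\,\cN(0,1))=\tfrac12$), and independence of the $m$ episodes gives $\KL\big(\sP^{M_{\sigma}}_{\cD}\,\big\|\,\sP^{M_{\sigma'}}_{\cD}\big)=O(m\,\rA^{-d})$. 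Combining with the value separation and $(\varepsilon,\delta)$-soundness, the Bretagnolle--Huber inequality yields $O(m\,\rA^{-d})\ge\ln\!\big(1/(4\delta)\big)$ for small $\delta$ (and the cruder total-variation bound $O(m\,\rA^{-d})\ge 1-2\delta$ handles moderate $\delta$), hence $m=\Omega\big(\rA^{d}\ln(1/\delta)\big)=\Omega\big(\rA^{\min(\rS-1,\,H_{\gamma,2\varepsilon}+1)}\ln(1/\delta)\big)$. As $M_\sigma$ and $M_{\sigma'}$ lie in $\cM(\rS,\rA)$ and have deterministic transitions, this also establishes the stated refinement.

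\textbf{Main obstacle.} The heart of the matter is the construction, not the probabilistic bookkeeping: one must build a chain that is \emph{just long enough} that discounting has not yet annihilated the terminal reward --- which is what caps the usable length at $H_{\gamma,2\varepsilon}+1$ --- while making every wrong move irrecoverable via $\bot$, so that a memoryless, instance-oblivious logging policy hits the secret terminal transition with probability only $\rA^{-d}$ per episode regardless of episode length. Pinning the exponent to \emph{exactly} $\min(\rS-1,\,H_{\gamma,2\varepsilon}+1)$, rather than off by one, hinges on placing the lone reward on the transition \emph{out of} the last chain state (so $d$ secret actions occupy only $d$ non-dead states) and on making that reward stochastic, which is what keeps the per-episode KL finite and $O(\rA^{-d})$ and hence delivers the $\ln(1/\delta)$ factor. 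The averaging bound on $p$, the value-gap computation, and the Bretagnolle--Huber step are routine.
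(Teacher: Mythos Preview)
Your proposal is correct and follows essentially the same approach as the paper: a combination-lock chain adversarially aligned against the least-likely actions of $\pi_{\log}$, a two-point hypothesis test, a chain-rule KL computation showing per-episode information is $O(\rA^{-d})$, and Bretagnolle--Huber to finish. The only notable difference is the choice of the two competing instances: the paper keeps the terminal state-action pair fixed and flips the Gaussian reward mean between $+1$ and $-1$, whereas you keep the reward mean at $1$ and swap \emph{which} of two terminal actions carries it; both variants yield the same $\Omega(\rA^{\min(\rS-1,H_{\gamma,2\varepsilon}+1)}\ln(1/\delta))$ bound. Two small points of care in your version: (i) for the KL to be finite you need the mean-$0$ rewards at $(s_{d-1},\sigma_{d-1})$ and $(s_{d-1},\sigma'_{d-1})$ to be Gaussian as well (you use this implicitly when you write $\KL(\cN(1,1)\|\cN(0,1))=1/2$, but it should be stated); and (ii) your product formula $v^\pi=\gamma^{d-1}\prod_i\pi(\sigma_i\mid s_i)$ is written for memoryless $\pi$, but the separation argument goes through for arbitrary history-dependent outputs via $q_\sigma+q_{\sigma'}\le 1$ for the path probabilities --- worth saying explicitly since the paper allows $\cL$ to output any policy.
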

\begin{remark}
\label{rem:random-reward}
Random rewards are not essential in proving \cref{thm:pi-induced-lb} as long as stochastic transitions are allowed: First, the proof can be modified to use Bernoulli rewards and stochastic transitions can be used to emulate Bernoulli rewards.
Note also that for $\rho$-subgaussian random reward, the sample complexity in \cref{thm:pi-induced-lb} becomes $\Omega(\max\{1, \rho^2\} \rA^{\min(\rS-1,H+1)} \ln (1/\delta))$.
The maximum appears exactly because stochastic transitions can emulate Bernoulli rewards.
\end{remark}

Simplifying things a bit, the theorem states that the sample complexity is exponential as the number of states and the planning horizon grow together and without a limit.
Note that this is in striking contrast to sample complexity of learning actively, or even with $SA$-sampling, as we shall soon discuss it.  
The hard MDP instance used to construct the lower bound is adopted from the combination lock problem \citep{whitehead1991complexity}.  
The detailed proof is provided in the supplementary material, as are the proofs of the other statements.
By realizing that tabular MDPs can be considered as using one-hot features, the exponential lower bound still hold for linear function approximation. 

\begin{corollary}[Exponential sample complexity with linear function approximation in discounted problem]
\label{cor:linear-exponential-lb} 
Let $d,\rA$ be positive integers. Then
the same result as Theorem~\ref{thm:pi-induced-lb} with $S$ replaced by $d$ holds
when the data collection happens for some MDP $M\in \cM(\cS,[\rA])$ 
and in addition to the dataset the learner is also given access to a featuremap 
$\phi:\cS\rightarrow \RR^d$ such that for every policy $\pi$ of this MDP,
there exists $\theta\in\RR^d$ such that $v^\pi(s)=\phi(s)^\top \theta$, $\forall s\in\cS$, where $v^\pi$ is the value function of $\pi$ in $M$.
%
%
The result also remains true if the MDPs are restricted to have deterministic transitions. 
\end{corollary}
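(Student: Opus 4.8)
\emph{Proof plan.} The plan is to obtain \cref{cor:linear-exponential-lb} as an immediate reduction to \cref{thm:pi-induced-lb}, exploiting the fact that a tabular MDP is exactly the special case of linear value-function approximation in which the features are the one-hot encodings of the states. For a lower bound it suffices to produce a \emph{single} family of BPO instances that satisfies the linear-realizability hypothesis with $d$ features and on which every $(\epsilon,\delta)$-sound algorithm is slow. I would take precisely the hard MDPs $M$ constructed in the proof of \cref{thm:pi-induced-lb}, but with the number of states fixed to $\rS=d$ (so $\cS=[d]$), together with the associated initial distribution $\mu$, logging policy $\logpi$, and data-splitting scheme. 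To each such $M$ I attach the \emph{same} feature map $\phi:\cS\to\RR^d$, namely $\phi(i)=\mathbf{e}_i$, the $i$-th canonical basis vector of $\RR^d$. Then for every policy $\pi$ one has $v^\pi(i)=\phi(i)^\top\theta^\pi$ with $\theta^\pi:=(v^\pi(0),\dots,v^\pi(d-1))^\top$, so the realizability requirement of \cref{cor:linear-exponential-lb} holds on the whole family.

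The next step is to argue that access to $\phi$ buys the learner nothing. Since $\phi$ is identical for every instance in this family --- it is determined entirely by $\cS=[d]$, which the learner already knows --- any $(\epsilon,\delta)$-sound BPO algorithm for the function-approximation problem yields, by appending this fixed $\phi$ to its input (equivalently, by ignoring the redundant features), an $(\epsilon,\delta)$-sound BPO algorithm for the tabular problem of \cref{thm:pi-induced-lb} with $\rS=d$: the MDPs, their value functions, the optimal value $v^*(\mu)$, and hence the soundness event $\{v^{\cL(\cD)}(\mu)>v^*(\mu)-\epsilon\}$ are literally unchanged. Therefore the number of episodes it requires is at least $\Omega(\rA^{\min(d-1,\,H_{\gamma,2\varepsilon}+1)}\ln(1/\delta))$, which is the claimed bound. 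Finally, because \cref{thm:pi-induced-lb} already permits restricting the hard instances to deterministic transitions and the one-hot embedding leaves the transition structure untouched, the deterministic-transition version of the corollary follows at once.

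I do not expect a genuine obstacle here; the whole argument is a reduction. The one point that deserves a sentence of care is the ``no leakage'' claim: a feature map that were allowed to depend on the instance could in principle encode information about $M$, so it is essential that the lower-bound family use one common $\phi$. With that pinned down, \cref{cor:linear-exponential-lb} is a direct corollary of \cref{thm:pi-induced-lb}, as its name suggests.
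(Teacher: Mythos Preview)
Your proposal is correct and follows exactly the approach the paper takes: the paper does not give a separate formal proof of \cref{cor:linear-exponential-lb} but simply remarks that tabular MDPs are the special case of linear function approximation with one-hot features, which is precisely your reduction with $\cS=[d]$ and $\phi(i)=\mathbf{e}_i$. Your additional care about the feature map being instance-independent (so it leaks no information) is a worthwhile clarification that the paper leaves implicit.
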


\if0

\begin{figure*}[t]
\begin{center}
\includegraphics[width=12cm]{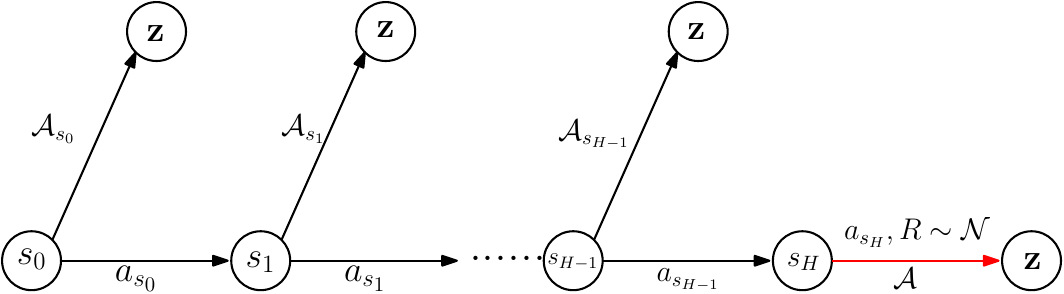}
\end{center}
\caption{
Illustration of the MDPs used in the proof of \cref{thm:pi-induced-lb}. 
For $\varepsilon>0$, Let $H = H_{\gamma, 2\varepsilon}$. 
The state space consists of two parts $\cS=\{s_0,s_1,\dots,s_H\} \cup \{z\}$, where $s_0$ is the initial state, $z$ is a self-absorbing state. 
For any $s\in\cS$, let $a_s = \argmin_{a} \logpi(a|s)$ be the action with minimal chance of being selected by $\logpi$, and $\cA_{s} = \cA \setminus \{a_s\}$. 
The transitions and rewards are as follows: State $z$ is absorbing under any action.
For $i\in\{0, \dots, H-1\}$, at state $s_i$ under action $a_{s_i}$
the MDP transits to $s_{i+1}$, while it transits to $z$ under any other actions. From $s_H$, the next state is also $z$ under any action.
The rewards are deterministically zero for any state-action pair except when the state is $s_H$, and action $a_{S_H}$ is taken, when it is random with either a positive or negative mean.
}  
\label{fig:lb}
\end{figure*}

The idea of the proof is to construct a chain-like MDP, as shown in \cref{fig:lb}. 
In this MDP, all the states except an absorbing state are arranged in a chain.
At each state in the chain, we pick the action that is the least likely for $\logpi$ to take. When that action is taken, the state moves to the next one in the chain, otherwise it falls into the absorbing state. 
The reward is deterministically zero everywhere except that a random Gaussian reward which is incurred at the end of the chain with either a positive or negative mean.
Any learning algorithm has to see this hidden reward sufficiently many times to determine whether going down the chain or moving to the absorbing state is better. 
For this to happen, the logging policy must take the ``right'' (least likely) action $H$ times in a row. 
The lower bound can be proved using standard techniques based on this construction \citep{lattimore2020bandit}. 
The detailed proof is provided in the supplementary material, as are the proofs of the other statements.
\fi

One may wonder about whether this exponential complexity can be avoided if more is assumed about the logging policy. In particular, one may hope that improving on an already good logging policy (i.e., one that is close to optimal) should be easier.
Our next result shows that this is not the case. 

\begin{corollary}[Warm starts do not help]
\label{cor:warmstartdoesnothelp}
Fix $\pi_{\log}$, $0<\varepsilon<1/2$, $\delta\in (0,1)$, $\rS$ and $\rA$ as before.
Let $\cM_{2\varepsilon}^{\logpi}$ denote a set of MDPs with deterministic transitions, state space $\cS = [\rS]$ and action space $\cA = [\rA]$ such that $\logpi$ is $2\varepsilon$-optimal for all $M\in\cM_{2\varepsilon}^{\logpi}$. Then for any length of sampled episodes, any $(\varepsilon, \delta)$-sound algorithm needs at least $\Omega( \rA^{\min(\rS-1,H+1)} \ln (1/\delta))$ episodes, where $H=H_{\gamma, 2\varepsilon}$. 
\end{corollary}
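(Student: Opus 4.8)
The plan is to reuse the combination-lock instances from the proof of \cref{thm:pi-induced-lb}, changing only the magnitude of the hidden terminal reward so that the fixed logging policy $\logpi$ is \emph{exactly} $2\varepsilon$-optimal on the hard instances rather than far from optimal. The reason a factor of two is the correct threshold is the mismatch between the \emph{strict} inequality in the definition of $(\varepsilon,\delta)$-soundness and the \emph{non-strict} one in the definition of $2\varepsilon$-optimality: if the value at stake at the end of the chain is calibrated to be exactly $2\varepsilon$, then $\logpi$ --- which essentially never reaches the end of the chain --- is $2\varepsilon$-optimal, yet no single policy can be $\varepsilon$-optimal on both sign-variants of the instance, so an $(\varepsilon,\delta)$-sound learner is still forced to tell them apart.

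I would set $k=\min(\rS-1,H+1)$ with $H=H_{\gamma,2\varepsilon}$ and $c := 2\varepsilon/\gamma^{k-1}$; since $k-1\le H$ and $\gamma^{H}\ge 2\varepsilon$ by the definition of $H_{\gamma,2\varepsilon}$, we get $c\in[2\varepsilon,1]$, so $\pm c$ are admissible mean rewards. Let $\mu$ be concentrated on $s_0$, and for each state $s$ put $a_s=\argmin_a\logpi(a|s)$, so that $\logpi(a_s|s)\le 1/\rA$. The MDP has states $\{s_0,\dots,s_{k-1},z\}$ with $z$ absorbing; at $s_i$ for $i<k-1$ the action $a_{s_i}$ moves to $s_{i+1}$ and all other actions move to $z$; from $s_{k-1}$ every action moves to $z$; all mean rewards are $0$ except at $(s_{k-1},a_{s_{k-1}})$, where the reward is $\cN(+c,1)$ in instance $M_+$ and $\cN(-c,1)$ in $M_-$ (both have deterministic transitions, so random rewards are genuinely needed here, cf.\ \cref{rem:random-reward}). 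A short computation gives $v^*_{M_+}(\mu)=\gamma^{k-1}c=2\varepsilon$ and $v^*_{M_-}(\mu)=0$, and for \emph{every} policy $\pi$ (even history-dependent, since the history at each chain state is forced) $v^\pi_{M_+}(\mu)=2\varepsilon\,q_\pi$ and $v^\pi_{M_-}(\mu)=-2\varepsilon\,q_\pi$, where $q_\pi\in[0,1]$ is the probability that $\pi$ started at $s_0$ plays $a_{s_0},\dots,a_{s_{k-1}}$ down the chain. Taking $\pi=\logpi$, $0\le q_{\logpi}\le\rA^{-k}$, whence $v^*_{M_\pm}(\mu)-v^{\logpi}_{M_\pm}(\mu)\le 2\varepsilon$; so $\logpi$ is $2\varepsilon$-optimal on both $M_+$ and $M_-$, and it suffices to take $\cM_{2\varepsilon}^{\logpi}\supseteq\{M_+,M_-\}$ and lower-bound the number of episodes needed to be $(\varepsilon,\delta)$-sound on this two-element family.

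The rest is the usual two-hypothesis argument. First, no policy is $\varepsilon$-optimal on both: $v^\pi_{M_+}(\mu)>v^*_{M_+}(\mu)-\varepsilon$ forces $q_\pi>\tfrac12$, while $v^\pi_{M_-}(\mu)>v^*_{M_-}(\mu)-\varepsilon$ forces $q_\pi<\tfrac12$; hence the events ``$\cL(\cD)$ is $\varepsilon$-optimal in $M_+$'' and ``$\cL(\cD)$ is $\varepsilon$-optimal in $M_-$'' are disjoint, and soundness forces each to have probability $>1-\delta$ under the respective instance. Second, $M_+$ and $M_-$ differ only in the reward law at $(s_{k-1},a_{s_{k-1}})$, and $\KL(\cN(c,1)\,\|\,\cN(-c,1))=2c^2$, so the divergence decomposition gives $\KL(\sP^{\logpi}_{M_+}\|\sP^{\logpi}_{M_-})=2c^2\,\EE^{\logpi}_{M_+}[N]\le 2c^2 m\,\rA^{-k}$ for any data-splitting scheme with $m$ episodes, where $N$ is the number of visits to $(s_{k-1},a_{s_{k-1}})$ in the data (at most one per episode, since that pair leads into $z$, and occurring with probability $\prod_{t<k}\logpi(a_{s_t}|s_t)\le\rA^{-k}$). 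Applying the Bretagnolle--Huber inequality to the two disjoint success events then yields $2c^2 m\,\rA^{-k}>\ln\tfrac{1}{4\delta}$, and since $c\le1$ this gives $m>\tfrac{\rA^{k}}{2}\ln\tfrac{1}{4\delta}=\Omega(\rA^{\min(\rS-1,H+1)}\ln(1/\delta))$.

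I expect the main difficulty to be not the mechanics (which mirror \cref{thm:pi-induced-lb}) but the tight calibration: demanding that $\logpi$ be genuinely $2\varepsilon$-optimal caps the value at stake at exactly $2\varepsilon$, and one has to verify this is still precisely enough to separate $\varepsilon$-optimal behaviour on the two sign-variants --- which works only because soundness is defined with a strict inequality and because all of the optimal value sits on the single chain tip. The one remaining bookkeeping point is the truncated regime $\rS-1<H+1$ (so $k=\rS-1$): there one re-checks $c=2\varepsilon/\gamma^{k-1}\le1$ (still true as $k-1\le H$), and observes that episodes shorter than $k$ cannot reach $(s_{k-1},a_{s_{k-1}})$ and so contribute $0$ to $N$, leaving the bound on $\EE^{\logpi}_{M_+}[N]$ intact.
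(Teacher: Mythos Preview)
Your proposal is correct and follows essentially the same route as the paper: the paper's entire proof is the one-liner ``The result directly follows from the lower bound construction in \cref{thm:pi-induced-lb},'' i.e., the same combination-lock instances and the same two-hypothesis / Bretagnolle--Huber argument you spell out.

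Your version is in fact more careful on one point the paper glosses over. In the original \cref{thm:pi-induced-lb} construction the terminal reward has mean $\pm 1$, so $v^*_{M_1}(s_0)=\gamma^{H}\in[2\varepsilon,\,2\varepsilon/\gamma)$, and hence the suboptimality of $\logpi$ on $M_1$ is $\gamma^{H}(1-q_{\logpi})$, which can strictly exceed $2\varepsilon$ when $\gamma^{H}>2\varepsilon$. Your rescaling of the terminal mean to $c=2\varepsilon/\gamma^{k-1}\in[2\varepsilon,1]$ forces $v^*_{M_+}(\mu)=2\varepsilon$ exactly, so $\logpi$ is genuinely $2\varepsilon$-optimal on both instances and the corollary's hypothesis is met without slack. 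The rest of your argument (disjointness of the $\varepsilon$-optimality events via $q_\pi\gtrless 1/2$, the KL bound $2c^2 m\rA^{-k}$, and the handling of the truncated regime $k=\rS-1$) is correct and matches the mechanics of the \cref{thm:pi-induced-lb} proof.
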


For fixed-horizon policy optimization, we have the following result  similar to \cref{thm:pi-induced-lb}. 
\begin{theorem}[Exponential sample complexity with policy-induced data collection in finite-horizon problems]
\label{thm:pi-induced-lb-fin}
For any positive integers $\rS$ and $\rA$, planning horizon $H>0$ and a pair $(\epsilon,\delta)$ such that $0<\epsilon<1/2$ and $\delta\in (0,1)$, 
any $(\epsilon,\delta)$-sound algorithm needs at least $\Omega(\rA^{\min(\rS-1,H)} \ln (1/\delta)/\varepsilon^2 )$ episodes 
with policy-induced data collection for MDPs with $\rS$ states and $\rA$ actions 
under the $H$-horizon total expected reward criterion. 
The result also remains true if the MDPs are restricted to have deterministic transitions.
\end{theorem}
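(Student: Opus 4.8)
The plan is to reuse the combination-lock construction behind the proof of \cref{thm:pi-induced-lb}, adapted to the finite-horizon criterion. Two things change relative to the discounted case. First, since the objective only sums $H$ rewards, the ``lock'' can be given depth at most $\min(\rS-1,H)$ — the state budget allows a chain of $\rS-1$ states plus one absorbing state, while the horizon allows a chain that can be walked to its end in at most $H$ steps — and this min is exactly the exponent that will appear. Second, $H$ is now a free parameter rather than something we tune through $\varepsilon$ (as in the discounted case, where the chain length was $\approx H_{\gamma,2\varepsilon}$), so the reward planted at the bottom of the lock must itself be of order $\varepsilon$; since telling apart two reward laws with means separated by $\Theta(\varepsilon)$ costs $\Theta(\varepsilon^{-2}\ln(1/\delta))$ samples, this is what upgrades the $\ln(1/\delta)$ of the discounted bound to $\ln(1/\delta)/\varepsilon^2$.

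Concretely, set $L:=\min(\rS-1,H)$ and fix an initial distribution $\mu$ essentially concentrated on a distinguished state $s_0$ (to honour the ``full support'' convention, put mass $1-\eta$ on $s_0$ for a tiny $\eta>0$; the hidden reward gap below is scaled by $1/(1-\eta)$, which keeps it inside $[-1,1]$ because $\varepsilon<1/2$). Given an \emph{arbitrary} logging policy $\logpi$ and data-splitting scheme $\mathbf h$, define $a_i:=\argmin_{a}\logpi(a\mid s_i)$ for $i\in[L]$, so that $\logpi(a_i\mid s_i)\le 1/\rA$, and build two MDPs $M_+,M_-\in\cM(\rS,\rA)$ with deterministic transitions on states $\{s_0,\dots,s_{L-1},z\}$ (any remaining states are absorbing and inert): from $s_i$ with $i<L-1$, action $a_i$ advances to $s_{i+1}$ while every other action falls into the absorbing state $z$; from $s_{L-1}$ every action (including $a_{L-1}$) leads to $z$. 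All rewards are deterministically $0$ except the one drawn at $(s_{L-1},a_{L-1})$, which is $\mathcal N(+\Delta,1)$ in $M_+$ and $\mathcal N(-\Delta,1)$ in $M_-$ with $\Delta=\Theta(\varepsilon)$ calibrated so that $\varepsilon$-optimality in $M_+$ and in $M_-$ become mutually exclusive: writing $p_\pi$ for the probability that $\pi$, started at $s_0$, walks the entire chain and plays $a_{L-1}$ at its end, one has $v^\pi_+(\mu)=\Theta(\varepsilon)\,p_\pi$, $v^\pi_-(\mu)=-\Theta(\varepsilon)\,p_\pi$, $v^*_+(\mu)=\Theta(\varepsilon)$ and $v^*_-(\mu)=0$, so with the right constant $\pi$ is $\varepsilon$-optimal in $M_+$ iff $p_\pi>1/2$ and $\varepsilon$-optimal in $M_-$ iff $p_\pi<1/2$. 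This uses $L\le H$ (the planted reward is reachable within the horizon) and $L\le\rS-1$ (the chain fits); if either fails the two value functions collapse and the instance is not hard, which is why only $\min(\rS-1,H)$ levels can be charged.

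The conclusion then follows from the standard change-of-measure argument, exactly as in the proof of \cref{thm:pi-induced-lb}. Writing $\PP_M$ for the law of the data $\cD$ when the underlying MDP is $M$ (with logging policy $\logpi$ and splitting $\mathbf h$), soundness of an $(\varepsilon,\delta)$-sound algorithm $\cL$ on both $M_+$ and $M_-$ forces $\PP_{M_+}(A)\ge 1-\delta$ and $\PP_{M_-}(A)\le\delta$ for the event $A=\{p_{\cL(\cD)}>1/2\}$. Since $M_+$ and $M_-$ agree on transitions and on all rewards except the law at $(s_{L-1},a_{L-1})$, the divergence decomposition (chain rule for relative entropy along the data-generating process) gives $\KL(\PP_{M_+}\,\|\,\PP_{M_-})=\EE_{M_+}[N]\cdot\KL\!\big(\mathcal N(\Delta,1)\,\|\,\mathcal N(-\Delta,1)\big)=\Theta(\varepsilon^2)\,\EE_{M_+}[N]$, where $N$ counts the transitions in $\cD$ out of $(s_{L-1},a_{L-1})$; and since that pair is visited at most once per episode (one is trapped in $z$ afterwards) and is reached within an episode with probability at most $\prod_{i\in[L]}\logpi(a_i\mid s_i)\le\rA^{-L}$, we get $\EE_{M_+}[N]\le m\,\rA^{-L}$ with $m$ the number of episodes. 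On the other hand Bretagnolle--Huber — equivalently, $\KL(\PP_{M_+}\|\PP_{M_-})\ge\mathrm{kl}\!\big(\PP_{M_+}(A),\PP_{M_-}(A)\big)$ followed by the elementary lower bound on binary relative entropy — yields $\KL(\PP_{M_+}\|\PP_{M_-})=\Omega(\ln(1/\delta))$. Combining, $\Theta(\varepsilon^2)\,m\,\rA^{-L}=\Omega(\ln(1/\delta))$, i.e. $m=\Omega\!\big(\rA^{L}\ln(1/\delta)/\varepsilon^2\big)=\Omega\!\big(\rA^{\min(\rS-1,H)}\ln(1/\delta)/\varepsilon^2\big)$; the transitions are deterministic (only the reward is random), so the restriction to deterministic-transition MDPs is automatic, and by \cref{rem:random-reward} one may trade the Gaussian rewards for Bernoulli-type rewards at the cost of constants.

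The information-theoretic bookkeeping is routine; the crux — and the one place where something beyond the usual bandit lower bound is needed — is designing the lock so that its depth is exactly $\min(\rS-1,H)$ and no \emph{memoryless} logging policy can clear a single level with probability more than $1/\rA$, so that the ``right'' length-$L$ action sequence is followed with probability at most $\rA^{-\min(\rS-1,H)}$ per episode. A secondary but necessary calibration is the choice of the $\Theta(\varepsilon)$ reward gap: it must be large enough that $\varepsilon$-optimality still pins down the correct branch (the $p_\pi\gtrless 1/2$ dichotomy), yet small enough that each informative sample carries only $\Theta(\varepsilon^2)$ nats of information, which is precisely what produces the $\varepsilon^{-2}$ factor in the bound.
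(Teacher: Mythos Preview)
Your proposal is correct and follows essentially the same approach as the paper's own proof: a combination-lock chain of depth $L=\min(\rS-1,H)$ with a hidden Gaussian reward of mean $\pm\Theta(\varepsilon)$ at its end, the same $p_\pi\gtrless 1/2$ dichotomy for $\varepsilon$-optimality in the two instances, and the same Bretagnolle--Huber/chain-rule bound on $\KL$ via the at-most-once-per-episode visit probability $\le\rA^{-L}$. The only cosmetic differences are that the paper fixes the reward mean at exactly $\pm 2\varepsilon$ and splits the argument into the two cases $\rS\ge H+1$ and $\rS\le H$ rather than working with $L=\min(\rS-1,H)$ from the outset.
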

\cref{rem:random-reward} and \cref{cor:linear-exponential-lb,cor:warmstartdoesnothelp} 
 also remain essentially true; we omit these to save space. 
As shown in the next result, the sample complexity could be even worse in average reward MDPs. 
The different sample complexities of the average reward problems and the two previous settings can be explained as follows. In discounted and finite-horizon problems, rewards beyond the planning horizon do not have to be observed in data to find a near optimal policy. In contrast, this is not the case for the average reward criterion: 
rewards at states that are ``hard'' to reach may have to be observed enough in data. 
Thus, the fact that the planning horizon is finite is crucial for a finite sample complexity.

\begin{theorem}[Infinite sample complexity with policy-induced data collection in average reward problems] 
For any positive integers $\rS$ and $\rA$, any pair $(\epsilon,\delta)$ such that $0<\epsilon<1/2$ and $\delta\in (0,1)$, the sample complexity of BPO with policy-induced data collection for MDPs with $\rS$ states and $\rA$ actions under the average reward criterion is infinite. 
\label{thm:pi-induced-lb-avg}
\end{theorem}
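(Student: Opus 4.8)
The plan is to run a two-point (Le Cam style) argument in which the ``hard'' part of the MDP is a state that is simultaneously indispensable for the average-reward objective and, under any fixed logging policy, essentially impossible to reach within a bounded sample. Fix an arbitrary sample size $n$ and take $\mu$ to be the point mass at a distinguished state $s_0$. Given any logging policy $\logpi$, any data-splitting scheme $\mathbf{h}$ with $|\mathbf{h}|=n$, and any BPO algorithm $\cL$, I will exhibit an MDP in $\cM(\rS,\rA)$ on which $\cL$ fails to be $(\epsilon,\delta)$-sound; since $n$ is arbitrary this shows the sample complexity is infinite. (Throughout I assume $\rS,\rA\ge 2$; the remaining cases are degenerate.)

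The construction uses two MDPs $M_+,M_-\in\cM(\rS,\rA)$ (the other $\rS-2$ states are dummy copies of $s_0$ and play no role). Both have a single absorbing state $z$: a fixed action $a_0$ moves $s_0\to z$ with a small probability $p=p(n)$ and self-loops at $s_0$ otherwise, every other action self-loops at $s_0$ deterministically, all rewards at $s_0$ are $0$, and the reward collected in $z$ has mean $+1$ in $M_+$ and $-1$ in $M_-$. Because $z$ is absorbing, for any (possibly history-dependent) policy $\pi$ the average reward from $s_0$ equals $\pm q(\pi)$, where $q(\pi):=\sP^\pi(\text{hit }z\mid S_0=s_0)$ is the same function of $\pi$ in both MDPs; hence $v^*_{M_+}(\mu)=1$ (attained by any $\pi$ that ever plays $a_0$ at $s_0$), $v^*_{M_-}(\mu)=0$ (attained only by a $\pi$ that never plays $a_0$), and since $\epsilon<1/2$ the two sets of $\epsilon$-optimal policies, $\{q>1-\epsilon\}$ and $\{q<\epsilon\}$, are disjoint. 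This is the crux: no matter how rarely $z$ is visited under $\logpi$, its reward fully determines the optimal average-reward behaviour — an effect that cannot occur in the discounted or finite-horizon settings, where rewards past the (effective) horizon are irrelevant, and it is exactly what pushes the complexity from exponential to infinite.

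For the statistical half, let $E$ be the event that no data tuple $(S_i,A_i,R_i,S_i')$ starts in $z$, i.e.\ $S_i\ne z$ for all $i$. On $E$ the data never samples $Q(\cdot\mid z,\cdot)$, and since $M_+$ and $M_-$ have identical transitions and rewards on $\cS\setminus\{z\}$ (in particular $P(z\mid s_0,a_0)=p$ in both), the law of $\cD$ restricted to $E$ — hence the joint law of $\cD$ together with the internal randomness of $\cL$ restricted to $E$ — is identical under $M_+$ and $M_-$. A union bound over the $n$ transitions gives $\sP_{M_\pm}(E^c)\le np$, so I choose $p$ with $np<1/2-\delta$, which is possible because $\delta<1/2$. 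If $\cL$ were $(\epsilon,\delta)$-sound then $\sP_{M_+}(q(\cL(\cD))>1-\epsilon)>1-\delta$ and $\sP_{M_-}(q(\cL(\cD))<\epsilon)>1-\delta$; intersecting each with $E$ (losing at most $np$), transporting the first event to $M_-$ using indistinguishability on $E$, and invoking disjointness of the two events forces $\sP_{M_-}(E)\ge 2(1-\delta-np)>1$, a contradiction.

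The main obstacle is the design step in the second paragraph: one needs a gadget that is at once (i) unavoidably informative — the correct answer genuinely depends on the hidden reward — and (ii) information-free in the data for any prescribed budget $n$, and in the average-reward setting this essentially forces the hidden state to be absorbing, so that a vanishing per-step visit probability still leaves its reward dictating the objective. Secondary technical points are checking that $q(\cdot)$ and the $\epsilon$-optimality dichotomy behave correctly for history-dependent output policies, and, if one wants the statement for all $\delta\in(0,1)$ rather than $\delta<1/2$, replacing the two-point family by a larger family of mutually $E$-indistinguishable MDPs requiring pairwise-incompatible near-optimal answers (which consumes additional states and actions).
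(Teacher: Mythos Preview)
Your argument is correct and in fact cleaner than the paper's. Both proofs rest on the same idea: in the average-reward setting an absorbing state's reward fully determines the long-run value regardless of how rarely that state is reached, so by driving the per-step probability $p$ of reaching it to zero one makes the data uninformative for any fixed budget $n$. The paper, however, keeps the full chain-and-Gaussian-reward gadget from the discounted lower bound (states $s_0,\dots,s_{H-1},y,z$ with $H=\rS-2$, reward $\cN(\pm 2\varepsilon,1)$ at $y$), computes the KL divergence between the two instances, and applies the Bretagnolle--Huber inequality to get a quantitative bound $n\le \tfrac{\rA^{H-1}\ln(1/(4\delta))}{8\varepsilon^2 p}$ before sending $p\to 0$. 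Your two-state construction bypasses all of this: the chain is irrelevant once one is willing to take $p\to 0$, and the coupling/indistinguishability argument on the event $E=\{S_i\ne z\ \forall i\}$ replaces the KL computation entirely. As a byproduct your argument covers $\delta<1/2$ while Bretagnolle--Huber as used in the paper only gives $\delta<1/4$; conversely, the paper's version displays the $\rA^{\rS-2}$ dependence explicitly before the limit, which is extra information your proof does not produce (but the theorem does not ask for it).

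Two minor points worth tightening in a final write-up: your phrase ``attained by any $\pi$ that ever plays $a_0$ at $s_0$'' is imprecise for history-dependent policies (one needs $a_0$ to be played infinitely often, or with probability bounded away from zero, at $s_0$ to guarantee $q(\pi)=1$; ``always play $a_0$'' suffices), and your closing remark about extending to all $\delta\in(0,1)$ via a larger family is apt---the paper's proof shares the same limitation and does not address it either.
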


For $SA$-sampling, the sample complexity becomes polynomial in the relevant quantities:
Staying with discounted problems,
this is implied by the results of \citet{agarwal2020model} who study plug-in methods when a generative model is used to generate the same number of observations for each state-action pair. In particular, they show that in this setting, if $N$ samples are available in each state-action pair then the plug-in algorithm will find a policy with $v^\pi \ge v^* - \varepsilon \mathbf{1}$ 
provided that $N=\Omega(\ln \frac{\rS \rA}{(1-\gamma)\delta}/(\varepsilon^2 (1-\gamma)^3))$. This implies a sample complexity upper bound of size $\tilde O( \rS \rA H^3 \ln(1/\delta) /\varepsilon^2)$ where  $H=1/(1-\gamma)$, though for the stronger requirement that $\pi$ is optimal not only from $\mu$, but from each state. 
The upper bound is essentially matched by a lower bound 
 by \citet{SiWa18} who prove their result in Section D of their paper
 using a reduction to a result of \citet{azar2013minimax} that stated a similar sample complexity lower bound for estimating the optimal value. Our result is stronger than these results, which require the algorithm to produce a ``globally good'' policy, i.e., a policy that is near-optimal no matter the initial state, while in our result, the policy needs to be good only at a \emph{fixed} initial state distribution.
\begin{theorem}
\label{thm:pi-free-lb}
Fix any $\gamma_0>0$.
Then, there exist some constants $c_0,c_1>0$ such that for any
$\gamma\in [\gamma_0,1)$,
any positive integers $\rS$ and $\rA$, 
$\delta\in(0,1)$, and $0<\varepsilon\le c_0/(1-\gamma)$,
 the sample size $n$ needed by
any $(\varepsilon,\delta)$-sound algorithm that produces as output a memoryless policy and works with $SA$-sampling for MDPs with $\rS$ states and $\rA$ actions under the $\gamma$-discounted expected reward criterion must be so that 
is at least $c_1 \frac{\rS \rA \ln (1 / (4\delta))}{\varepsilon^2 (1-\gamma)^3}$.
\end{theorem}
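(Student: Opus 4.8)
The plan is to port the classical generative-model hard instance for discounted MDPs \citep{azar2013minimax,SiWa18} to the present setting, choosing the initial distribution $\mu$ so that near-optimality \emph{at $\mu$} already forces the learner to solve many hard sub-problems. Take the state space to consist of $m := \rS-2$ ``bandit gadgets'' together with two absorbing states $g$ and $b$, and let $\mu$ be (essentially) uniform over the $m$ gadget states. In gadget $i$, under every action $a$ the reward is $0$; the chain self-loops at $i$ with probability $\gamma$, and with the remaining probability $1-\gamma$ it jumps to $g$ (reward $1$ forever, so $v(g)=1/(1-\gamma)$) with probability $\theta_{i,a}$ and to $b$ (reward $0$ forever) otherwise. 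The family is indexed by a planted ``needle'' $\sigma(i)\in[\rA]$ at each gadget, with $\theta_{i,\sigma(i)}=\tfrac12+\Delta$ and $\theta_{i,a}=\tfrac12$ for $a\neq\sigma(i)$. A one-line Bellman computation gives $v^\pi(i)=\gamma\,\theta_{i,\pi(i)}/(1-\gamma^2)$; the gadgets do not interact, so
\begin{align*}
v^*(\mu)-v^\pi(\mu)=\frac{\gamma\Delta}{m(1-\gamma^2)}\,\bigl|\{\,i:\pi(i)\neq\sigma(i)\,\}\bigr|.
\end{align*}
Choosing $\Delta := (12/\gamma_0)\,\varepsilon(1-\gamma)$ --- which lies in $(0,\tfrac12)$ precisely because $\varepsilon\le c_0/(1-\gamma)$ with $c_0:=\gamma_0/24$, and which is $\Theta(\varepsilon(1-\gamma))$ because $\gamma\ge\gamma_0$ --- makes an $\varepsilon$-optimal policy exactly one that matches $\sigma$ on at least $5m/6$ of the gadgets (randomized outputs are handled routinely, the value loss being proportional to the mass a policy places on wrong actions). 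The construction uses only deterministic memoryless comparators, so the restriction to memoryless outputs is vacuous, and the one-hot embedding of \cref{cor:linear-exponential-lb} again upgrades the bound to $d$ features.

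Second, I would convert soundness into an estimation lower bound. Under $SA$-sampling, for any fixed $\mu_{\log}$ the visit counts $N_{i,a}$ are $\mathrm{Binomial}(n,\mu_{\log}(i,a))$ \emph{regardless of which instance of the family is in force}, and $\sum_{i,a}\EE[N_{i,a}]=n$; this instance-independence of the data design is exactly what separates $SA$-sampling from policy-induced collection and keeps the bound polynomial. A single draw at a gadget pair $(i,a)$ reveals information about $\theta_{i,a}$ only on the probability-$(1-\gamma)$ event of leaving the gadget, so the per-sample Kullback--Leibler divergence between the $\theta=\tfrac12$ law and the $\theta=\tfrac12+\Delta$ law at $(i,a)$ is $\Theta\bigl((1-\gamma)\Delta^2\bigr)=\Theta\bigl(\varepsilon^2(1-\gamma)^3\bigr)$. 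Identifying the needle of one gadget is thus a best-arm-identification problem over $\rA$ arms whose effective per-pull information is of order $\varepsilon^2(1-\gamma)^3$.

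I would then run the standard change-of-measure / pigeonhole argument (Bretagnolle--Huber, as used for generative-model MDP lower bounds in \citet{azar2013minimax,SiWa18}): if $n$ were below a small constant times $\rS\rA\ln(1/(4\delta))/(\varepsilon^2(1-\gamma)^3)$, then at least half of the $m\rA$ gadget pairs would be sampled fewer than $\Omega\bigl(\ln(1/\delta)/(\varepsilon^2(1-\gamma)^3)\bigr)$ times; hence a constant fraction of gadgets would have a constant fraction of their actions ``under-sampled'' in this sense, so (with the needles planted uniformly at random) in each such gadget the needle lands on an under-sampled action with constant probability, and there --- by Bretagnolle--Huber against the needle-free reference instance --- the learner cannot output $\sigma(i)$ with confidence $1-\delta$. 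Consequently, with probability exceeding $\delta$ the learner is wrong on more than $m/6$ gadgets and returns an $\varepsilon$-suboptimal policy, contradicting $(\varepsilon,\delta)$-soundness; this gives $n=\Omega\bigl(\rS\rA\ln(1/(4\delta))/(\varepsilon^2(1-\gamma)^3)\bigr)$, with the regime $\ln(1/\delta)=O(1)$ (or $\rS=O(1)$) absorbed into the constants.

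I expect the last step to be the main obstacle. In prior, ``globally $\varepsilon$-optimal'' lower bounds, correctness is required at \emph{every} state, so each gadget must be solved with confidence $1-\delta$ and the $\ln(1/\delta)$ factor falls out of a per-gadget two-point test. Here soundness only guarantees that a constant \emph{fraction} of gadgets is solved correctly with probability $1-\delta$ --- an ``$\ell_1$-flavoured'' rather than ``$\ell_\infty$-flavoured'' requirement --- so the per-gadget confidence directly extractable from soundness is only a constant. Getting the $\ln(1/\delta)$ factor nonetheless requires coupling the ``fraction-correct'' event with a large-deviation estimate across the (conditionally near-independent) gadget outcomes to force a genuinely $\ln(1/\delta)$-dependent confidence on a constant fraction of the gadgets before invoking the change-of-measure bound there, while simultaneously retaining the pigeonhole $\rS\rA$ factor; pinning down this bookkeeping is the delicate part of the proof.
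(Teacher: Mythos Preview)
Your construction and the paper's both land on the same $\Theta(\varepsilon^2(1-\gamma)^3)$ per-sample information rate, but the paper takes a much simpler route that completely sidesteps the obstacle you correctly flag at the end.

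The paper uses a \emph{single} two-point (Le Cam) test, not a family of gadgets. The key observation is the order of quantifiers in the sample-complexity definition: $\mu$ is fixed first, then the learner commits to $\mu_{\log}$, and only then does the adversary pick the MDP. So the adversary simply takes $(s',a')\in\argmin_{s,a}\mu_{\log}(s,a)$, which automatically satisfies $\mu_{\log}(s',a')\le 1/(\rS\rA)$, and places the \emph{only} uncertain transition there. Concretely, $\mu$ is a point mass at some $s_0\ne s'$; every action at $s_0$ leads deterministically to $s'$; at $s'$ action $a'$ self-loops with probability $p\in\{p_0,p_1\}$ while every other action self-loops with some fixed $\bar p\in(p_0,p_1)$, and otherwise the chain falls into a zero-reward absorbing state; the reward is $1$ at $s'$ only. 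Then $q^*(s',a)=1/(1-\gamma P(s'|s',a))$. Choosing $p_0$ so that $1-\gamma p_0=b(1-\gamma)$ for a constant $b>1$ makes $f'(p_0)=\gamma^2/(1-\gamma p_0)^2=\Theta((1-\gamma)^{-2})$ while $1-p_i=\Theta(1-\gamma)$; setting $p_1-p_0=4\varepsilon/f'(p_0)=\Theta(\varepsilon(1-\gamma)^2)$ yields a value gap $\ge 4\varepsilon$ and a Bernoulli KL of order $(p_1-p_0)^2/(1-p_1)=\Theta(\varepsilon^2(1-\gamma)^3)$. A single Bretagnolle--Huber application on the event $\{\pi(a'|s')\ge 1/2\}$ then forces $\KL(\sP_0,\sP_1)\ge \ln(1/(4\delta))$, and since $\KL(\sP_0,\sP_1)\le n\,\mu_{\log}(s',a')\cdot\Theta(\varepsilon^2(1-\gamma)^3)\le (n/\rS\rA)\cdot\Theta(\varepsilon^2(1-\gamma)^3)$, the bound follows.

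This buys exactly the two things your approach struggles with. First, the $\rS\rA$ factor comes for free from the single inequality $\mu_{\log}(s',a')\le 1/(\rS\rA)$; no pigeonhole over many pairs is needed. Second, because there is only \emph{one} decision to get right at $\mu$, the $\ln(1/(4\delta))$ factor drops straight out of one Bretagnolle--Huber inequality; there is no ``$\ell_1$ vs.\ $\ell_\infty$'' aggregation problem. Your diagnosis of that problem is accurate, and the fix you sketch does not obviously work: if you pigeonhole with the $\ln(1/\delta)$-inflated threshold, each under-sampled gadget only has failure probability $\Omega(\delta^{c})$ rather than a constant, so the expected number of mistakes need not exceed $m/6$; if instead you pigeonhole with a constant threshold you recover a constant per-gadget error and concentration across gadgets, but you lose the $\ln(1/\delta)$ factor in $n$. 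Recovering it would require something like an Assouad-type argument tailored to high-probability guarantees, which you do not supply. The paper's single-gadget construction simply avoids the issue.
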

%
%
Our proof for the lower bound essentially follows the ideas of \citet{azar2013minimax}, but an effort was made to make the proof more streamlined. In particular, the new proof uses Le Cam's method.
We leave it for future work to extend the result to algorithms whose output is not restricted to memoryless policies.
%



\section{Upper Bounds}
\label{sec:ub}

We now consider the ``plug-in'' algorithm for BPO and the discounted total expected reward criterion and will present a result for it that shows that this simple approach essentially matches the sample complexity lower bound of 
\cref{thm:pi-induced-lb}. 
For simplicity, we assume that the reward function is known.%
\footnote{
As noted also, e.g., by \citet{agarwal2020model}, the sample size requirements stemming 
from the need to obtain a sufficiently accurate estimate of the reward function is a lower order term compared to that needed to accurately estimate the transition probabilities.}
Given a batch of data, the plug-in algorithm 
uses sample means to construct estimates for the transition probabilities. These can then be fed into any MDP solver to get a policy.
The plug-in method is an obvious first choice that has proved its value in a number of different settings
\citep{agarwal2020model,azar2013minimax,cui2020plug,li2020breaking,ren2021nearly,xiao2021optimality}. 

For the details, let $\cD=(S_i, A_i, R_i, S'_i)_{i=0}^{n-1}$ be the data available to the algorithm.
We let
\begin{align*}
N(s, a, s') = \sum_{i=0}^{n-1} \sI\left( S_i=s, A_i=a, S_i'=s' \right)\, 
\end{align*}
denote the number of transitions observed in the data from $s$ to $s'$ while action $a$ is used.
We also let $N(s,a) = \sum_{s'}N(s, a, s')$ be the number of times the pair $(s,a)$ is seen in the data. Provided that the visit count $N(s,a)$ is positive, we let
\begin{align*}
\hat{P}(s' | s,a) =  \frac{N(s, a, s')}{N(s, a)} 
\end{align*}
be the estimated probability of transitioning to $s'$ given that $a$ is taken in state $s$.
We let $\hat{P}(s' |s,a) = 0$ for all $s'\in\cS$ when $N(s,a)$ is zero.%
\footnote{We note that the particular values chosen here do not have an essential effect on the results.
For example, when $\hat{P}(\cdot | s,a)$ is the uniform distribution over $\cS$, it will only effect the constant factor in \cref{thm:plug-in-ub} (see \cref{eq:nzero} in the appendix).} 
The plug-in algorithm returns a policy by solving the planning problem defined with $(\hat{P},r)$, exploiting that planning algorithms need only the mean rewards and the transition probabilities
\citep{puterman2014markov}.
By slightly abusing the definitions, we will hence treat $(\hat{P},r)$ as an MDP and denote it by $\hat{M}$. In the result stated below we also allow a little slack for the planner; i.e., the planner is allowed to return a policy which is $\epsilon_{\text{opt}}$-optimal.
The main result for this section is as follows:
\begin{theorem}
\label{thm:plug-in-ub}
Fix $\rS$, $\rA$, an MDP $M\in \cM(\rS,\rA)$ and a distribution $\mu$ on the state space of $M$.
Suppose $\delta>0$, $\varepsilon>0$, and $\varepsilon_{\text{opt}}>0$. 
Assume that the data is collected by following \revise{the uniform mixture of all deterministic policies} and it consists of $m$ episodes, each of length $H=H_{\gamma, (1-\gamma)\varepsilon/(2\gamma)}$.
Let $\hat{\pi}$ be any deterministic, $\varepsilon_{\text{opt}}$-optimal policy for $\hat{M}=(\hat{P},r)$ where $\hat{P}$ is the sample-mean based empirical estimate of the transition probabilities based on the data collected.
Then if 
\begin{align*}
m= \tilde{\Omega} \left(  \frac{\rS^3 \rA^{\min(H, \rS) + 2}  \ln\frac{1}\delta }{(1-\gamma)^{4}\epsilon^2} \right),
\end{align*}
where $\tilde\Omega$ hides log factors of $\rS,\rA$ and $H$,
we have 
$v^{\hat{\pi}}(\mu) \ge v^*(\mu) -4\varepsilon - \varepsilon_{\text{opt}}$
with probability at least $1-\delta$. 
\end{theorem}
\begin{remark}
Our proof technique for the upper bound can be directly applied to the fixed $H$-horizon setting and gives an identical result.%
\todoc{note that I added that the result does not change.}
\end{remark}
\revise{
\begin{remark}
Note that the leading terms of \cref{thm:pi-induced-lb} and \cref{thm:plug-in-ub}, as a function of $\rA,\rS$ and $H$ is
$\rA^{\min(H, \rS)}$ and they essentially match.
Thus, these results together imply that in this respect the
uniform mixture of deterministic policies gives an \emph{optimal design} for batch RL with policy-induced data. 
An intriguing question, which we leave open, is whether the memoryless policy that assigns the uniform distribution (the ``uniform policy'') has the same property. In particular, it remains to be shown whether the uniform policy enjoys a bound similar to that shown for the uniform mixture policy in \cref{thm:plug-in-ub}.
Similarly, while it can be shown that using a memoryless policy that is non-uniform leads to a larger lower bound than shown in \cref{thm:pi-induced-lb}, it remains unclear whether the same result holds for mixture policies. 
\end{remark}
}
We note that for BPO with policy-induced data collection, it is not possible to directly apply a reductionist approach based on analysis for $SA$-sampling, which requires a uniform lower bound on the number of transitions observed at all the state-action pairs, which could even be infinite. 
The key to avoid this is to show that the ratio of visit probabilities for an arbitrary policy vs. the uniform mixture of deterministic policies in step $t$ is at most $\rA^{\min(t+1,\rS)}$, as shown by \cref{prop:ratbd2} in the Appendix. 
We provide the proof of \cref{thm:plug-in-ub}, as well as an analogous result for the pessimistic policy \citep{jin2021pessimism,BuGeBe20,KiRaNeJo20,yu2020mopo,kumar2020conservative,liu2020provably,xiao2021optimality} 
in the Appendix.


\section{Related Work}
\label{sec:related}

\if0
For the purpose of this paper, we only consider RL problems defined in the context of MDPs.
A reinforcement learning problem is characterized by a number of different \emph{aspects}:
\begin{itemize}[leftmargin=0.12in]
\item How is data collected? Passive data collection gives rise to batch RL problems, while active data collection gives rise to online learning?
\item How are rewards used to define values? Most works consider the finite horizon setting, or discounting and some work considers the average reward setting.
\item What is the goal of learning? How is a learning algorithm evaluated? In batch RL problems there are two basic settings considered: Policy evaluation (the goal is to produce accurate value estimates) and policy optimization (the goal is to produce a good policy).
\item Is generalization needed? When learning in finite MDPs, the implicit assumption is that no generalization is needed; in other words, results where the computational and sample complexity is polynomial in the relevant quantities (the size of the MDP as measured by the number of state-action pairs and the planning horizon) are acceptable. 
\end{itemize}
One can think of results stated for learning in finite MDPs as creating the foundations for results aimed for learning in ``large scale MDPs'' when generalization (the use of function approximation) is needed.
In particular, if a problem is ``hard'' for finite MDPs, it remains hard when adding generalization. Furthermore, if a problem is ``easy'', it may or may not remain easy in the presence of function approximation.
\fi

Our work is motivated by that of \citet{zanette2020exponential} who considers 
the sample complexity of BPO in MDPs and linear function approximation.
One of the main results in this paper (Theorem 2) is that 
the $(1/2,1/2)$ sample complexity 
with a ``reinforced'' policy-induced data collection in MDPs whose optimal action-value function is realizable with a $d$-dimensional feature map given to the learner is at least $\Omega( (1/(1-\gamma))^{\frac{d-1}{2}} )$.
The ``reinforced'' data collection gives the learner full access to the transition kernel and rewards at states that are reachable from the start states with the policy (or policies) chosen.
Thus, the learner here has more information than in our setting, but the problem is made hard by the presence of linear function approximators.
As noted by \citeauthor{zanette2020exponential}, the same setting is trivially easy in the finite horizon setting, thus the result shows a separation between the infinite and finite horizon settings.
The weakness of this separation is that the ``reinforced data collection'' mechanism is unrealistic.
A second result in the paper (Theorem 3) shows that in the presence of function approximation, 
even under $SA$-sampling,
the sample complexity is still exponential in $d$ (as in Theorem 2 mentioned above) 
even when the features are so that the action-value functions of \emph{any} policy can be realized.
This exponential sample complexity is to be contrasted with the fully polynomial result available for the same setting when a generative model is available \citep{LaSzeWe20}. 
Thus, this result shows an exponential separation between passive and active learning.
It is interesting to note that this separation disappears in the tabular setting under $SA$-sampling.

For linear function approximation under $SA$-sampling a number of authors show related exponential (or infinite) sample complexity when the sampling distribution is chosen in a semi-adversarial way
\citep{amortila2020variant,WaFoKa:ICLR21,chen2021infinitehorizon} in the sense that it can be chosen to be the worst distribution among those 
that provide \emph{good coverage in the feature space} (expressed 
as a condition on the minimum eigenvalue of the feature second moment matrix).
The main message of these results is that good coverage in the feature space is insufficient for sample-efficient BPO. 
Since the hard examples in these works are tabular MDPs with $O(d)$ state-action pairs, the uniform distribution over the state-action space is sufficient to guarantee polynomial sample complexity in the same ``hard MDPs''. Hence, these hardness results also have a distinctly different feature than the hardness result we present.

A different line of research can be traced back to the work of \citet{LiMuSz15} 
who were concerned with statistically efficient
batch policy evaluation (BPE) with policy-induced data-collection.
The significance of this work for our paper is that at the end of the paper
the authors of this work added a sidenote 
stating that the \emph{sample complexity of finite horizon BPE must be exponential in the horizon}. 
Their example is a combination-lock type MDP, which served as an inspiration for the constructions we use in our lower bound proofs. 
No arguments are made for the suitability of the lower-bound for BPO, 
nor is a formal proof given for the exponential sample complexity for BPE. 
As such, our work can be seen 
as the careful examination of this remark in this paper and its adoption to BPO.
A closely related, but weaker observation, is that the (vanilla) importance sampling estimators for BPE suffer an exponential blow-up of the variance \citep{guo2017using},
a phenomenon that \citet{Liu18:Curse} call the \emph{curse of horizon}  in BPE.
This exponential dependence is also pointed out by \citet{jiang2016doubly}, who provide a lower bound on the asymptotic variance of any unbiased estimator in BPE.

Lately, much effort has been devoted to ``breaking'' this aforementioned curse. 
The basis of these works is the observation that 
 if sufficient coverage for the state-action space is provided by the logging policy,
the curse should not apply (i.e., plug-in estimators should work well).
Considering finite-horizon problems for now,
the coverage condition is usually expressed as a lower bound $d_m$ on the smallest visit probabilities
of the logging policy.
The main result here, due to \citet{yin2021nearope},
 is that the sample-complexity (or, better, episode-complexity) of BPO,
 with an inhomogeneous $H$-step transition structure and
 up to constant and logarithmic factors, is $H^3/(d_m \epsilon^2)$, achieved by the plug-in estimator.
According to  \citet{yin2021nearopo}, this complexity continues to hold for the discounted setting when it represents the ``step complexity'' (as opposed to ``episode complexity'').
The same work also removes a factor of $H$ both from the lower and upper bounds for the finite horizon setting with homogeneous transitions.
A further strengthening of the results for the homogeneous setting is due to \citet{ren2021nearly} who remove an additional $H$ factor under the assumption that the total reward in every episode belongs to the $[0,1]$ interval. 
These results justify the use of coverage as a way of describing the inherent hardness of BPO. 
These results are complementary to ours. 
The lower bound in these works for fixed $d_m$ is achieved by keeping the number of state-action pairs free, while we consider sample complexity for a fixed number of state-action pairs.

\if0
An alternative approach to characterize the sample-complexity of BPO is followed by 
\citet{jin2021pessimism} who, for the inhomogeneous transition kernel, finite-horizon setting,
consider a weighted error metric. 
While their primary interest is in obtaining results for linear function approximation, their result can be simplified back to the tabular setting. 
If we do this, 
the new metric that they propose takes the following form:
Given a BPO algorithm $\cL$ and some data $\cD$ composed of a number of full episodes of length $H$,
the weighted error of $\cL$ on $\cD$ is
$Z(\cL,\cD) = \frac{v^*(\mu)-v^{ \cL(\cD) }(\mu)}{ \sum_{h=0}^{H-1} \sum_{s,a} \nu_{\mu,h}^{\pi^*}(s,a) /\sqrt{1+N_h(s,a;\cD)}}$, where $\pi^*$ is any optimal policy and
$N_h(s,a;\cD)$ counts the number of times state $(s,a)$ is seen at stage $h$ in the episodes in $\cD$. Their main result then shows that 
the minimax expected value of this metric is lower bounded by a universal constant, 
while the pessimistic algorithm's expected weighted error is upper bound by $\tilde O(\rS \rA H)$.
Note that the results that are phrased with the help of the minimum coverage probability can also be rewritten as results on the minimax error for a weighted error where the weights would include the minimum coverage probabilities. All these results are complementary to each other.
\fi

An alternative approach to characterize the sample-complexity of BPO is followed by 
\citet{jin2021pessimism} who, for the inhomogeneous transition kernel, finite-horizon setting,
consider a \emph{weighted error metric}. 
While their primary interest is in obtaining results for linear function approximation, their result can be simplified back to the tabular setting. 
Their main result then shows that 
the minimax expected value of this weighted metric is lower bounded by a universal constant, 
while the pessimistic algorithm can match this bound with polynomial factors. 
Note that the results that are phrased with the help of the minimum coverage probability can also be rewritten as results on the minimax error for a weighted error, where the weights would include the minimum coverage probabilities. All these results are complementary to each other.

Average reward  BPO with a parametric policy class for finite MDPs using policy-induced data is considered by \citet{LZS20}. The authors derive an ``efficient'' value estimator, and the policy returned is defined as the one that achieves the largest estimated value.
An upper bound on the suboptimality of the policy returned is given in terms of a number of quantities that relate to the policy parameterization provided that a coverage condition is satisfied similar to the coverage assumption discussed above.
\todoc{Can we specialize this to the case when the policy parameterization is trivial? They do something with Boltzmann policies, but even that is unnecessarily complicated.}
\todoc{We could add some references to the literature on best policy identification, though this problem has a very different flavor, which we should also note if we decide to add references about this literature.}

Finally, we note that there is extensive literature on BPE; the reader is referred to the works of
\citep{yin2021nearope,yin2020asymptotically,ren2021nearly,uehara2021finite,pananjady2020instance} 
and the references therein.
The most relevant works for $SA$-sampling are concerned with the sample complexity of planning with  generative models; see, e.g., 
\citep{azar2013minimax,agarwal2020model,yin2020asymptotically} and the references therein.

\if0
Previous lower bound of BPO by \citet{ren2021nearly} in finite horizon homogenous MDP: There exists a pair of MDPs $M_1$ and $M_2$ and offline data $\cD$ with $|\cD|=n$ and minimum state-action visiting frequency $d_m$, 
\begin{align*}
\inf_{\hat{\pi}} \sup_{M\in\{M_1, M_2\}} \sP_{M} \left( |v^{\hat{\pi}} - v^{\pi^*}| > c \sqrt{\frac{H}{n d_m}} \right) > 1/4\, .
\end{align*}
Here $c$ is an absolute constant and $\hat{\pi}$ is any planner's returned policy. 
$d_m$: cannot find an exact definition in the paper. It seems to be $\frac{1}{H} \min_{s,a} \sum_{h\in[H]} \nu^{\logpi}_{\mu, t}(s,a)$ using our notation for the finite horizon setting. 

This result is based on previous lower bounds for policy evaluation, which all trace back to lower bound for the generative model setting \citet{azar2013minimax}. Basically just replace $N$ (queries on each s,a) with $n d_m$. 
Similar result for finite horizon MDP:  BPE (\citet{yin2021nearope} Theorem 3.8), BPO (\citet{yin2021nearopo} Theorem 4.2, modified based on the BPE bound).

Using our notations, the BPO lower bound is: 
For all $0< d_m \leq 1/SA$, 
let the family of problems be $M_{d_m}:= \{(\logpi, M) | \min_{t, s, a} \nu_{\mu,t}^{\logpi}(s_t, a_t)\geq d_m \}$. 
There exists universal constant $c_1, c_2, c, p$ ($H, S, A \geq c_1$ and $0<\varepsilon<c_2$) such that when $n\leq cH^2 / d_m \varepsilon^2$, there is
\begin{align*}
\inf_{\hat{\pi}} \sup_{\logpi, M \in M_{d_m}} \sP_{\logpi, M} \left(  v^{*}- v^{\hat{\pi}} \geq \varepsilon \right) \geq p\, .
\end{align*}
Here $d_m:= \min_{t, s_t, a_t} \{ \nu_{\mu,t}^{\logpi}(s_t, a_t): \nu_{\mu,t}^{\logpi}(s_t, a_t)>0 \}$. The logging policy is assumed to be able to cover $\pi^*$, see Assumption 2.1 in the paper.  
They discuss how could extend the upper bound for the discounted setting. But there is no such extension of the lower bound.

----
However, as opposed to the work of 
Thus, while policy-induced data with a finite planning horizon still allows learning, this is exponentially more expensive than learning with a better control over the data distribution. In particular, when the data is collected with a fine-grained control over the state-action distribution, a setting that is best suited when a simulator allowing for this fine grained control is available, as it is well known, the sample complexity is polynomial in all the relevant quantities
\citep{azar2013minimax,SiWa18,agarwal2020model}. 
The same sample complexity bound is also known to apply when the learner cannot directly control these frequencies, but can nevertheless still actively control the data collection process. \todoc{ok, what is a good reference here..?}

---
Our work fills a gap in the literature: While there has been much work on 
{\em (i)} statistically efficient policy evaluation
{\em (ii)} learning from data with carefully controlled state-action distribution (such as when learning from a simulator) or learning from trajectories but when the action's used are under control of the learning;
{\em (iii)} the sample complexity of learning when learning in the presence of linear function approximation, existing work left out the question we investigate in this paper.
\fi

\vspace{-0.1in}
\section{Conclusion}
The main motivation for our paper is to fill a substantial gap in the literature of batch policy optimization:
While the most natural setting for batch policy optimization is when the data is obtained by following some policy, the sample complexity, the minimum number of observations necessary and sufficient to find a good policy, of batch policy optimization with data obtained this way has never been formally studied. 
Our results characterize how hard BPO under passive data collection exactly is and how the difficulty scales as the problem parameter changes. 
While our main result that, with a finite planning horizon, the sample complexity scales exponentially  is perhaps somewhat expected, this has never been formally established and should therefore be a valuable contribution to the field.
In fact, both the lower and the upper bound required considerable work to be rigorously establish and that the sample complexity is finite is less obvious in light of the previous results that involved ``minimum coverage'' as a superficial argument with these results suggest that the sample complexity could grow without bound if some state-action pairs have arbitrary small visit probabilities.
That these results, as far as the details are concerned, are non-obvious is also shown
by the gap that we could not close between the upper and lower sample complexity bounds.
Another non-obvious insight 
of our work is that warm starts provably cannot help in reducing the sample complexity.
Our results should be given even more significance by the fact that the tabular setting provides the foundation for most of the insights that lead to better algorithms in RL. 

\section{Acknowledgments}

This work is done when Chenjun Xiao was intern at Google Brain. 
Chenjun Xiao and Bo Dai would like to thank Ofir Nachum for providing feedback on a draft of this manuscript. 
Ilbin Lee is supported by Discovery Grant from NSERC.
Csaba Szepesv\'ari and Dale Schuurmans gratefully acknowledge funding from the Canada CIFAR AI Chairs Program, Amii and NSERC.

\bibliography{ref}

\begin{thebibliography}{57}
\providecommand{\natexlab}[1]{#1}
\providecommand{\url}[1]{\texttt{#1}}
\expandafter\ifx\csname urlstyle\endcsname\relax
  \providecommand{\doi}[1]{doi: #1}\else
  \providecommand{\doi}{doi: \begingroup \urlstyle{rm}\Url}\fi

\bibitem[Agarwal et~al.(2020{\natexlab{a}})Agarwal, Kakade, and
  Yang]{agarwal2020model}
Alekh Agarwal, Sham Kakade, and Lin~F Yang.
\newblock Model-based reinforcement learning with a generative model is minimax
  optimal.
\newblock In \emph{COLT}, pages 67--83, 2020{\natexlab{a}}.

\bibitem[Agarwal et~al.(2020{\natexlab{b}})Agarwal, Schuurmans, and
  Norouzi]{agarwal2020optimistic}
Rishabh Agarwal, Dale Schuurmans, and Mohammad Norouzi.
\newblock An optimistic perspective on offline reinforcement learning.
\newblock In \emph{International Conference on Machine Learning}, pages
  104--114. PMLR, 2020{\natexlab{b}}.

\bibitem[Amortila et~al.(2020)Amortila, Jiang, and Xie]{amortila2020variant}
Philip Amortila, Nan Jiang, and Tengyang Xie.
\newblock A variant of the {W}ang-{F}oster-{K}akade lower bound for the
  discounted setting.
\newblock \emph{arXiv preprint 2011.01075}, 2020.

\bibitem[Azar et~al.(2013)Azar, Munos, and Kappen]{azar2013minimax}
Mohammad~Gheshlaghi Azar, R{\'e}mi Munos, and Hilbert~J Kappen.
\newblock Minimax {PAC} bounds on the sample complexity of reinforcement
  learning with a generative model.
\newblock \emph{Machine learning}, 91\penalty0 (3):\penalty0 325--349, 2013.

\bibitem[Bai et~al.(2019)Bai, Xie, Jiang, and Wang]{bai2019provably}
Yu~Bai, Tengyang Xie, Nan Jiang, and Yu-Xiang Wang.
\newblock Provably efficient q-learning with low switching cost.
\newblock In \emph{Advances in Neural Information Processing Systems},
  volume~32, 2019.

\bibitem[Brandfonbrener et~al.(2021)Brandfonbrener, Whitney, Ranganath, and
  Bruna]{brandfonbrener2021offline}
David Brandfonbrener, William~F Whitney, Rajesh Ranganath, and Joan Bruna.
\newblock Offline rl without off-policy evaluation.
\newblock \emph{arXiv preprint arXiv:2106.08909}, 2021.

\bibitem[Buckman et~al.(2021)Buckman, Gelada, and Bellemare]{BuGeBe20}
Jacob Buckman, Carles Gelada, and Marc~G. Bellemare.
\newblock The importance of pessimism in fixed-dataset policy optimization.
\newblock In \emph{ICLR}, 2021.

\bibitem[Chen and Jiang(2019)]{chen2019information}
Jinglin Chen and Nan Jiang.
\newblock Information-theoretic considerations in batch reinforcement learning.
\newblock In \emph{International Conference on Machine Learning}, pages
  1042--1051. PMLR, 2019.

\bibitem[Chen et~al.(2021)Chen, Scherrer, and
  Bartlett]{chen2021infinitehorizon}
Lin Chen, Bruno Scherrer, and Peter~L. Bartlett.
\newblock Infinite-horizon offline reinforcement learning with linear function
  approximation: Curse of dimensionality and algorithm.
\newblock \emph{arXiv preprint 2103.09847}, 2021.

\bibitem[Cui and Yang(2020)]{cui2020plug}
Qiwen Cui and Lin Yang.
\newblock Is plug-in solver sample-efficient for feature-based reinforcement
  learning?
\newblock In \emph{NeurIPS}, volume~33, pages 6015--6026, 2020.

\bibitem[Fu et~al.(2020)Fu, Kumar, Nachum, Tucker, and Levine]{fu2020d4rl}
Justin Fu, Aviral Kumar, Ofir Nachum, George Tucker, and Sergey Levine.
\newblock D4rl: Datasets for deep data-driven reinforcement learning.
\newblock \emph{arXiv preprint arXiv:2004.07219}, 2020.

\bibitem[Gao et~al.(2021)Gao, Xie, Du, and Yang]{gao2021provably}
Minbo Gao, Tianle Xie, Simon~S Du, and Lin~F Yang.
\newblock A provably efficient algorithm for linear markov decision process
  with low switching cost.
\newblock \emph{arXiv preprint arXiv:2101.00494}, 2021.

\bibitem[Gulcehre et~al.(2020)Gulcehre, Wang, Novikov, Paine, Colmenarejo,
  Zolna, Agarwal, Merel, Mankowitz, Paduraru, et~al.]{gulcehre2020rl}
Caglar Gulcehre, Ziyu Wang, Alexander Novikov, Tom~Le Paine, Sergio~Gomez
  Colmenarejo, Konrad Zolna, Rishabh Agarwal, Josh Merel, Daniel Mankowitz,
  Cosmin Paduraru, et~al.
\newblock Rl unplugged: A suite of benchmarks for offline reinforcement
  learning.
\newblock \emph{arXiv preprint arXiv:2006.13888}, 2020.

\bibitem[Guo et~al.(2017)Guo, Thomas, and Brunskill]{guo2017using}
Zhaohan~Daniel Guo, Philip~S Thomas, and Emma Brunskill.
\newblock Using options and covariance testing for long horizon off-policy
  policy evaluation.
\newblock In \emph{NeurIPS}, pages 2489--2498, 2017.

\bibitem[Jaques et~al.(2019)Jaques, Ghandeharioun, Shen, Ferguson, Lapedriza,
  Jones, Gu, and Picard]{jaques2019way}
Natasha Jaques, Asma Ghandeharioun, Judy~Hanwen Shen, Craig Ferguson, Agata
  Lapedriza, Noah Jones, Shixiang Gu, and Rosalind Picard.
\newblock Way off-policy batch deep reinforcement learning of implicit human
  preferences in dialog.
\newblock \emph{arXiv preprint arXiv:1907.00456}, 2019.

\bibitem[Jiang and Li(2016)]{jiang2016doubly}
Nan Jiang and Lihong Li.
\newblock Doubly robust off-policy value evaluation for reinforcement learning.
\newblock In \emph{ICML}, pages 652--661, 2016.

\bibitem[Jin et~al.(2020)Jin, Krishnamurthy, Simchowitz, and Yu]{jin2020reward}
Chi Jin, Akshay Krishnamurthy, Max Simchowitz, and Tiancheng Yu.
\newblock Reward-free exploration for reinforcement learning.
\newblock In \emph{International Conference on Machine Learning}, pages
  4870--4879. PMLR, 2020.

\bibitem[Jin et~al.(2021)Jin, Yang, and Wang]{jin2021pessimism}
Ying Jin, Zhuoran Yang, and Zhaoran Wang.
\newblock Is pessimism provably efficient for offline {RL}?
\newblock In \emph{ICML}, 2021.

\bibitem[Kaufmann et~al.(2021)Kaufmann, M{\'e}nard, Domingues, Jonsson,
  Leurent, and Valko]{kaufmann2021adaptive}
Emilie Kaufmann, Pierre M{\'e}nard, Omar~Darwiche Domingues, Anders Jonsson,
  Edouard Leurent, and Michal Valko.
\newblock Adaptive reward-free exploration.
\newblock In \emph{Algorithmic Learning Theory}, pages 865--891, 2021.

\bibitem[Kidambi et~al.(2020)Kidambi, Rajeswaran, Netrapalli, and
  Joachims]{KiRaNeJo20}
Rahul Kidambi, Aravind Rajeswaran, Praneeth Netrapalli, and Thorsten Joachims.
\newblock {MOReL}: Model-based offline reinforcement learning.
\newblock In \emph{NeurIPS}, volume~33, pages 21810--21823, 2020.

\bibitem[Kumar et~al.(2019)Kumar, Fu, Tucker, and Levine]{kumar2019stabilizing}
Aviral Kumar, Justin Fu, George Tucker, and Sergey Levine.
\newblock Stabilizing off-policy q-learning via bootstrapping error reduction.
\newblock \emph{arXiv preprint arXiv:1906.00949}, 2019.

\bibitem[Kumar et~al.(2020)Kumar, Zhou, Tucker, and
  Levine]{kumar2020conservative}
Aviral Kumar, Aurick Zhou, George Tucker, and Sergey Levine.
\newblock Conservative q-learning for offline reinforcement learning.
\newblock In \emph{NeurIPS}, volume~33, pages 1179--1191, 2020.

\bibitem[Laroche et~al.(2019)Laroche, Trichelair, and
  Des~Combes]{laroche2019safe}
Romain Laroche, Paul Trichelair, and Remi~Tachet Des~Combes.
\newblock Safe policy improvement with baseline bootstrapping.
\newblock In \emph{International Conference on Machine Learning}, pages
  3652--3661. PMLR, 2019.

\bibitem[Lattimore and Szepesv{\'a}ri(2020)]{lattimore2020bandit}
Tor Lattimore and Csaba Szepesv{\'a}ri.
\newblock \emph{Bandit algorithms}.
\newblock Cambridge University Press, 2020.

\bibitem[Lattimore et~al.(2020)Lattimore, Szepesv\'ari, and Weisz]{LaSzeWe20}
Tor Lattimore, {Cs}aba Szepesv\'ari, and Gell\'ert Weisz.
\newblock Learning with good feature representations in bandits and in {RL}
  with a generative model.
\newblock In \emph{ICML}, pages 5662--5670, 2020.

\bibitem[Levine et~al.(2020)Levine, Kumar, Tucker, and Fu]{levine2020offline}
Sergey Levine, Aviral Kumar, George Tucker, and Justin Fu.
\newblock Offline reinforcement learning: Tutorial, review, and perspectives on
  open problems.
\newblock \emph{arXiv preprint arXiv:2005.01643}, 2020.

\bibitem[Li et~al.(2020)Li, Wei, Chi, Gu, and Chen]{li2020breaking}
Gen Li, Yuting Wei, Yuejie Chi, Yuantao Gu, and Yuxin Chen.
\newblock Breaking the sample size barrier in model-based reinforcement
  learning with a generative model.
\newblock \emph{NeurIPS}, 33:\penalty0 12861--12872, 2020.

\bibitem[Li et~al.(2015)Li, Munos, and Szepesv{\'a}ri]{LiMuSz15}
L.~Li, R.~Munos, and {Cs}. Szepesv{\'a}ri.
\newblock Toward minimax off-policy value estimation.
\newblock In \emph{AISTATS}, pages 608--616, 2015.

\bibitem[Liao et~al.(2020)Liao, Qi, and Murphy]{LZS20}
Peng Liao, Zhengling Qi, and Susan Murphy.
\newblock Batch policy learning in average reward {M}arkov decision processes.
\newblock \emph{arXiv 2007.11771}, 2020.

\bibitem[Liu et~al.(2018)Liu, Li, Tang, and Zhou]{Liu18:Curse}
Qiang Liu, Lihong Li, Ziyang Tang, and Dengyong Zhou.
\newblock Breaking the curse of horizon: Infinite-horizon off-policy
  estimation.
\newblock In \emph{NeurIPS}, pages 5361--5371, 2018.

\bibitem[Liu et~al.(2019)Liu, Swaminathan, Agarwal, and Brunskill]{liu2019off}
Yao Liu, Adith Swaminathan, Alekh Agarwal, and Emma Brunskill.
\newblock Off-policy policy gradient with state distribution correction.
\newblock \emph{arXiv preprint arXiv:1904.08473}, 2019.

\bibitem[Liu et~al.(2020)Liu, Swaminathan, Agarwal, and
  Brunskill]{liu2020provably}
Yao Liu, Adith Swaminathan, Alekh Agarwal, and Emma Brunskill.
\newblock Provably good batch off-policy reinforcement learning without great
  exploration.
\newblock In \emph{NeurIPS}, 2020.

\bibitem[Mitzenmacher and Upfal(2005)]{MiUp05:book}
Michael Mitzenmacher and Eli Upfal.
\newblock \emph{Probability and Computing: Randomized Algorithms and
  Probabilistic Analysis}.
\newblock Cambridge University Press, 2005.

\bibitem[Ostrovski et~al.(2021)Ostrovski, Castro, and
  Dabney]{ostrovski2021difficulty}
Georg Ostrovski, Pablo~Samuel Castro, and Will Dabney.
\newblock The difficulty of passive learning in deep reinforcement learning.
\newblock \emph{Advances in Neural Information Processing Systems}, 34, 2021.

\bibitem[Pananjady and Wainwright(2020)]{pananjady2020instance}
Ashwin Pananjady and Martin~J Wainwright.
\newblock Instance-dependent bounds for policy evaluation in tabular
  reinforcement learning.
\newblock \emph{IEEE Transactions on Information Theory}, 67\penalty0
  (1):\penalty0 566--585, 2020.

\bibitem[Puterman(2014)]{puterman2014markov}
Martin~L Puterman.
\newblock \emph{Markov decision processes: discrete stochastic dynamic
  programming}.
\newblock John Wiley \& Sons, 2014.

\bibitem[Qin et~al.(2021)Qin, Gao, Zhang, Xu, Huang, Li, Zhang, and
  Yu]{qin2021neorl}
Rongjun Qin, Songyi Gao, Xingyuan Zhang, Zhen Xu, Shengkai Huang, Zewen Li,
  Weinan Zhang, and Yang Yu.
\newblock Neorl: A near real-world benchmark for offline reinforcement
  learning.
\newblock \emph{arXiv preprint arXiv:2102.00714}, 2021.

\bibitem[Rashidinejad et~al.(2021)Rashidinejad, Zhu, Ma, Jiao, and
  Russell]{rashidinejad2021bridging}
Paria Rashidinejad, Banghua Zhu, Cong Ma, Jiantao Jiao, and Stuart Russell.
\newblock Bridging offline reinforcement learning and imitation learning: A
  tale of pessimism.
\newblock \emph{arXiv preprint arXiv:2103.12021}, 2021.

\bibitem[Ren et~al.(2021)Ren, Li, Dai, Du, and Sanghavi]{ren2021nearly}
Tongzheng Ren, Jialian Li, Bo~Dai, Simon~S Du, and Sujay Sanghavi.
\newblock Nearly horizon-free offline reinforcement learning.
\newblock \emph{arXiv preprint arXiv:2103.14077}, 2021.

\bibitem[Sidford et~al.(2018)Sidford, Wang, Wu, Yang, and Ye]{SiWa18}
Aaron Sidford, Mengdi Wang, Xian Wu, Lin~F. Yang, and Yinyu Ye.
\newblock Near-optimal time and sample complexities for solving {M}arkov
  decision processes with a generative model.
\newblock In \emph{NeurIPS}, pages 5192--5202, 2018.

\bibitem[Uehara et~al.(2021)Uehara, Imaizumi, Jiang, Kallus, Sun, and
  Xie]{uehara2021finite}
Masatoshi Uehara, Masaaki Imaizumi, Nan Jiang, Nathan Kallus, Wen Sun, and
  Tengyang Xie.
\newblock Finite sample analysis of minimax offline reinforcement learning:
  Completeness, fast rates and first-order efficiency.
\newblock \emph{arXiv preprint arXiv:2102.02981}, 2021.

\bibitem[Wang et~al.(2021)Wang, Foster, and Kakade]{WaFoKa:ICLR21}
Ruosong Wang, Dean~P. Foster, and Sham~M. Kakade.
\newblock What are the statistical limits of offline {RL} with linear function
  approximation?
\newblock In \emph{ICLR}, 2021.

\bibitem[Whitehead(1991)]{whitehead1991complexity}
Steven~D Whitehead.
\newblock Complexity and cooperation in q-learning.
\newblock In \emph{International Conference on Machine Learning}, pages
  363--367, 1991.

\bibitem[Wu et~al.(2019)Wu, Tucker, and Nachum]{wu2019behavior}
Yifan Wu, George Tucker, and Ofir Nachum.
\newblock Behavior regularized offline reinforcement learning.
\newblock \emph{arXiv preprint arXiv:1911.11361}, 2019.

\bibitem[Xiao et~al.(2021)Xiao, Wu, Lattimore, Dai, Mei, Li, Szepesv\'ari, and
  Schuurmans]{xiao2021optimality}
Chenjun Xiao, Yifan Wu, Tor Lattimore, Bo~Dai, Jincheng Mei, Lihong Li, Csaba
  Szepesv\'ari, and Dale Schuurmans.
\newblock On the optimality of batch policy optimization algorithms.
\newblock In \emph{ICML}, 2021.

\bibitem[Xie and Jiang(2021)]{xie2021batch}
Tengyang Xie and Nan Jiang.
\newblock Batch value-function approximation with only realizability.
\newblock In \emph{International Conference on Machine Learning}, pages
  11404--11413. PMLR, 2021.

\bibitem[Xie et~al.(2021{\natexlab{a}})Xie, Cheng, Jiang, Mineiro, and
  Agarwal]{xie2021bellman}
Tengyang Xie, Ching-An Cheng, Nan Jiang, Paul Mineiro, and Alekh Agarwal.
\newblock Bellman-consistent pessimism for offline reinforcement learning.
\newblock \emph{arXiv preprint arXiv:2106.06926}, 2021{\natexlab{a}}.

\bibitem[Xie et~al.(2021{\natexlab{b}})Xie, Jiang, Wang, Xiong, and
  Bai]{xie2021policy}
Tengyang Xie, Nan Jiang, Huan Wang, Caiming Xiong, and Yu~Bai.
\newblock Policy finetuning: Bridging sample-efficient offline and online
  reinforcement learning.
\newblock \emph{arXiv preprint arXiv:2106.04895}, 2021{\natexlab{b}}.

\bibitem[Yin and Wang(2020)]{yin2020asymptotically}
Ming Yin and Yu-Xiang Wang.
\newblock Asymptotically efficient off-policy evaluation for tabular
  reinforcement learning.
\newblock In \emph{AISTATS}, pages 3948--3958, 2020.

\bibitem[Yin and Wang(2021)]{yin2021optimal}
Ming Yin and Yu-Xiang Wang.
\newblock Optimal uniform ope and model-based offline reinforcement learning in
  time-homogeneous, reward-free and task-agnostic settings.
\newblock \emph{arXiv preprint arXiv:2105.06029}, 2021.

\bibitem[Yin et~al.(2021{\natexlab{a}})Yin, Bai, and Wang]{yin2021nearope}
Ming Yin, Yu~Bai, and Yu-Xiang Wang.
\newblock Near-optimal provable uniform convergence in offline policy
  evaluation for reinforcement learning.
\newblock In \emph{AISTATS}, pages 1567--1575, 2021{\natexlab{a}}.

\bibitem[Yin et~al.(2021{\natexlab{b}})Yin, Bai, and Wang]{yin2021nearopo}
Ming Yin, Yu~Bai, and Yu-Xiang Wang.
\newblock Near-optimal offline reinforcement learning via double variance
  reduction.
\newblock \emph{arXiv preprint arXiv:2102.01748}, 2021{\natexlab{b}}.

\bibitem[Yu et~al.(2020)Yu, Thomas, Yu, Ermon, Zou, Levine, Finn, and
  Ma]{yu2020mopo}
Tianhe Yu, Garrett Thomas, Lantao Yu, Stefano Ermon, James~Y. Zou, Sergey
  Levine, Chelsea Finn, and Tengyu Ma.
\newblock {MOPO:} model-based offline policy optimization.
\newblock In \emph{NeurIPS}, pages 14129--14142, 2020.

\bibitem[Yu et~al.(2021)Yu, Kumar, Rafailov, Rajeswaran, Levine, and
  Finn]{Yu2021combo}
Tianhe Yu, Aviral Kumar, Rafael Rafailov, Aravind Rajeswaran, Sergey Levine,
  and Chelsea Finn.
\newblock {COMBO:} conservative offline model-based policy optimization.
\newblock \emph{arXiv preprint arXiv:2102.08363}, 2021.

\bibitem[Zanette(2021)]{zanette2020exponential}
Andrea Zanette.
\newblock Exponential lower bounds for batch reinforcement learning: Batch {RL}
  can be exponentially harder than online {RL}.
\newblock In \emph{ICML}, 2021.

\bibitem[Zhang et~al.(2020)Zhang, Zhou, and Ji]{zhang2020almost}
Zihan Zhang, Yuan Zhou, and Xiangyang Ji.
\newblock Almost optimal model-free reinforcement learningvia
  reference-advantage decomposition.
\newblock In \emph{Advances in Neural Information Processing Systems},
  volume~33, 2020.

\bibitem[Zhang et~al.(2021)Zhang, Du, and Ji]{zhang2021near}
Zihan Zhang, Simon Du, and Xiangyang Ji.
\newblock Near optimal reward-free reinforcement learning.
\newblock In \emph{International Conference on Machine Learning}, pages
  12402--12412, 2021.

\end{thebibliography}
\bibliographystyle{plainnat}

\newpage

\newpage
\appendix

\onecolumn
\section*{Appendix: The Curse of Passive Data Collection in Batch Reinforcement Learning}
The purpose of this appendix is to give the proofs for the results in the main paper.

\revise{
\section{Discussion on Revision: Using Mixture of Stationary Policies as the Logging Policy }
\label{sec:revision-discussion}

In this revision, we fix an error in previous upper bound results \cref{thm:plug-in-ub}: the original result states that the plug-in algorithm is nearly minimax optimal when using an uniform policy as the logging policy. 
The problem with this result was that it made the claim we now make for the uniform mixture policy \cref{prop:ratbd2} for the uniform policy and this claim was made based on an incorrect argument. We still do not know whether the bound that holds for the uniform mixture policy can also be shown for the uniform policy. Our current understanding of the relevant quantity under the uniform policy is stated in \cref{prop:ratbd}. Thus, the upper bound is now stated for the uniform mixture policy. As a result, to make the theory complete, we also modified the lower bound and their proofs in \cref{sec:lb-proofs} to allow mixture policies (the earlier statements were restricted to memoryless policies). We leave the question whether using a fixed stationary policy is also sufficient for achieving minimax optimality as an open problem.
}

\section{An absolute bound on the state-action probability ratios under the uniform logging policy and the uniform mix of deterministic policies}
\label{sec:absratiobound}
\newcommand{\trg}{\mathrm{trg}}
\newcommand{\Det}{\mathrm{DET}}
For a policy $\pi$, $t\ge 0$, $(s,a)\in \cS \times \cA$ let
\begin{align*}
\nu_{\mu,t}^\pi(s,a) := \sP^\pi(S_t=s, A_t=a|S_0\sim \mu)\, .
\end{align*} 
As noted beforehand, ratios of these marginal probabilities appear in previous upper (and lower) bounds on how well the value of a target policy $\pi_{\trg}$ can be estimated given data from a logging policy $\pi_{\log}$. To minimize clutter, let $\nu_{\mu,t}^{\trg}$ stand for $\nu_{\mu,t}^{\pi_{\trg}}$ and, similarly,
let $\nu_{\mu,t}^{\log}$ stand for $\nu_{\mu,t}^{\pi_{\log}}$.
The purpose of this section is to present 
a short calculation that bounds $\frac{\nu_{\mu,t}^{\trg}(s,a)}{\nu_{\mu,t}^{\log}(s,a)}$, which is the ratio that appears in the previously mentioned bounds. First, we bound this ratio for the uniform logging policy when $\pi_{\log}(a|s) = 1/\rA$. 
\begin{proposition}\label{prop:ratbd}
When $\pi_{\log}$ is the uniform policy, for any $t\ge 0$, $(s,a)\in \cS \times \cA$ and  $\pi_{\trg}$ is any target policy, 
\begin{align}
\frac
{\nu_{\mu,t}^{\trg}(s,a)}
{\nu_{\mu,t}^{\log}(s,a)}
\le \rA^{t+1}\,.
\label{eq:ratbd}
\end{align}
Furthermore, there exists a MDP and $(s,a)\in\cS\times\cA$ such that $\nu_{\mu,t}^{\trg}(s,a) / \nu_{\mu,t}^{\log}(s,a) \geq \rA^{\rS}$ for $t+1\geq \rS$. 
\end{proposition}
\begin{proof}
Fix an arbitrary pair $(s_t,a_t)\in \cS \times \cA$. 
Let $s_{0:t}$ denote a sequence $(s_0,\dots,s_t)$ of states and let $a_{0:t}$ denote a sequence $(a_0,\dots,a_t)$ of actions.
We have
\begin{align*}
\nu_{\mu,t}^{\trg}(s_t,a_t)
& =
\sum_{
\substack{
s_{0:t-1}\\
a_{0:t-1}}
} 
\mu(s_0)\pi_{\trg}(a_0|s_0)p(s_1|s_0,a_0) \dots \pi_{\trg}(a_{t-1}|s_{t-1}) p(s_t|s_{t-1},a_{t-1}) \pi_{\trg}(a_t|s_t) \\
& \le
\sum_{
\substack{
s_{0:t-1}\\
a_{0:t-1}}
} 
\rA^{t+1} \mu(s_0)\pi_{\log}(a_0|s_0)p(s_1|s_0,a_0) \dots \pi_{\log}(a_{t-1}|s_{t-1}) p(s_t|s_{t-1},a_{t-1}) \pi_{\log}(a_t|s_t) \\
& = \rA^{t+1} \nu_{\mu,t}^{\log}(s_t,a_t)\,.
\end{align*}
Dividing both sides by $\nu_{\mu,t}^{\log}(s_t,a_t)$ gives the desired bound. The inequality is tight when there is only one possible path $(s_0,a_0,s_1,a_1,\ldots,s_t,a_t)$ to $(s_t,a_t)$ in an MDP and the target policy is the deterministic policy taking the actions in the unique path.  

We now show an example to prove the second part of the claim. 
Consider a MDP with two states $\cS = \{s_1, s_2\}$ and two actions $\cA = \{a_1, a_2\}$. 
The MDP always starts at state $s_1$ at the beginning of an episode, that is $\mu(s_1)=1, \mu(s_2)=0$. 
At state $s_1$ under action $a_2$, the MDP transits to $s_2$, while it stays at $s_1$ under $a_1$. 
State $s_2$ is absorbing under any action. 
Let $\pi_\trg$ be a deterministic policy $\pi_\trg\in\Det$ such that $\pi_\trg(s_1)=a_1$, it can be verified that $\nu_{\mu, t}^{\pi_\trg}(s_1, a_1) = 1$ for any $t\geq 1$. 
First consider the uniform logging policy.  
For $t\geq 1$,
\begin{align*}
\nu^{\log}_{\mu, t} (s_1, a_1) = \nu^{\log}_{\mu, t-1} (s_1, a_1)  \sP(s_1, a_1 | s_1, a_1) 
=
\frac{1}{2} \nu^{\log}_{\mu, t-1} (s_1, a_1)  
=
\dots
=
\frac{1}{2^{t+1}}\, .
\end{align*}
Thus for $t+1\geq \rS$, 
$\nu_{\mu, t}^{\pi_\trg}(s_1, a_1) / \nu_{\mu, t}^{\log}(s_1, a_1) = 2^{t+1} \geq \rA^{\rS}$. 

Now consider using the mixture of deterministic policies as $\pi_{\log}$.  
One can see that $\nu_{\mu, t}^{\log}(s_1, a_1) = \frac{1}{2}$ since it will always stay at $s_1$ as long as a policy always selects $a_1$ at $s_1$, 
and there are 2 out of 4 such deterministic policies.

\end{proof}
From the proof it is clear that the result continues to hold even if the target policy depends on the full history.

The uniform mix of all deterministic policies selects one among all $\rA^\rS$ deterministic policies at the beginning where each of them can be chosen with probability $1/\rA^\rS$. Then, it uses the chosen deterministic policy in all time periods. A key difference between the two logging policies we consider is that the uniform policy randomly chooses an action every time the system reaches a state whereas the uniform mix of all deterministic policies randomly selects a deterministic policy at the beginning and then uses the deterministic policy afterwards, thus, a random selection happens only once. Now we prove a counterpart of Proposition~\ref{prop:ratbd} for the uniform mix of all deterministic policies. 

\begin{proposition}\label{prop:ratbd2}
When $\pi_{\log}$ is the uniform mixture of deterministic policies, for any $t\ge 0$, $(s,a)\in \cS \times \cA$ and  $\pi_{\trg}$ is any deterministic target policy, 
\begin{align}
\frac
{\nu_{\mu,t}^{\trg}(s,a)}
{\nu_{\mu,t}^{\log}(s,a)}
\le \rA^{\min(t+1,S)}\,.
\label{eq:ratbd}
\end{align}
\end{proposition}
\begin{proof}
Let $\Det$ be the set of stationary deterministic policies over $\cS$ and $\cA$. Fix an arbitrary pair $(s_t,a_t)\in \cS \times \cA$. We have
\begin{align*}
\nu_{\mu,t}^{\log}(s_t,a_t)&=\frac{1}{A^S}\sum_{\pi\in \Det} \nu_{\mu,t}^{\pi}(s_t,a_t)\\
 &=\frac{1}{A^S}\sum_{\pi\in \Det}\sum_{\substack{s_{0:t-1}\\a_{0:t-1}}} \mu(s_0)p(s_1|s_0,a_0) \dots p(s_t|s_{t-1},a_{t-1})\mathbf{1}(a_0=\pi(s_0), \ldots, a_t=\pi(s_t))\\
 &=\frac{1}{A^S}\sum_{\substack{s_{0:t-1}\\a_{0:t-1}}}\sum_{\pi\in \Det}\mu(s_0)p(s_1|s_0,a_0) \dots p(s_t|s_{t-1},a_{t-1})\mathbf{1}(a_0=\pi(s_0), \ldots, a_t=\pi(s_t))\\
 &=\frac{1}{A^S}\sum_{\substack{s_{0:t-1}\\a_{0:t-1}}}\mu(s_0)p(s_1|s_0,a_0) \dots p(s_t|s_{t-1},a_{t-1})\sum_{\pi\in \Det}\mathbf{1}(a_0=\pi(s_0), \ldots, a_t=\pi(s_t))\\
 &\ge \frac{1}{A^S}\sum_{\substack{s_{0:t-1}\\a_{0:t-1}}}\mu(s_0)p(s_1|s_0,a_0) \dots p(s_t|s_{t-1},a_{t-1})A^{S-t-1}\\
 &\ge \frac{1}{A^{t+1}}\nu_{\mu,t}^{\text{trg}}(s_t,a_t).
\end{align*}
Also, for $(s,a)\in \cS \times \cA$ we have
\begin{align*}
\nu_{\mu,t}^{\trg}(s,a) \le \sum_{\pi \in \Det} \nu_{\mu,t}^\pi(s,a) =\rA^{\rS} \frac{1}{\rA^{\rS}} \sum_{\pi \in \Det} \nu_{\mu,t}^\pi(s,a) = \rA^{\rS} \nu_{\mu,t}^{\log}(s,a),
\end{align*}
and this completes the proof. The inequality is tight when there is only one possible path $(s_0,a_0,s_1,a_1,\ldots,s_t,a_t)$ to $(s_t,a_t)$ in an MDP, the target policy is the deterministic policy taking the actions in the unique path, and the path does not repeat any state.  
\end{proof}


%

\section{Lower Bound Proofs}
\label{sec:lb-proofs}
Before these proofs, an equivalent form of $(\epsilon,\delta)$-soundness will be useful to consider.
Recall that $\cL$ is $(\epsilon,\delta)$-sound on instance $(M,G)$ if
\begin{align*}
\sP_{\cD \sim G}\left( v^{\cL(\cD)}(\mu) > v^*(\mu) - \varepsilon \right) > 1 - \delta\,,
\end{align*}
Now, $\sP_{\cD \sim G}\left( v^{\cL(\cD)}(\mu) > v^*(\mu) - \varepsilon \right) = 
1-\sP_{\cD \sim G}\left( v^{\cL(\cD)}(\mu) \le v^*(\mu) - \varepsilon \right)$.
Hence, $\cL$ is $(\epsilon,\delta)$-sound on instance $(M,G)$ if and only if
\begin{align*}
\sP_{\cD \sim G}\left( v^{\cL(\cD)}(\mu) \le v^*(\mu) - \varepsilon \right)<\delta\,.
\end{align*}
Finally, by reordering, this last display is equivalent to
\begin{align*}
\sP_{\cD \sim G}\left(v^*(\mu) -  v^{\cL(\cD)}(\mu) \ge \varepsilon  \right)<\delta\,.
\end{align*}
Thus, $\cL$ is \emph{not} $(\epsilon,\delta)$ sound on $(M,G)$ if 
\begin{align}
\sP_{\cD \sim G}\left(v^*(\mu) -  v^{\cL(\cD)}(\mu) \ge \varepsilon \right)\ge \delta\,.
\label{eq:noted}
\end{align}

\begin{figure}[t]
\begin{center}
\includegraphics[width=12cm]{lb_figure.pdf}
\end{center}
\caption{
Illustration of the MDPs used in the proof of \cref{thm:pi-induced-lb}. 
For $\varepsilon>0$, Let $H = H_{\gamma, 2\varepsilon}$. 
The state space consists of two parts $\cS=\{s_0,s_1,\dots,s_H\} \cup \{z\}$, where $s_0$ is the initial state, $z$ is a self-absorbing state. 
For any $s\in\cS$, let $a_s = \argmin_{a} \logpi(a|s)$ be the action with minimal chance of being selected by $\logpi$, and $\cA_{s} = \cA \setminus \{a_s\}$. 
The transitions and rewards are as follows: State $z$ is absorbing under any action.
For $i\in\{0, \dots, H-1\}$, at state $s_i$ under action $a_{s_i}$
the MDP transits to $s_{i+1}$, while it transits to $z$ under any other actions. From $s_H$, the next state is also $z$ under any action.
The rewards are deterministically zero for any state-action pair except when the state is $s_H$, and action $a_{S_H}$ is taken, when it is random with either a positive or negative mean.
}  
\label{fig:lb}
\end{figure}

We will need some basic concepts, definitions, and results from information theory. For two probability measures, $P$ and $Q$ over a common measurable space, we use $\KL(P,Q)$ to denote the relative entropy (or Kullback-Leibler divergence) of $P$ with respect to $Q$, which is infinite when $P$ is not absolutely continuous with respect to $Q$, and otherwise it is defined as $D(P||Q) = \int \log(\frac{dP}{dQ}) dP $, where $dP/dQ$ is the Radon-Nikodym derivative of $P$ with respect to $Q$. 
By abusing notation, we will use $P(X)$ to denote the probability distribution 
$P(X \in \cdot)$
of a random element $X$ under probability measure $P$.
For jointly distributed random elements $X$ and $Y$, we let $P(X|Y)$ denote the conditional distribution of $X$ given $Y$, $P(X \in \cdot | Y)$, which is $Y$-measurable. 
With this, the \emph{chain rule} for relative entropy states that 
\begin{align*}
\KL(P(X,Y),Q(X,Y))=\int \KL(P(X|Y),Q(X|Y)) dP + \KL(P(Y),Q(Y))\,,
\end{align*}
which, of course, extends to any number of jointly distributed random elements. 

We will also need the following result, which is given, for example, as Theorem~14.2 in the book of \citet{lattimore2020bandit}. 
\begin{lemma}[Bretagnolle--Huber inequality]
\label{lem:pinskerhp}
Let $P$ and $Q$ be probability measures on the same measurable space $(\Omega, \cF)$, and let $A \in \cF$ be an arbitrary event. Then,
\begin{align}\label{eq:pinskerhp}
P(A) + Q(A^c) \geq \frac{1}{2} \exp\left(-\KL(P, Q)\right)\,,
\end{align} 
where $A^c = \Omega \setminus A$ is the complement of $A$.
\end{lemma}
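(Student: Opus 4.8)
The plan is to prove \cref{eq:pinskerhp} by routing through the Hellinger affinity (Bhattacharyya coefficient) $\int\sqrt{pq}\,d\lambda$, which serves as a bridge between the left-hand side and $\exp(-\KL(P,Q))$. First I would dispose of the degenerate case: if $\KL(P,Q)=\infty$ then the right-hand side of \cref{eq:pinskerhp} is $0$, and since $P(A)+Q(A^c)\ge 0$ always, the inequality holds trivially. Hence assume $\KL(P,Q)<\infty$; by the definition of relative entropy recalled above, this forces $P\ll Q$. I would then fix the common dominating measure $\lambda=P+Q$ and let $p=dP/d\lambda$, $q=dQ/d\lambda$ be the corresponding densities (these exist since both $P$ and $Q$ are absolutely continuous with respect to $\lambda$).

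The first substantive step eliminates the dependence on the event $A$. Since $p\ge\min(p,q)$ and $q\ge\min(p,q)$ hold pointwise,
\begin{align*}
P(A)+Q(A^c)=\int_A p\,d\lambda+\int_{A^c} q\,d\lambda\ge\int\min(p,q)\,d\lambda\,,
\end{align*}
so it suffices to lower bound the $A$-free quantity $\int\min(p,q)\,d\lambda$ by $\tfrac12\exp(-\KL(P,Q))$.

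The core of the argument is a two-sided control of the affinity. On the one hand, writing $\sqrt{pq}=\sqrt{\min(p,q)}\,\sqrt{\max(p,q)}$ and applying Cauchy--Schwarz,
\begin{align*}
\Big(\int\sqrt{pq}\,d\lambda\Big)^2\le\int\min(p,q)\,d\lambda\cdot\int\max(p,q)\,d\lambda\le 2\int\min(p,q)\,d\lambda\,,
\end{align*}
where the last step uses $\max(p,q)=p+q-\min(p,q)$ together with $\int(p+q)\,d\lambda=2$ to conclude $\int\max(p,q)\,d\lambda\le 2$. On the other hand, Jensen's inequality for the convex map $\exp$, combined with $\EE_P[\log(q/p)]=-\KL(P,Q)$, gives
\begin{align*}
\int\sqrt{pq}\,d\lambda=\EE_P\Big[\exp\big(\tfrac12\log(q/p)\big)\Big]\ge\exp\big(-\tfrac12\KL(P,Q)\big)\,.
\end{align*}
Squaring this last display and chaining it with the two preceding ones yields $P(A)+Q(A^c)\ge\int\min(p,q)\,d\lambda\ge\tfrac12\exp(-\KL(P,Q))$, which is the claim.

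There is no deep obstacle here; the only genuine idea, as opposed to routine manipulation, is recognizing the affinity $\int\sqrt{pq}\,d\lambda$ as the correct intermediate object, precisely because it is simultaneously lower-boundable by the min-integral (via Cauchy--Schwarz and $\int\max\le 2$) and by $\exp(-\tfrac12\KL(P,Q))$ (via Jensen). The one measure-theoretic point to handle carefully is the Jensen step: the expectation is taken under $P$, and $P\ll Q$ (guaranteed by $\KL(P,Q)<\infty$) ensures $q/p>0$ holds $P$-almost everywhere, so $\log(q/p)$ and hence $\EE_P[\log(q/p)]=-\KL(P,Q)$ are well defined.
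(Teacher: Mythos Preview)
Your proof is correct and is essentially the standard argument for the Bretagnolle--Huber inequality (via the Hellinger affinity, Cauchy--Schwarz on $\sqrt{pq}=\sqrt{\min(p,q)}\sqrt{\max(p,q)}$, and Jensen for $\exp$). Note, however, that the paper does not actually prove this lemma: it is stated without proof and attributed to Theorem~14.2 of \citet{lattimore2020bandit}, so there is no ``paper's own proof'' to compare against---your write-up simply fills in what the paper outsources to the reference, and the argument you give is the same one found there.
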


\begin{proof}[Proof of Theorem~\ref{thm:pi-induced-lb}]
\revise{
We first prove the result when the logging policy $\logpi$ is a memoryless policy.
}  

We first consider the case where $\rS\geq H:=H_{\gamma, 2\varepsilon}+2$. 
We let $\{s_0,s_1,\dots,s_H,z\}$ be arbitrary, distinct states and choose $\mu$ to be the distribution that is concentrated at $s_0$.
\revise{Fix the logging policy $\logpi$ and the episode lengths $\mathbf{h}=(h_0,\dots,h_{m-1})$
 (we allow these to depend on $\mu$).}
We now define two MDPs, $M_1,M_{-1}\in \cM(S,A)$ (cf. \cref{{fig:lb}}). 
For any $s\in\cS$, let $a_s = \argmin_a \logpi(a|s)$ be the action with the minimal chance of being selected by $\logpi$. 
Note that $\logpi(a_s | s)\leq 1 / \rA$. 

The transition structure in the two MDPs are identical, the transitions are deterministic and all the rewards are also the same with the exception of one transition.
The details are as follows.
State $z$ is absorbing: For any action taken at $z$, the next state is $z$.
For $i<H$, $s_i$ is followed by $s_{i+1}$ when $a_{s_i}$ is taken,
while the next state is $z$ when any other action is taken at this state.
At $s_H$ under any action, the next state  is $z$.
The rewards are deterministically zero for any state-action pair except when the state is $s_H$ and action $a_{s_H}$ is taken at this state. In this case, the reward $R$ is drawn from a Gaussian with mean $\alpha\in \{-1,+1\}$ in MDP $M_\alpha$.

We will use $v_\alpha^\pi$, $v^*_\alpha$ and $\nu_{\mu, \alpha}$
to denote 
the value function of a policy $\pi$ on $M_\alpha$, 
the optimal value function on $M_\alpha$,
and the discounted occupancy measure on $M_\alpha$ with $\mu$ as the initial state distribution, respectively. Note that $v^*_1(s_0) = \gamma^H \ge \gamma^{\frac{\ln(1/(2\epsilon))}{\ln(1/\gamma)}}=2\epsilon$, where the first inequality is because $\gamma\le 1$ and $H\le \frac{\ln(1/(2\epsilon))}{\ln(1/\gamma)}$ by its definition. Note also that $v^*_{-1}(s_0) = 0$.

We now show that if the number of episodes $m$ is too small, then no algorithm will be sound both on $M_1$ and $M_{-1}$. 

For this fix an arbitrary BPO algorithm $\cL$.
Let the data collected by following the logging policy $\logpi$ be
$\cD = (S_i, A_i, R_i, S'_i)_{i=0}^{n-1}$. 
Let $\pi$ be the output of $\cL$.
Let $\sP_\alpha$ be the distribution 
over $(\cD, \pi)$ induced 
by using $\logpi$ on $M_\alpha$ with episode lengths $\mathbf{h}$ and $\mu$ and then running $\cL$ on $\cD$ to produce $\pi$. Note that both $\sP_1$ and $\sP_{-1}$ share the same measure space. Let $\EE_\alpha$ be the expectation operator for $\sP_\alpha$. 

Define the event $E = \{v_1^\pi(s_0)<\varepsilon\}$. Let $E^c$ be the complement of $E$.
Let $a \vee b$ denote the maximum of $a$ and $b$.
We first prove the following claim:

\noindent \underline{Claim:}
If 
\begin{align}
\sP_1(E)\vee \sP_{-1}(E^c)\ge \delta
\label{eq:eeclb}
\end{align}
 then $\cL$ is not $(\epsilon,\delta)$-sound.

\begin{proof}[Proof of the claim]

By \cref{eq:noted},  $\cL$ is not $(\epsilon,\delta)$-sound if 
\begin{align*}
\sP_1( v_1^*(s_0)-v_1^\pi(s_0)\ge \epsilon) \vee \sP_{-1}(v_{-1}^*(s_0)-v_{-1}^\pi(s_0)\ge\epsilon) \ge \delta\,.
\end{align*}
By $v_1^*(s_0)\ge 2\varepsilon$, we have
\begin{align*}
 \sP_1( v_1^*(s_0)-v_1^\pi(s_0)\ge \epsilon)
\ge \sP_1( v_1^\pi(s_0) \le \epsilon)
\ge \sP_1( v_1^\pi(s_0) < \epsilon)
=
\sP_1(E)\,.
\end{align*}
Similarly, by $v_{-1}^*(s_0)=0$, we have
\begin{align*}
 \sP_{-1}( v_{-1}^*(s_0)-v_{-1}^\pi(s_0)\ge \epsilon)
  = \sP_{-1}( v_{-1}^\pi(s_0) \le - \epsilon)
  \ge \sP_{-1}( v_1^\pi(s_0)\ge \epsilon ) = \sP_{-1}(E^c)\,,
\end{align*}
where the inequality follows because if $ v_1^\pi(s_0)\ge \epsilon$ holds then
since $v_1^\pi(s_0) = \langle \nu^\pi_{1}, r_1^\pi \rangle= \nu^\pi_{1}(s_{H},a_{s_{H}}) r_1^\pi(s_{H},a_{s_{H}}) =  \nu^\pi_{1}(s_{H},a_{s_{H}})$ and since
 the transitions in $M_1$ and $M_{-1}$ are same, we have $\nu^\pi_{-1}(s_{H},a_{s_{H}})=\nu^\pi_1(s_{H},a_{s_{H}}) \geq \varepsilon$ and therefore $v_{-1}^\pi(s_0) = -\nu^\pi_{-1}(s_{H},a_{s_{H}})  \leq -\varepsilon$.  
 
Putting things together, we get that 
\begin{align*}
\sP_1( v_1^*(s_0)-v_1^\pi(s_0)\ge \epsilon) \vee \sP_{-1}(v_{-1}^*(s_0)-v_{-1}^\pi(s_0)\ge\epsilon)
\ge
\sP_1(E) \vee \sP_{-1}(E^c)\ge \delta,
\end{align*} 
where the last inequality follows by our assumption.
\end{proof}
It remains to prove that \cref{eq:eeclb} holds.
For this, note that by the Bretagnolle-Huber inequality (\cref{lem:pinskerhp}) we have,
\begin{align}
\sP_1(E)\vee \sP_{-1}(E^c)\geq \frac{\sP_{1}(E) + \sP_{-1}(E^c)}{2} \geq \frac{1}{4}\exp(-\KL(\sP_1, \sP_{-1}))\, .
\label{eq:bh1}
\end{align}
It remains to upper bound $\KL(\sP_1, \sP_{-1})$.
Let $U_0 = S_0$, $U_1 = A_0$, $U_2 = R_0$, $U_3=S_0'$, $U_4=S_1$, $\dots$, $U_{4(n-1)}=S_{n-1}'$.
Further, for $0\le j \le 4n-1$ let $U_{0:j} = (U_0,\dots,U_j)$ and let $U_{0:-1}$ stand for a ``dummy'' (trivial) random element.
By the chain rule for relative entropy,%
\footnote{Here, we use a notation common in information theory, which uses $P(X)$ ($P(X|Y)$) to denote the distribution of $X$ induced by $P$ (the conditional distribution of $X$, given $Y$, induced by $P$, respectively).}
\begin{align*}
\KL(\sP_1, \sP_{-1})
& =
\EE_1[ \KL(\sP_1(\pi|U_{0:4(n-1)}), \sP_{-1}(\pi|U_{0:4(n-1)}))]\\
& +
\sum_{j=0}^{4(n-1)}
\EE_1 [\KL(\sP_1(U_j|U_{0:j-1}),\sP_{-1}(U_j|U_{0:j-1}))]\,.
\end{align*}
Note that, $\sP_1$-almost surely,
$\sP_1(\pi|U_{0:4(n-1)})=\sP_{-1}(\pi|U_{0:4(n-1)})$ since, by definition, $\cL$ assigns a fixed probability distribution over the policies to any possible dataset.
For $0\le j \le 4(n-1)$, let $D_j = \KL(\sP_1(U_j|U_{0:j-1}),\sP_{-1}(U_j|U_{0:j-1}))$.
Since the only difference between $M_1$ and $M_{-1}$ is 
in the reward distribution corresponding to taking action $a_{s_H}$ in state $s_H$,
unless $j=4i+2$ for some $i\in [n]$ and $S_i=s_H,A_i=a_{s_H}$,
we have $D_j=0$ $\sP_1$-almost surely. 
Further, when $j=4i+2$, 
$\sP_1$-almost surely 
we have $D_j = \sI\{S_i = s_{H},A_i=a_{s_{H}}\}{(1 - (-1))^2}/{2} = 2\sI\{S_i = s_{H},A_i=a_{s_{H}}\}$ by the formula for the relative entropy between $\cN(1,1)$ and $\cN(-1,1)$.
Therefore,
\begin{equation*}
\KL(\sP_1, \sP_{-1}) = 2 \EE_1 \left[  \sum_{i=0}^{n-1}  \sI\{S_i = s_{H},A_i=a_{s_{H}}\} \right] \le 2m \sP_1 ( S_{H} = s_{H},A_H=a_{s_{H}} ) \leq \frac{2m}{\rA^{H+1}}\, ,
\end{equation*}
where the first inequality follows from that, by the construction of $M_1$, $s_H$ can be visited only in the $H$th step of \emph{any} episode, the data in distinct episodes are identically distributed, and there are at most $m$ episodes. The second inequality follows because 
\begin{align*}
\MoveEqLeft \sP_1 ( S_{H} = s_{H},A_H=a_{s_{H}} ) 
= \sP_1 ( A_H=a_{s_{H}}|S_{H} = s_{H} ) \sP_1(S_{H} = s_{H}) \\
&= \sP_1 ( A_H=a_{s_{H}}|S_{H} = s_{H} ) \sP_1(A_{H-1}=a_{s_{H-1}},S_{H-1} = s_{H-1}) \\
&= \sP_1 ( A_H=a_{s_{H}}|S_{H} = s_{H} ) 
	  \sP_1(A_{H-1}=a_{s_{H-1}}|S_{H-1} = s_{H-1}) \dots
	  \sP_1(A_{0}=a_{s_{0}}|S_{0} = s_{0}) \\
& = \logpi(a_{s_0}|s_0) \dots \logpi(a_{s_H}|s_H) \le \frac{1}{\rA^{H+1}}\,,	  
\addeq\label{eq:p1-reaching-ub}
\end{align*}
where the last inequality follows by the choice of $a_{s_i}$, $i\in [H+1]$.
Plugging the upper bound on $\KL(\sP_1,\sP_{-1})$ into \cref{eq:bh1},
we get
that 
\begin{align*}
\sP_1(E)\vee \sP_{-1}(E^c)\geq \frac{1}{4}\exp(-2m \rA^{-(H+1)})
\end{align*}
which is larger than $\delta$ if $m\leq{ (\rA^{H+1}\ln\frac{1}{4\delta})}/{2}$. The result then follows by our previous claim.

To prove the result for $\rS< H_{\gamma, 2\varepsilon}+2$, we use the same construction as described above with $H_{\gamma, 2\varepsilon'} = \rS-2 < H_{\gamma, 2\varepsilon}$ for some $\varepsilon' \geq \varepsilon$. Then any learning algorithm $\cL$ needs at least $(\rA^{H_{\gamma, 2\varepsilon'}+1}\ln\frac{1}{4\delta})/{2}$ episodes to be $(\varepsilon', \delta)$-sound. To be $(\varepsilon, \delta)$-sound it needs at least the same amount of data. This finishes the proof.

\revise{
It remains to show  the proof for the case when the logging policy $\logpi$ is  a mixture policy. 
In particular, assume that $\logpi$ is the mixture of the $k$ policies $\logpi_1,\dots,\logpi_k$ with coefficients $p=(p_1,\dots,p_k)\in \Delta([k])$ where $k>0$ is a positive integer.
As before,
consider our MDP with states $\cS = \{s_0,\dots,S_H,z\}$ (cf. \cref{fig:lb}), but now the actions $a_{s_i}$ with $0\le i \le H-1$ are chosen inductively as follows:
We let $a_0 = \argmin_a \sum_j p_j \logpi_j(a|s_0)$.
Together with the action sequence, we simultaneously define $(q_i)$ and $(q_i^j)$ for $0\le i \le H-1$ and $1\le j \le k$.
The intention of these definitions is that 
$q_i^j = \sP^{\logpi_j}(S_i=s_i)$ and
$q_i = \sP^{\logpi}(S_i=s_i)$ should hold where $\sP^{\logpi}$ ($\sP^{\logpi_j}$) is the measure that results from interconnecting either of the two MDPs with $\logpi$ (respectively, with $\logpi_j$)
while the initial state is $s_0$ with probability one.

We let
$q_0=q_0^j=1$ and for $0<i\le H-1$ we let
$q_i = \sum_j p_j q_i^j$
and $q_i^j = q_{i-1}^j \logpi_j(a_{s_{i-1}}|s_{i-1})$,
while choosing $a_{s_i} = \argmin_a \sum_j p_j \logpi_j(a|s_i) q_i^j$.
Then, it follows that 
$a_{s_i}=\argmin_a \sP_1(A_i=a|S_i=s_i)$ provided that $\sP_1(S_i=s_i)>0$ and as such
$\sP_1(A_i=a_{s_i}|S_i=s_i)\leq 1 / \rA$. 
The proof then goes through as before, except that in
\cref{eq:p1-reaching-ub}, 
we directly use that  $\sP_1(A_i=a|S_i=s_i)\le 1/\rA$ for $0\le i \le H$. (The seemingly complex construction is needed because before choosing $a_{s_i}$, the MDP is not defined and as such so are $\sP^{\log}$ and also $\sP_{\pm 1}$.)
}

\end{proof}

\begin{proof}[Proof of Corollary~\ref{cor:warmstartdoesnothelp}]
The result directly follows from the lower bound construction in \cref{thm:pi-induced-lb}. 
\end{proof}

\begin{proof}[Proof of Theorem~\ref{thm:pi-induced-lb-fin}]
This proof is similar to the proof of Theorem~\ref{thm:pi-induced-lb}.
\revise{Assume again initially that $\logpi$ is memoryless.}
We first consider the case where $\rS\geq H+1$. We construct the same MDPs as in the proof of Theorem~\ref{thm:pi-induced-lb} except that the chain consists of $H$ states, that is, ending at $s_{H-1}$ and the hidden reward $R$ is at $(s_{H-1},a_{s_{H-1}})$. The logging policy $\logpi$ collects $m$ trajectories with length $H$ as the dataset $\cD = (S_i, A_i, R_i, S'_i)_{i=0}^{mH-1}$, where $S_0 = S_H=\dots=S_{(m-1)H}=s_0$. 
Now we consider two MDPs $M_{\alpha}\in\cM, \alpha \in\{2\varepsilon, -2\varepsilon\}$, where the reward $R\sim \cN(\alpha, 1)$ on $M_\alpha$.

We use the same notation as in the proof of Theorem~\ref{thm:pi-induced-lb}. Define the event $E = \{v_{2\varepsilon}^\pi(s_0)<\varepsilon\}$. Then, by following the same arguments we can show that $\cL$ is not $(\varepsilon, \delta)$-sound on $M_{2\varepsilon}$ if $\sP_{2\varepsilon}(E)\ge \delta$ and that $\cL$ is not $(\varepsilon, \delta)$-sound on $M_{-2\varepsilon}$ if $\sP_{-2\varepsilon}(E^c)\ge\delta$.

By the Bretagnolle–Huber inequality, we have
\begin{align*}
\max\{\sP_{2\varepsilon}(E), \sP_{-2\varepsilon}(E^c)\} \geq \frac{\sP_{2\varepsilon}(E) + \sP_{-2\varepsilon}(E^c)}{2} \geq \frac{1}{4}\exp(-\KL(\sP_{2\varepsilon}, \sP_{-2\varepsilon}))\, .
\end{align*}

Similarly as in the proof of Theorem~\ref{thm:pi-induced-lb}, we obtain
\begin{align*}
\KL(\sP_{2\varepsilon}, \sP_{-2\varepsilon}) & = 8\varepsilon^2 \EE_{2\varepsilon} \left[  \sum_{i=0}^{mH-1}  \sI\{S_i = s_{H-1},A_i=a_{s_{H-1}}\} \right]\\
&= 8m\varepsilon^2 \EE_{2\varepsilon} \left[  \sum_{i=0}^{H-1}  \sI\{S_i = s_{H-1},A_i=a_{s_{H-1}}\} \right]  \\
 & = 8m\varepsilon^2 \sP_{2\varepsilon} ( S_{H-1} = s_{H-1},A_{H-1}=a_{s_{H-1}} ) \leq \frac{8m\varepsilon^2}{\rA^H}\, ,
\end{align*}
where the second equality is obtained by the fact that the episodes are independently sampled. 
Combining the above together we have that if $m\leq\frac{ \rA^H\ln\frac{1}{4\delta}}{8\varepsilon^2}$, $\max\{\sP_{2\varepsilon}(E), \sP_{-2\varepsilon}(E^c)\}\geq \delta$, which means that $\cL$ is not $(\varepsilon, \delta)$-sound on either $M_{2\varepsilon}$ or $M_{-2\varepsilon}$. 

To prove the result for $\rS\leq H$, we use the same construction as described above with $H' = \rS-1 < H$. Then any learning algorithm $\cL$ needs at least $\frac{ \rA^{H'}\ln\frac{1}{4\delta}}{8\varepsilon^2}$ trajectories to be $(\varepsilon, \delta)$-sound. This finishes the proof.  

\revise{
Finally, if $\logpi$ is a mixture policy, we can reuse the same argument as in the proof of \cref{thm:pi-induced-lb} to construct the MDPs and obtain the same result as above.}

\end{proof}

\begin{proof}[Proof of Theorem~\ref{thm:pi-induced-lb-avg}]
We use MDPs similar to those in the proof of Theorem~\ref{thm:pi-induced-lb} but with some key differences. Let the state space consist of three parts $\cS = \{s_0, s_1, \dots, s_{H-1}\} \cup \{y\}\cup \{z\}$, where $H=\rS-2$. Consider $\mu$ concentrated on $s_0$. For any $s\in\cS$, let $a_s = \argmin_a \logpi(a|s)$. At $s_i$ for $i\in\{0, \dots, H-2\}$, it transits to $s_{i+1}$ by taking $a_{s_i}$ and transits to $z$ by taking any other actions, where $z$ is an absorbing state. At $s_{H-1}$, by taking any action it transits to $y$ with probability $p>0$ and goes back to $s_0$ with probability $1-p$. $y$ is also an absorbing state, but there is a reward $R$ for any action in $y$. The rewards are deterministically zero for any other state-action pairs. 

Now consider two such MDPs $M_{\alpha}\in\cM, \alpha \in\{2\varepsilon, -2\varepsilon\}$, where the reward $R\sim \cN(\alpha, 1)$ on $M_\alpha$. 
We keep using the same notation $v_\alpha$ and $\nu_{\mu, \alpha}$, the latter of which denotes the occupancy measure on $M_\alpha$ with $\mu$ as the initial state distribution. Also, we use the rest of notation in the proof of Theorem~\ref{thm:pi-induced-lb}. Recall that $\pi$ is the output policy of a learning algorithm $\cL$.   

Define the event $E = \{v_{2\varepsilon}^\pi(s_0)<\varepsilon\}$.

\noindent \underline{Claim:}
If 
\begin{align}
\sP_{2\varepsilon}(E)\vee \sP_{-2\varepsilon}(E^c)\ge \delta
\label{eq:eeclbThm3}
\end{align}
 then $\cL$ is not $(\epsilon,\delta)$-sound.
\begin{proof}[Proof of the claim]
By \cref{eq:noted},  $\cL$ is not $(\epsilon,\delta)$-sound if 
\begin{align*}
\sP_{2\varepsilon}( v_{2\varepsilon}^*(s_0)-v_{2\varepsilon}^\pi(s_0)\ge \epsilon) \vee \sP_{-2\varepsilon}(v_{-2\varepsilon}^*(s_0)-v_{-2\varepsilon}^\pi(s_0)\ge\epsilon) \ge \delta\,.
\end{align*}

By the definition of $M_{2\varepsilon}$, the optimal policy is choosing $a_{s_i}$ at $s_i$ for $i\in\{0, \dots, H-2\}$. We have $v_{2\varepsilon}^*(s_0)=2\varepsilon$, because $p$ is positive, and thus, the optimal policy reaches $y$ in finite steps with probability one. 
Thus, we have
\begin{align*}
 \sP_{2\varepsilon}( v_{2\varepsilon}^*(s_0)-v_{2\varepsilon}^\pi(s_0)\ge \epsilon)
= \sP_{2\varepsilon}( 2\varepsilon - v_{2\varepsilon}^\pi(s_0)\ge \epsilon)
\ge \sP_{2\varepsilon}( v_{2\varepsilon}^\pi(s_0) < \epsilon)
= \sP_{2\varepsilon}(E)\,.
\end{align*}
Similarly, by $v_{-2\varepsilon}^*(s_0)=0$, we have
\begin{align*}
 \sP_{-2\varepsilon}( v_{-2\varepsilon}^*(s_0)-v_{-2\varepsilon}^\pi(s_0)\ge \epsilon)
  = \sP_{-2\varepsilon}( v_{-2\varepsilon}^\pi(s_0) \le - \epsilon)
  \ge \sP_{-2\varepsilon}( v_{2\varepsilon}^\pi(s_0)\ge \epsilon ) = \sP_{-2\varepsilon}(E^c)\,,
\end{align*}
where the inequality follows because if $ v_{2\varepsilon}^\pi(s_0)\ge \epsilon$ holds then since $v_{2\varepsilon}^\pi(s_0) = \langle \nu^\pi_{2\varepsilon}, r_{2\varepsilon}^\pi \rangle=2\varepsilon\nu^\pi_{2\varepsilon}(y)$ and since
 the transitions in $M_{2\varepsilon}$ and $M_{-{2\varepsilon}}$ are same, we have $\nu^\pi_{-2\varepsilon}(y)=\nu^\pi_{2\varepsilon}(y) \geq 1/2$ and therefore $v_{-2\varepsilon}^\pi(s_0) = -2\varepsilon\nu^\pi_{-2\varepsilon}(y)  \leq -\varepsilon$.  

Putting things together, we get that 
\begin{align*}
\sP_{2\varepsilon}( v_{2\varepsilon}^*(s_0)-v_{2\varepsilon}^\pi(s_0)\ge \epsilon) \vee \sP_{-2\varepsilon}(v_{-2\varepsilon}^*(s_0)-v_{-2\varepsilon}^\pi(s_0)\ge\epsilon)
\ge
\sP_{2\varepsilon}(E) \vee \sP_{-2\varepsilon}(E^c)\ge \delta,
\end{align*} 
where the last inequality follows by our assumption.
\end{proof}

Following the same arguments in the proof of Theorem~\ref{thm:pi-induced-lb}, we have
\begin{align*}
\KL(\sP_{2\varepsilon}, \sP_{-2\varepsilon}) &= 8\varepsilon^2\EE_{2\varepsilon} \left[  \sum_{i=0}^{n-1}  \sI\{S_i = y\} \right] = 8\varepsilon^2\sum_{i=0}^{n-1}\sP_{2\varepsilon}\{S_i = y\}\\ 
&=8\varepsilon^2p\sum_{i=1}^{n-1}\sP_{2\varepsilon}\{S_{i-1} = s_{H-1}\}\leq  \frac{8\varepsilon^2np}{\rA^{H-1}}\, .
\end{align*}
Combining the above together and using the Bretagnolle–Huber inequality (\cref{lem:pinskerhp}) as we did in the proof of Theorem~\ref{thm:pi-induced-lb}, we have that if $n\leq \frac{ \rA^{H-1}\ln\frac{1}{4\delta}}{8\varepsilon^2p}$, then $\cL$ is not $(\varepsilon, \delta)$-sound on either $M_{2\varepsilon}$ or $M_{-2\varepsilon}$. We obtain the result by sending $p$ to zero from the right hand side.  
\end{proof}

For the proof of \cref{thm:pi-free-lb}, we will need some results on the relative entropy between Bernoulli distributions, which we present now.
\newcommand{\Ber}{\text{Ber}}
Let $\Ber(p)$ denote the Bernoulli distribution with parameter $p\in [0,1]$. 
As it is well known (and not hard to see from the definition),
\begin{align*}
D(\Ber(p),\Ber(q)) = d(p,q)
\end{align*}
where $d(p,q)$ is the so-called \emph{binary relative entropy function}, which is defined as
\begin{align*}
d(p,q) = p\log(p/q) + (1-p) \log( (1-p)/(1-q))\,.
\end{align*}

\begin{proposition}
\label{prop:ber}
For $p,q\in (0,1)$, defining $p^*$ to be $p$ or $q$ depending on which is further away from $1/2$,
\begin{align}
d(p,q) \le \frac{(p-q)^2}{2p^*(1-p^*)}\,.
\label{eq:dpq}
\end{align}
\end{proposition}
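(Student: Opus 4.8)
The plan is to obtain \eqref{eq:dpq} from a second-order Taylor expansion of $d(\cdot,q)$ around $p=q$, together with the fact that $x\mapsto x(1-x)$ decreases as $x$ moves away from $\tfrac12$.

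First I would dispose of the trivial case $p=q$, where both sides of \eqref{eq:dpq} are $0$, and assume $p\neq q$. Fixing $q\in(0,1)$, set $g(p):=d(p,q)=p\log(p/q)+(1-p)\log\bigl((1-p)/(1-q)\bigr)$ for $p\in(0,1)$. A direct differentiation gives $g(q)=0$, $g'(p)=\log\frac{p(1-q)}{q(1-p)}$ (so $g'(q)=0$), and $g''(p)=\frac1p+\frac1{1-p}=\frac1{p(1-p)}>0$. By Taylor's theorem with the Lagrange remainder there is an $\eta$ strictly between $p$ and $q$ such that $g(p)=g(q)+g'(q)(p-q)+\tfrac12 g''(\eta)(p-q)^2$, and since the first two terms vanish this is the identity
\[
d(p,q)=\frac{(p-q)^2}{2\,\eta(1-\eta)}\,.
\]

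It then remains to show $\eta(1-\eta)\ge p^*(1-p^*)$. Writing $x(1-x)=\tfrac14-(x-\tfrac12)^2$, this amounts to $|\eta-\tfrac12|\le|p^*-\tfrac12|$. Now $\eta$ lies in the closed interval with endpoints $p$ and $q$, so $|\eta-\tfrac12|\le\max(|p-\tfrac12|,|q-\tfrac12|)$; I would check this separately in the case that $p$ and $q$ are on the same side of $\tfrac12$ (then $\eta$ is too, and its distance to $\tfrac12$ is at most that of the farther endpoint) and in the case that they straddle $\tfrac12$ (then $\eta-\tfrac12$ lies between $p-\tfrac12<0$ and $q-\tfrac12>0$, so its absolute value is bounded by the larger of the two endpoint distances). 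By the definition of $p^*$, the right-hand side equals $|p^*-\tfrac12|$, which gives $\eta(1-\eta)\ge p^*(1-p^*)$; substituting into the displayed identity yields \eqref{eq:dpq}.

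There is no real obstacle here: the only slightly delicate point is the last comparison $|\eta-\tfrac12|\le|p^*-\tfrac12|$, which has to be verified uniformly for both configurations of $p,q$ relative to $\tfrac12$, but in each configuration it is immediate from the fact that $\eta$ is between $p$ and $q$.
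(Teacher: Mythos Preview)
Your proof is correct and, at its core, follows the same argument as the paper: both establish the exact identity $d(p,q)=\dfrac{(p-q)^2}{2z(1-z)}$ for some $z$ between $p$ and $q$, and then bound the denominator via $z(1-z)\ge p^*(1-p^*)$ using that $x\mapsto x(1-x)$ decreases as $x$ moves away from $1/2$.

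The only difference is packaging. The paper obtains the identity by viewing $d(p,q)$ as the Bregman divergence of the unnormalized negentropy on $[0,\infty)^2$ and invoking a mean-value result for Bregman divergences (Theorem~26.12 of \citet{lattimore2020bandit}), which produces a point $z_1=\alpha p+(1-\alpha)q$ on the segment. You instead apply the one-variable Taylor theorem with Lagrange remainder directly to $p\mapsto d(p,q)$, computing $g''(p)=1/(p(1-p))$ by hand. Your route is more elementary and fully self-contained, avoiding the external reference; the paper's route is slightly more conceptual but relies on an outside lemma. Either way the remaining step---comparing $|\eta-\tfrac12|$ to $|p^*-\tfrac12|$---is identical in both arguments.
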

\begin{proof}
Let $R$ be the unnormalized negentropy over $[0,\infty)^2$. Then, by Theorem 26.12 of the book of \citet{lattimore2020bandit}, for any $x,y\in (0,\infty)^2$,
\begin{align*}
D_R(x,y) = \frac{1}{2}\| x- y\|_{\nabla R(z)}^2
\end{align*}
for some $z$ on the line segment connecting $x$ to $y$.
We have $R(z) = z_1 \log(z_1) + z_2 \log(z_2) - z_1 - z_2$. Hence, $\nabla R(z) = [\log(z_1),\log(z_2)]^\top$ and $\nabla R(z) = \text{diag}(1/z_1,1/z_2)$, both defined for $z\in (0,\infty)^2$.
Thus,
\begin{align*}
D_R(x,y) = \frac{(x_1-y_1)^2}{2 z_1} + \frac{(x_2-y_2)^2}{2 z_2}\,.
\end{align*}
Now choosing $x=(p,1-p)$, $y=(q,1-q)$, we see that $x,y\in (0,\infty)^2$ if $p,q\in (0,1)$.
In this case, with some $\alpha\in [0,1]$,
 $z = \alpha x + (1-\alpha) y 
= (\alpha p + (1-\alpha) q,  \alpha(1-p)+(1-\alpha)(1-q))^\top
= (\alpha p + (1-\alpha) q,  1-(\alpha p+(1-\alpha)q) )^\top$.
Hence, $z_2 = 1-z_1$ and
\begin{align*}
d(p,q) = \frac{(p-q)^2}{2z_1} + \frac{(p-q)^2}{2(1-z_1)} = \frac{(p-q)^2}{2 z_1(1-z_1)}\,.
\end{align*}
Now, $z_1(1-z_1)\ge p^*(1-p^*)$ (the function $z\mapsto z(1-z)$ has a maximum at $z=1/2$ and is decreasing on ``either side'' of the line $z=1/2$).
Putting things together, we get 
\begin{align*}
d(p,q) = \frac{(p-q)^2}{2 z_1(1-z_1)} \le \frac{(p-q)^2}{2 p^*(1-p^*)}\,.
\end{align*}
\end{proof}

Now we re-state Theorem~\ref{thm:pi-free-lb} and prove it. 
\begin{theorem}[Restatement of Theorem~\ref{thm:pi-free-lb}]
\label{thm:pi-free-lb-new}
Fix any $\gamma_0>0$.
Then, there exist some constants $c_0,c_1>0$ such that for any
$\gamma\in [\gamma_0,1)$,
any positive integers $S$ and $A$, 
$\delta\in(0,1)$, and $0<\varepsilon\le c_0/(1-\gamma)$,
 the sample size $n$ needed by
any $(\varepsilon,\delta)$-sound algorithm that produces as output a memoryless policy and works with $SA$-sampling for MDPs with $\rS$ states and $\rA$ actions under the $\gamma$-discounted expected reward criterion must be so that 
is at least $c_1 \frac{\rS\rA\ln (1 / (4\delta))}{\varepsilon^2 (1-\gamma)^3}$.
\end{theorem}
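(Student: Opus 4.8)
The plan is to adapt the generative-model lower bound of \citet{azar2013minimax}, re-organised around Le Cam's two-point method (\cref{lem:pinskerhp}) and the elementary binary-divergence bound \cref{prop:ber}, while exploiting a feature special to $SA$-sampling: because the logging distribution $\mu_{\log}$ is fixed in advance, the visit counts $N(s,a)$ are $\mathrm{Binomial}(n,\mu_{\log}(s,a))$ and hence \emph{do not depend on the MDP instance}; in particular $\sum_{(s,a)}\mathbb{E}[N(s,a)]=n$. This is what forces the logger to spread its budget thinly.

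\textbf{Hard instances.} I would take a family of MDPs on state set $\{s_1,\dots,s_{\rS-2}\}\cup\{g,b\}$ (padding with irrelevant states if convenient), $g$ absorbing with reward $1$, $b$ absorbing with reward $0$, $\mu$ uniform on $\{s_1,\dots,s_{\rS-2}\}$, and each $s_i$ a ``slow self-loop bandit'': for a parameter $\theta\in[\rA]^{\rS-2}$, taking action $a$ at $s_i$ moves to $g$ with probability $p$ if $a=\theta_i$ and with probability $p-\Delta$ otherwise, and self-loops at $s_i$ otherwise; all rewards off $g$ are $0$. A short computation gives $v^\pi(s_i)$ increasing in the transition mass the policy puts toward $g$, with value gap between the optimal action $\theta_i$ and any other action at $s_i$ equal to $g_0:=\Theta\!\big(\tfrac{\gamma\Delta}{(1-\gamma)^2}\big)$. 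Choosing $p\asymp 1-\gamma$ and $\Delta\asymp\varepsilon(1-\gamma)^2$ makes $g_0=\Theta(\varepsilon)$ while keeping the per-sample divergence at a gadget pair $d(p,p-\Delta)=O\!\big(\tfrac{\Delta^2}{p}\big)=O(\varepsilon^2(1-\gamma)^3)$ by \cref{prop:ber}; the hypotheses $\gamma\ge\gamma_0$ and $\varepsilon\le c_0/(1-\gamma)$ are precisely what keep $p,p-\Delta\in(0,1)$ and $g_0\asymp\Delta/(1-\gamma)^2$ for suitable $c_0$. Since the returned policy is memoryless, $v^*(\mu)-v^\pi(\mu)=\tfrac{g_0}{\rS-2}\sum_i\big(1-\pi(\theta_i\mid s_i)\big)$, so after scaling $g_0=8\varepsilon$, $\varepsilon$-optimality at $\mu$ forces $\pi$ to place more than $3/4$ of its mass on $\theta_i$ at all but at most half of the states; hence an $(\varepsilon,\delta)$-sound $\cL$ must, with probability $\ge 1-\delta$, recover $\theta_i$ at $\ge(\rS-2)/2$ of the states simultaneously.

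\textbf{Lower bound via Le Cam and counting.} Knowing $\mu_{\log}$, I plant the parameters adversarially: for each $i$ let $a_i^{(1)},a_i^{(2)}$ be the two least-frequent actions at $s_i$ under $\mu_{\log}$, so that $\mu_{\log}(s_i,a_i^{(1)})+\mu_{\log}(s_i,a_i^{(2)})\le \tfrac{2}{\rA}\sum_a\mu_{\log}(s_i,a)$; summing over $i$ and using $\sum_{s,a}\mu_{\log}(s,a)\le 1$ bounds the total ``crucial mass'' by $2/\rA$. Fixing $i$ and comparing the parameter with $\theta_i=a_i^{(1)}$ to its flip with $\theta_i=a_i^{(2)}$ (other coordinates held at under-sampled actions), the two induced data laws differ only through the transitions at $(s_i,a_i^{(1)})$ and $(s_i,a_i^{(2)})$, so by the chain rule for relative entropy and \cref{prop:ber} their divergence is at most $n\big(\mu_{\log}(s_i,a_i^{(1)})+\mu_{\log}(s_i,a_i^{(2)})\big)\cdot O(\varepsilon^2(1-\gamma)^3)$. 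Call $i$ \emph{starved} if this is $\le\tfrac12\ln\tfrac{1}{4\delta}$; by the crucial-mass bound and a pigeonhole/Markov step, $n< c_1\,\tfrac{\rS\rA\ln(1/(4\delta))}{\varepsilon^2(1-\gamma)^3}$ forces at least half of the states to be starved. On a starved state, \cref{lem:pinskerhp} applied to the flip pair shows that no rule can recover $\theta_i$ under both instances with probability exceeding $1-\delta$; aggregating these coordinatewise obstructions contradicts the requirement that $\cL$ recover $\theta_i$ at $\ge(\rS-2)/2$ states with probability $\ge1-\delta$ for the worst $\theta$, giving $n\ge c_1\,\tfrac{\rS\rA\ln(1/(4\delta))}{\varepsilon^2(1-\gamma)^3}$.

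\textbf{Main obstacle.} The delicate point — exactly what requires care beyond the bare two-point method — is the aggregation of the per-state obstructions into a statement about the \emph{averaged} value criterion: flipping one coordinate perturbs $v^\pi(\mu)$ only by $g_0/\rS$, so one cannot conclude from a single two-point comparison, and must combine the $\Omega(\rS)$ starved coordinates (choosing the global $\theta$ by the probabilistic method, or via an Assouad/Fano hybrid that retains the exponential $e^{-\mathrm{KL}}$ dependence responsible for the $\ln(1/\delta)$ factor) while carrying the instance-independence of the counts through the argument. The remaining parameter bookkeeping — checking $g_0=\Theta(\varepsilon)$, $d(p,p-\Delta)=O(\varepsilon^2(1-\gamma)^3)$, and validity of all probabilities under $\gamma\ge\gamma_0$ and $\varepsilon\le c_0/(1-\gamma)$ — is routine once the constants are fixed.
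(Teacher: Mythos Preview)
Your approach differs substantially from the paper's, and the difference is instructive.

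The paper uses a \emph{single} two-point comparison, not a multi-gadget Assouad-type argument. It takes $\mu$ to be a point mass at a state $s_0$ that transitions deterministically to a distinguished state $s'$; the pair $(s',a')$ is chosen as the \emph{globally} least-sampled state--action pair under $\mu_{\log}$, so $\mu_{\log}(s',a')\le 1/(\rS\rA)$ by pigeonhole. Two MDPs $M_0,M_1$ differ only in the self-loop probability at $(s',a')$ (all other actions at $s'$ use an intermediate probability $\bar p$), so the optimal action at $s'$ flips between them with value gap $\ge 2\varepsilon$ at $\mu$. The event $E=\{\pi(a'\mid s')\ge 1/2\}$ separates the instances, \cref{lem:pinskerhp} applies directly, and
\[
\KL(\sP_0,\sP_1)=n\,\mu_{\log}(s',a')\,d(p_0,p_1)\le \frac{n}{\rS\rA}\cdot O\!\big(\varepsilon^2(1-\gamma)^3\big)
\]
yields the bound in one stroke. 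The $\rS\rA$ factor comes entirely from the single minimum-mass pair; no aggregation across states is needed.

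Your construction instead distributes $\mu$ uniformly over $\Theta(\rS)$ gadgets, so a single coordinate flip perturbs $v^\pi(\mu)$ only by $\Theta(\varepsilon/\rS)$ --- exactly the difficulty you flag. The aggregation you sketch (``probabilistic method, or an Assouad/Fano hybrid that retains the exponential $e^{-\mathrm{KL}}$ dependence'') is the real gap: standard Assouad averages over coordinate flips and yields an \emph{expected}-loss bound, which does not produce the $\ln(1/\delta)$ high-probability factor; Fano gives a log-cardinality term, again not $\ln(1/\delta)$. Exhibiting a single instance on which the algorithm fails at $\Omega(\rS)$ starved coordinates \emph{simultaneously} with probability $\ge\delta$ does not follow from the per-coordinate Le Cam bounds you have, because those bounds live on different instance pairs. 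The paper sidesteps this entirely by concentrating $\mu$ so that the whole value hinges on the action at one state. The cleanest repair of your route is to adopt that same concentration --- put $\mu$ on one state, pick the single pair of smallest $\mu_{\log}$-mass, and run the two-point argument --- at which point the $\Theta(\rS)$ gadgets become superfluous.
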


\begin{proof}[Proof of \cref{thm:pi-free-lb-new}]
The proof also uses Le Cam's method, just like \cref{thm:pi-induced-lb}.
At the heart of the proof is a gadget with a self-looping state which was introduced by \citet{azar2013minimax} to give a lower bound on the sample complexity of estimating the optimal value function in the simulation setting where the estimate's error is measured with its worst-case error.

\begin{figure}[t]
\begin{center}
\includegraphics[width=8cm]{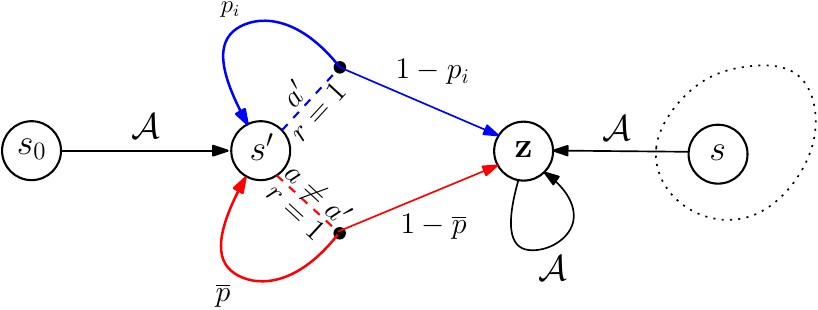}
\end{center}
\caption{
Illustration of the MDPs used in the proof of \cref{thm:pi-free-lb}. 
The initial distribution $\mu$ concentrates on state $\{s_0\}$. The pair $(s',a')$ is the one where $\mu_{\log}$ takes on the smallest value (which is below $1/(\rS \rA)$) and without loss of generality $s'\ne s_0$ and taking any action in $s_0$ makes the next state $s'$.
We have $p_0< \bar p < p_1$, all in the $[1/2,1)$ interval. In MDP $M_i$ with $i\in \{0,1\}$, the probability of transitioning under action $a'$ from $s'$ to $z$, an absorbing state, is $p_i$, while with probability $1-p_i$, the next state is $s'$. All other actions use probability $\bar p$ at this state. All other states under any action lead to $z$.
The rewards are deterministically zero except at state $s'$, when all actions yield a reward of one, regardless of the identity of the next state.
}  
\label{fig:salb}
\end{figure}

The idea of the proof is illustrated by \cref{fig:salb}.
Fix an initial state distribution $\mu$ concentrated on an arbitrary state $s_0\in \cS$. 
Let $\mu_{\log}$ be the logging distribution chosen based on $\mu$ and 
let $(s',a')$ be any state-action pair that has the minimum sampling probability under $\mu_{\text{log}}$. Note that $\mu_{\text{log}}(s',a')\le 1/(\rS\rA)$. 
Assume that $s'\ne s_0$. As we shall see by the end of the proof, there is no loss of generality in making this assumption (when $s'=s_0$, the lower bound would be larger).

We construct two MDPs as follows. 
The reward structures of the two MDPs are completely identical and the transition structures are also identical except for when action $a'$ is taken at state $s'$.
In particular, in both MDPs,
the rewards are identically zero except at state $s'$, where for any action the reward incurred is one.
The transition structures are as follows: Let $p_0<\bar{p}<p_1$ be in $(0,1)$, to be determined later.
At $s_0$, by taking any action the system transits to $s'$ deterministically. 
At $s'$, for any $a\in\cA\setminus\{a'\}$, taking action $a$ leads to $s'$ as the next state with probability $\bar{p}$
 and to $z$ with probability $1-\bar{p}$, where $z$ is an absorbing state. 
The transition under $a'$ at $s'$ is similar, except that in $M_i$ ($i\in \{0,1\}$), the probability that the next state is $s'$ is $p_i$ (and the probability that the next state is $z$ is $1-p_i$).
 At any state $s\in\cS\setminus\{s_0,s'\}$, taking any action moves the system to $z$ deterministically. 
The optimal policy at $s'$ in $M_1$ is to pull the action $a'$, while in $M_0$ all the other actions are optimal. 
It is easy to see that in any of these MDPs, for $a\in\cA$, 
\begin{align*}
q^*(s',a) = \frac{1}{ 1 - \gamma P(s'|s',a)}\ .
\end{align*}

Now we select $p_0,\bar{p}$, and $p_1$. Let $b$ be a constant such that $1<b<\frac{1-\gamma/2}{1-\gamma}$. We will choose a specific value for $b$ at the end of the proof. The values of $p_0,\tilde p$ and $p_1$ will depend on $b$. In particular, we choose 
$\tilde{p}\in (1/2,1)$ so that
\begin{equation*}
    b=\frac{1-\gamma \tilde{p}}{1-\gamma},
\end{equation*}
while we set $p_0=\tilde{p}$.
Then $p_0=(1-b+\gamma b)/\gamma$. Note that $p_0>1/2$ by its choice. Let $f(p) = \frac{\gamma}{ 1 - \gamma p}$. 
Note that for any deterministic policy $\pi$, $v^\pi(\mu)=f( P(s'|s',\pi(s')) )$
and also $v^*(\mu)=f( \bar p)$ in MDP $M_0$ and $v^*(\mu)=f(p_1)$ in MDP $M_1$.

By Taylor's theorem, for some $p\in[p_0, p_1]$, we have
\begin{align*}
f(p_1) = f(p_0) + f'(p)(p_1 - p_0) \geq f(p_0) + f'(p_0)(p_1 - p_0)\, ,
\end{align*}
where the inequality follows by $p_1> p_0$ and the fact that $f'$ is increasing. Thus, if $p_1 - p_0 \geq 4\varepsilon / f'(p_0)$, we have $f(p_1)  \geq f(p_0) + 4\varepsilon$. 
Because of the choice of $p_0$,
\begin{align*}
f'(p_0) = \frac{\gamma^2}{(1-\gamma p_0)^2}=\frac{\gamma^2}{(1-\gamma)^2b^2}\, .
\end{align*}
We let $p_1 = p_0 + 4\varepsilon / f'(p_0)$. Then, we have $p_1< 1$ given that $\varepsilon \leq c_0/(1-\gamma):=\frac{\gamma(b-1)}{8(1-\gamma)b^2}$, because 
\begin{align}
p_1 - 1 &= p_0 + 4\varepsilon / f'(p_0) - 1 = \frac{1-b+\gamma b}{\gamma} + \frac{4(1-\gamma)^2\varepsilon b^2}{\gamma^2} - 1\notag\\
&= \frac{4(1-\gamma)^2\varepsilon b^2+\gamma(\gamma-1)(b-1)}{\gamma^2}\leq \frac{(\gamma-1)(b-1)}{2\gamma}< 0\, ,\label{eq:Thm4proof1}
\end{align} 
where the first inequality is due to the choice of $\varepsilon$. Lastly, we set $\bar{p}$ so that $f(\bar{p})=[f(p_0)+f(p_1)]/2$ (such $\bar{p}$ uniquely exists because $f$ is increasing and continuous). Note that $f(p_1)-f(\bar{p})\geq 2\varepsilon$ and $f(\bar{p})-f(p_0)\geq 2\varepsilon$. 

Let $\sP_0$ and $\sP_1$ be the joint probability distribution on the data and the output policy of any given learning algorithm $\cL$, induced by $\mu$, $\mu_{\text{log}}$, $\cL$, and the two MDPs $M_0$ and $M_1$, respectively. For any algorithm $\cL$, let $E=\{\pi(a'|s')\ge 1/2\}$, where $\pi$ is the output of $\cL$. 

If $E$ is true, in $M_0$,
\begin{align*}
    v^\pi(\mu)=\pi(a'|s')f(p_0)+(1-\pi(a'|s'))f(\bar{p})\le \frac{f(p_0)+f(\bar{p})}{2}\le \frac{(f(\bar{p})-2\varepsilon)+f(\bar{p})}{2}=f(\bar{p})-\varepsilon\ .
\end{align*}
Thus, $\cL$ is not $(\varepsilon,\delta)$-sound for $M_0$ if $\sP_0(E)\geq \delta$.
If $E^c$ holds, in $M_1$,
\begin{align*}
    v^\pi(\mu)=\pi(a'|s')f(p_1)+(1-\pi(a'|s'))f(\bar{p})\le \frac{f(p_1)+f(\bar{p})}{2}\le \frac{f(p_1)+(f(p_1)-2\varepsilon)}{2}=f(p_1)-\varepsilon\ .
\end{align*}
Therefore, if $\sP_1(E^c)\geq \delta$, then $\cL$ is not $(\varepsilon,\delta)$-sound for $M_1$. 

By the Bretagnolle-Huber inequality  (\cref{lem:pinskerhp}) we have,
\begin{equation*}
\max\{ \sP_0(E),  \sP_1 (E^c)\} \geq \frac{\sP_0(E) + \sP_1 (E^c)}{2} \geq \frac{1}{4} \exp\left( - \KL(\sP_0 || \sP_1) \right)\,.
\end{equation*}

Recall that $n$ is the number of samples. Since $M_0$ and $M_1$ differ only in the state transition from $(s',a')$, by the chain rule of relative entropy,
with a reasoning similar to that used
in the proof of Theorem~\ref{thm:pi-induced-lb},
we derive
\begin{align*}
\KL(\sP_0, \sP_1) & = n  \sP_0(S_i=s',A_i=a')
\KL(\text{Ber}(p_0), \text{Ber}(p_1))\leq \frac{n}{\rS\rA}\cdot \frac{(p_0 - p_1)^2}{2p_1(1-p_1)} \\
&= \frac{n}{\rS\rA}\cdot \frac{16\varepsilon^2 (1-\gamma)^4b^4}{2\gamma^4p_1(1-p_1)} \\
& < \frac{n}{\rS\rA}\cdot \frac{16\varepsilon^2(1-\gamma)^3b^4 }{\gamma^3(b-1)p_0}\ ,
\end{align*}
where the first inequality is due to \cref{prop:ber} and the second inequality is due to \cref{eq:Thm4proof1} and the fact that $p_0<p_1$. 

Now fix $\gamma_0\in (0,1)$ and 
let $\gamma\geq \gamma_0$
and choose 
$b=0.5(1+\tfrac{1-\gamma_0/2}{1-\gamma_0}) \in (1,\tfrac{1-\gamma/2}{1-\gamma})$.
Then, combining the above together and reordering show that if $n\leq c_1 \frac{\rS\rA\ln (1 / (4\delta))}{\varepsilon^2 (1-\gamma)^3}$ where $c_1=\frac{\gamma_0^3(b-1)p_0}{16b^4}$, we can guarantee that $\cL$ is not $(\varepsilon, \delta)$-sound on either $M_0$ or $M_1$, concluding the proof.
\end{proof}

\section{Upper bound proofs}

We start with some extra notation. 
We identify the transition function $P$ as an $\rS\rA\times\rS$ matrix, whose entries $P_{sa, s'}$ specify the conditional probability of transitioning to state $s'$ starting from state $s$ and taking action $a$, 
and the reward function $r$ as an $\rS\rA\times1$ reward vector. 
We use $\| x \|_1$ to denote the 1-norm $\sum_i |x_i|$ of $x\in\RR^n$.

%
Recall first that we defined 
$P^\pi$ to be the transition matrix on state-action pairs induced by the policy $\pi$. 
Define the $H$-step action-value function for $H > 0$ by
\begin{align*}
q^\pi_H = \sum_{h=0}^{H-1} (\gamma P^\pi)^h r\, .
\end{align*}
We let $v^\pi_H$ denote the $H$-step state-value function. 
In what follows we will need quantities for $\hat{M}$, which, in general could be any MDP 
that differs from $M$ from only its transition kernel.
Quantities related to $\hat{M}$ receive a ``hat''. For example, we use $\hat{P}$ for the transition kernel of $\hat{M}$, 
$\hat{P}^\pi$  for the state-action transition matrix induced by a policy $\pi$ and $\hat{P}$, etc.

In subsequent proofs, we will need the following lemma, which gives two decompositions of the difference between the action-value functions on two MDPs, $M$ and $\hat{M}$:
\begin{lemma}
For any policy $\pi$, transition model $\hat P$, and $H>0$, 
\begin{align}
q^\pi_H - \hat{q}^\pi_H &= \gamma \sum_{h=0}^{H-1} (\gamma P^\pi)^h (P - \hat{P} ) \hat{v}^\pi_{H-h-1}\, , \label{eq:ed1}\\
\hat{q}^\pi_H - {q}^\pi_H &= \gamma \sum_{h=0}^{H-1} (\gamma \hat{P}^\pi)^h (\hat{P} - {P} ) {v}^\pi_{H-h-1}\, .  \label{eq:ed2}
\end{align}
\label{lem:H-step-error-decompose}
\end{lemma}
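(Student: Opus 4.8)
The plan is to turn the difference of the two $H$-step action-value functions into a linear recursion in $H$ and then unroll it. The two ingredients are the finite-horizon Bellman identities
\[
q^\pi_H = r + \gamma P v^\pi_{H-1}\,,\qquad \hat{q}^\pi_H = r + \gamma \hat{P}\, \hat{v}^\pi_{H-1}\,,
\]
which hold for every $H\ge 1$ under the conventions $q^\pi_0=\hat{q}^\pi_0=0$ and $v^\pi_0=\hat{v}^\pi_0=0$ (both are immediate from $q^\pi_H=\sum_{h=0}^{H-1}(\gamma P^\pi)^h r$ and the analogous formula for $\hat q$), together with the elementary fact that first averaging an $\rS\rA$-vector over actions according to $\pi$ and then multiplying by $P$ is the same as multiplying it by $P^\pi$; in particular $P\,v^\pi_{k}-P\,\hat{v}^\pi_{k}=P^\pi(q^\pi_{k}-\hat{q}^\pi_{k})$ for every $k\ge 0$, since $v^\pi_k$ and $\hat{v}^\pi_k$ are the $\pi$-averages of $q^\pi_k$ and $\hat{q}^\pi_k$.

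First I would subtract the two Bellman identities and add and subtract $\gamma P\,\hat{v}^\pi_{H-1}$:
\begin{align*}
q^\pi_H-\hat{q}^\pi_H
&= \gamma P\, v^\pi_{H-1}-\gamma \hat{P}\,\hat{v}^\pi_{H-1}
 = \gamma (P-\hat{P})\,\hat{v}^\pi_{H-1}+\gamma P\bigl(v^\pi_{H-1}-\hat{v}^\pi_{H-1}\bigr)\\
&= \gamma (P-\hat{P})\,\hat{v}^\pi_{H-1}+\gamma P^\pi\bigl(q^\pi_{H-1}-\hat{q}^\pi_{H-1}\bigr)\,.
\end{align*}
This is a linear recursion with forcing term $\gamma(P-\hat{P})\hat{v}^\pi_{H-1}$ and base case $q^\pi_0-\hat{q}^\pi_0=0$. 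Unrolling it --- formally by induction on $H$, pushing the accumulated sum through one more factor $\gamma P^\pi$, re-indexing, and picking up the new forcing term as the $h=0$ summand --- gives
\[
q^\pi_H-\hat{q}^\pi_H=\gamma\sum_{h=0}^{H-1}(\gamma P^\pi)^h(P-\hat{P})\,\hat{v}^\pi_{H-h-1}\,,
\]
which is exactly \eqref{eq:ed1}.

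Identity \eqref{eq:ed2} then follows by running the same argument with the roles of $M$ and $\hat{M}$ interchanged, i.e.\ swapping $P\leftrightarrow\hat{P}$, $P^\pi\leftrightarrow\hat{P}^\pi$ and $v^\pi\leftrightarrow\hat{v}^\pi$ throughout, which is legitimate because the hypotheses treat the two MDPs symmetrically. I do not expect a genuine obstacle here; the only points requiring care are bookkeeping the horizon indices in the unrolling, checking the degenerate case $H=1$ (where both sides vanish, since $q^\pi_1=\hat{q}^\pi_1=r$ and $\hat{v}^\pi_0=0$), and keeping the vector/matrix shape conventions for $P$, $P^\pi$, $r$, $q^\pi_H$ and $v^\pi_H$ mutually consistent.
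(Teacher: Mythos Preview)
Your proposal is correct and follows essentially the same route as the paper: subtract the one-step Bellman identities, add and subtract the cross term to isolate $\gamma(P-\hat P)\hat v^\pi_{H-1}$, recognize $P(v^\pi_{H-1}-\hat v^\pi_{H-1})=P^\pi(q^\pi_{H-1}-\hat q^\pi_{H-1})$, and unroll; the paper makes the last identification explicit by introducing the $\rS\times\rS\rA$ operator $\Pi$ with $P^\pi=P\Pi$ and $v_h=\Pi q_h$, but this is purely notational. Your base case $q^\pi_0=\hat q^\pi_0=0$ is in fact the one consistent with the stated definition $q^\pi_H=\sum_{h=0}^{H-1}(\gamma P^\pi)^h r$, whereas the paper writes $q_0=r$; either convention makes the difference vanish, so nothing changes.
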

\begin{proof}
By symmetry, it suffices to prove \cref{eq:ed1}. 
For convenience, we re-express a policy $\pi$ as an $\rS\times\rS\rA$ matrix/operator $\Pi$.
In particular, as a left linear operator, $\Pi$ maps $q\in\RR^{\rS \rA}$ to $\sum_{a}\pi(a|\cdot) q(\cdot,a)\in \RR^{\rS}$. 
Note that with this $P^\pi = P\Pi$,  $\hat{P}^\pi = \hat{P}\Pi$, 
$v_h^\pi = \Pi q_h^\pi$ and 
$\hat{v}_h^\pi = \Pi \hat{q}_h^\pi$. 
To reduce clutter, as $\pi$ is fixed, for the rest of the proof we drop the upper indices and just use $v_h$, $\hat{v}_h$, $q_h$ and $\hat{q}_h$.

Note that for $H>0$,
\begin{align*}
q_H & = r + \gamma P \Pi q_{H-1}\,, \qquad \text{and} \\
\hat{q}_H & = r + \gamma \hat{P} \Pi \hat{q}_{H-1}\,.
\end{align*}
Hence,
\begin{align*}
q_H - \hat q_H 
& = \gamma ( P \Pi q_{H-1} - \hat{P} \Pi \hat{q}_{H-1} ) \\
& = \gamma ( P -\hat{P} ) \Pi \hat{q}_{H-1} + \gamma P \Pi (q_{H-1}-\hat q_{H-1})\,.
\end{align*}
Then using $\Pi \hat{q}_{H-1} = \hat{v}_{H-1}$ and recursively expanding $q_{H-1}-\hat{q}_{H-1}$ in the same way gives the result,
noting that $q_0 = r = \hat q_0$.
\end{proof}

\newcommand{\one}[1]{\mathbb{I}\{#1\}}
\newcommand{\E}{\mathbb{E}}
\newcommand{\Prob}[1]{\mathbb{P}\left(#1\right)}

We need two standard results from the concentration of binomial random variables.

\begin{lemma}[Multiplicative Chernoff Bound for the Lower Tail, Theorem~4.5 of \citet{MiUp05:book}]
\label{lem:chernoff-lower}
Let $X_1,\ldots,X_n$ be independent Bernoulli random variables with parameter $p$, $S_n = \sum_{i=1}^n X_i$.
Then, for any $0\le \beta<1$,
\[
\Prob{\frac{S_n}{n} \le (1-\beta)  p } \le \exp\left( - \frac{\beta^2 n p }{2} \right)\,.
\]
\end{lemma}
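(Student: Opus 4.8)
The plan is to apply the standard Chernoff (exponential Markov) method, but with a \emph{negative} exponential parameter so as to control the lower tail. The starting point is to note that for any $t>0$ the event $\{S_n/n \le (1-\beta)p\}$ coincides with $\{e^{-tS_n} \ge e^{-t(1-\beta)np}\}$, so that applying Markov's inequality to the nonnegative random variable $e^{-tS_n}$ yields
\[
\Prob{\frac{S_n}{n}\le (1-\beta)p} \le e^{t(1-\beta)np}\,\E\!\left[e^{-tS_n}\right]\,.
\]

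Next I would compute the moment generating function exactly. Since the $X_i$ are independent Bernoulli$(p)$, independence factorizes the expectation into $n$ identical terms, each equal to $\E[e^{-tX_i}] = 1 + p(e^{-t}-1)$. Applying the elementary inequality $1+x\le e^{x}$ with $x = p(e^{-t}-1)$ gives $\E[e^{-tS_n}] \le \exp\!\big(np(e^{-t}-1)\big)$, and hence
\[
\Prob{\frac{S_n}{n}\le (1-\beta)p} \le \exp\!\Big(np\big[(1-\beta)t + e^{-t}-1\big]\Big)\,.
\]

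The remaining work is to optimize the exponent over $t$ and simplify. I would choose $t = -\ln(1-\beta)$, which is legitimate and nonnegative because $0\le\beta<1$ forces $1-\beta\in(0,1]$. This choice gives $e^{-t}=1-\beta$ and collapses the exponent to $np\big[-(1-\beta)\ln(1-\beta)-\beta\big]$. It then suffices to establish the scalar inequality
\[
-(1-\beta)\ln(1-\beta)-\beta \le -\frac{\beta^2}{2}\,, \qquad 0\le\beta<1\,,
\]
which I would prove by introducing $g(\beta) = (1-\beta)\ln(1-\beta)+\beta-\tfrac{\beta^2}{2}$ and verifying $g(0)=0$, $g'(\beta) = -\ln(1-\beta)-\beta$ with $g'(0)=0$, and $g''(\beta) = \beta/(1-\beta)\ge 0$ on $[0,1)$; monotonicity of $g'$ then forces $g'\ge 0$ and hence $g\ge 0$. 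Substituting this bound back into the exponent delivers the claim.

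The argument is entirely routine; there is no genuine obstacle. The only mild subtlety is the final convexity argument justifying $-(1-\beta)\ln(1-\beta)-\beta\le -\beta^2/2$, which is precisely the step that pins down the constant $1/2$ appearing in the exponent of the stated bound.
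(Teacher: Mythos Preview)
Your proof is correct and is exactly the standard Chernoff--Hoeffding argument. The paper does not actually prove this lemma; it simply cites it as Theorem~4.5 of \citet{MiUp05:book}, so there is nothing to compare against beyond noting that your derivation matches the textbook proof. One trivial edge case you might mention explicitly: when $\beta=0$ your choice gives $t=0$, but then the claimed bound reads $\Prob{S_n/n\le p}\le 1$, which holds vacuously.
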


\begin{lemma}
\label{lem:ber}
Let $n$ be a positive integer, $p>0$, $\delta\in (0,1)$ such that
\begin{align}\label{eq:deltalimit}
\frac{2}{np}\ln\frac{1}{\delta}\le \frac{1}{4}\,.
\end{align}
Let $S_n$ be as in the previous lemma, $\hat{p}=S_n/n$. 
Then,  with probability at least $1-\delta$, it holds that 
\begin{align*}
\hat{p} \ge p/2>0
\end{align*}
while we also have
\begin{align*}
\frac{1}{\hat{p}} \le \frac{1}{p} + \frac{2}{p} \sqrt{\frac{2}{np} \ln\frac{1}{\delta}} \,.
\end{align*}
on the same $(1-\delta)$-probability event.
\end{lemma}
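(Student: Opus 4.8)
The plan is to instantiate the multiplicative Chernoff bound of \cref{lem:chernoff-lower} with the deviation parameter
\[
\beta := \sqrt{\frac{2}{np}\ln\frac1\delta}\,.
\]
First I would check that this choice is admissible for \cref{lem:chernoff-lower}, i.e. that $0\le\beta<1$. Nonnegativity is immediate, and the hypothesis \cref{eq:deltalimit} gives $\beta^2=\frac{2}{np}\ln\frac1\delta\le\frac14$, hence $\beta\le\frac12<1$. This is the only place the standing assumption \cref{eq:deltalimit} is used, and the value $1/2$ it produces is exactly what makes both conclusions go through.

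Applying \cref{lem:chernoff-lower} with this $\beta$, the right-hand side becomes exactly $\delta$:
\[
\Prob{\frac{S_n}{n}\le(1-\beta)p}\le\exp\!\left(-\frac{\beta^2np}{2}\right)=\exp\!\left(-\ln\frac1\delta\right)=\delta\,.
\]
Hence there is an event $\cE$ with $\Prob{\cE}\ge1-\delta$ on which $\hat p=S_n/n>(1-\beta)p$. Since $\beta\le\frac12$ and $p>0$, this already yields $\hat p>(1-\beta)p\ge p/2>0$, which is the first claimed inequality.

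For the second claim I would argue deterministically on $\cE$. From $0<(1-\beta)p<\hat p$ we get $\frac1{\hat p}<\frac1{(1-\beta)p}$, so it remains to verify the elementary scalar inequality $\frac1{1-\beta}\le1+2\beta$, valid for $\beta\in[0,\tfrac12]$; clearing denominators, it is equivalent to $0\le(1+2\beta)(1-\beta)-1=\beta(1-2\beta)$, which holds precisely because $\beta\le\frac12$. Combining,
\[
\frac1{\hat p}<\frac1{(1-\beta)p}\le\frac{1+2\beta}{p}=\frac1p+\frac2p\sqrt{\frac{2}{np}\ln\frac1\delta}\,,
\]
as desired, on the same event $\cE$. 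There is no genuine obstacle here: the whole argument hinges on the single observation that \cref{eq:deltalimit} forces $\beta\le\frac12$, which simultaneously validates the Chernoff bound, gives $\hat p\ge p/2$, and supplies the bound $\frac1{1-\beta}\le1+2\beta$.
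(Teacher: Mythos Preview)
Your proof is correct and follows essentially the same route as the paper's own argument: both instantiate \cref{lem:chernoff-lower} with $\beta=\sqrt{\tfrac{2}{np}\ln\tfrac1\delta}$, use \cref{eq:deltalimit} to force $\beta\le\tfrac12$, and then apply the elementary bound $\tfrac{1}{1-\beta}\le 1+2\beta$ on $[0,\tfrac12]$. Your version is in fact slightly more careful, since you explicitly verify $\beta<1$ so that \cref{lem:chernoff-lower} applies; the paper leaves this implicit.
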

In what follows, we will only need the first lower bound, $\hat p \ge p/2$ from above; the second is useful to optimize constants only.
\begin{proof}
According to the multiplicative Chernoff bound for the low tail (cf. \cref{lem:chernoff-lower}),
for any $0<\delta\le 1$, 
with probability at least $1-\delta$, we have
\[
\hat{p} \ge p - \sqrt{\frac{2p}{n}\ln\frac{1}{\delta}} .
\]
Denote by  $\mathcal{E}_\delta$  the event when this inequality holds. 
Using \cref{eq:deltalimit},
on $\mathcal{E}_\delta$ 
we have
\begin{align*}
\hat{p} \ge p - \sqrt{\frac{2p}{n}\ln\frac{1}{\delta}}
=p \left(1-\sqrt{\frac{2}{pn}\ln\frac1\delta}\right) \ge p\left(1-\frac12\right) = \frac{p}2>0\,,
\end{align*}
and thus,
thanks to $1/(1-x) \le 1+2x$ which holds for any $x\in [0,1/2]$,
\begin{align*}
\frac{1}{\hat{p}} 
 \le  \frac{1}{p} \frac{1}{1-\sqrt{\frac{2}{np} \ln\frac{1}{\delta}}} 
 \le  \frac{1}{p}+ \frac{2}{p} \sqrt{\frac{2}{np} \ln\frac{1}{\delta}} \,.
\end{align*}
\end{proof}

Our next lemma bounds the deviation between the empirical transition kernel and the ``true'' one:
\begin{lemma}
\label{lem:empdev}
With probability $1-\delta$, for any $(s,a)\in \cS \times \cA$,%
\begin{align}
\norm{ \hat{P}(\cdot|s,a) - P(\cdot|s,a) }_1 \le
\beta(N(s,a),\delta)
\label{eq:pdiff}
\end{align}
where for $u\ge 0$,
\begin{align*}
\beta(u,\delta)= 2 \sqrt{\frac{\rS\ln 2 + \ln\frac{u_+(u+1)\rS\rA}\delta}{2 u_+}}\,,
\end{align*}
where $u_+ = u \vee 1$.
\end{lemma}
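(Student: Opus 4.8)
The plan is to fix a single state-action pair $(s,a)$, condition on the visit count $N(s,a)=u$, and control the $\ell_1$-deviation $\norm{\hat P(\cdot|s,a)-P(\cdot|s,a)}_1$ using a standard empirical-distribution concentration inequality, then take a union bound over all $(s,a)$ and over all possible values of $u$. The key tool is the well-known bound that for $u$ i.i.d.\ samples from a distribution over an $\rS$-element set, the empirical distribution $\hat P_u$ satisfies $\sP(\norm{\hat P_u - P}_1 \ge \tau) \le 2^{\rS}\exp(-u\tau^2/2)$ (Weissman et al.; see also Lemma~A.1-type statements used in, e.g., \citet{agarwal2020model}). Inverting this: with probability at least $1-\delta'$, $\norm{\hat P_u - P}_1 \le \sqrt{2(\rS\ln 2 + \ln(1/\delta'))/u}$. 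This matches the form of $\beta(u,\delta)$ up to the bookkeeping of the failure probability.

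The union-bound bookkeeping is the only mildly delicate step. The subtlety is that $N(s,a)$ is itself random, so one cannot simply plug it in; instead one conditions on the event $\{N(s,a)=u\}$ (or, more carefully, uses that given $N(s,a)=u$ the $u$ observed next-states are i.i.d.\ from $P(\cdot|s,a)$, which holds because the logging policy is memoryless), applies the fixed-$u$ bound, and then sums the failure probability over $u\in\{0,1,\dots,n\}$ and over the $\rS\rA$ pairs $(s,a)$. Assigning failure budget $\delta/(u_+(u+1)\rS\rA)$ to the event that the bound fails at $(s,a)$ with $N(s,a)=u$ gives, after substituting $\delta' = \delta/(u_+(u+1)\rS\rA)$ into the inverted inequality,
\begin{align*}
\norm{\hat P(\cdot|s,a)-P(\cdot|s,a)}_1 \le \sqrt{\frac{2\bigl(\rS\ln 2 + \ln\frac{u_+(u+1)\rS\rA}{\delta}\bigr)}{u_+}} = \tfrac12\,\beta(N(s,a),\delta),
\end{align*}
which is even slightly stronger than claimed (the factor $2$ in front of $\beta$ leaves room). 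The total failure probability is bounded by $\sum_{s,a}\sum_{u\ge 0}\delta/(u_+(u+1)\rS\rA) = \delta\sum_{u\ge 0}1/(u_+(u+1))$; since $\sum_{u\ge 1}1/(u(u+1))=1$ and the $u=0$ term is $1$, this sum is $2$, so one should either absorb the factor $2$ into constants or sharpen the per-term budget to $\delta/(2u_+(u+1)\rS\rA)$ — a cosmetic adjustment that again is covered by the slack in the leading constant.

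The $u=0$ case must be handled separately: when $N(s,a)=0$ we set $\hat P(\cdot|s,a)=0$, so the left-hand side of \eqref{eq:pdiff} equals $\norm{P(\cdot|s,a)}_1=1$, and we need $\beta(0,\delta)\ge 1$. With $u_+=1$ in the definition, $\beta(0,\delta)=2\sqrt{(\rS\ln 2 + \ln(\rS\rA/\delta))/2}\ge 2\sqrt{(\ln 2)/2}>1$, so the bound holds trivially (deterministically, in fact) in this case. I expect the main obstacle — such as it is — to be purely presentational: stating the fixed-$u$ $\ell_1$ concentration inequality in a self-contained way and carefully justifying the conditioning argument that lets one apply it with the random count $N(s,a)$; there is no real mathematical difficulty once that lemma is invoked.
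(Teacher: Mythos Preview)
Your overall strategy---fix $(s,a)$, use the $\ell_1$ concentration bound $\sP(\norm{\hat P_u - P}_1 \ge \tau) \le 2^{\rS}\exp(-u\tau^2/2)$, union-bound over $u$ via the weights $1/(u(u+1))$, then union-bound over $(s,a)$, and treat $u=0$ deterministically---is exactly the paper's. Two points deserve correction.

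First, the conditioning step as you state it is not valid: it is \emph{not} true in general that, conditional on $\{N(s,a)=u\}$, the $u$ observed next-states are i.i.d.\ from $P(\cdot|s,a)$. The count $N(s,a)$ depends on where those very transitions land (a transition from $(s,a)$ back toward $s$ makes further visits more likely), so conditioning on it biases the next-state distribution. The paper sidesteps this by appending infinitely many further trajectories, defining the visit times $\tau_1<\tau_2<\cdots$ and $X_v=S'_{\tau_v}$, and using the (strong) Markov property to conclude that $(X_v)_{v\ge 1}$ is \emph{unconditionally} i.i.d.\ with law $P(\cdot|s,a)$. One then proves the deviation bound simultaneously for all fixed $u\ge 1$ on this i.i.d.\ sequence (this is where the $1/(u(u+1))$ weights enter), and since $\hat P(\cdot|s,a)=\hat p_{N(s,a)}$, the bound automatically holds at the random index $u=N(s,a)$. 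No conditioning on $N(s,a)$ is needed. This is the ``carefully justifying'' step you anticipated; your proposed justification is the wrong one, but the fix is standard.

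Second, a small arithmetic slip: $2\sqrt{x/(2u_+)}=\sqrt{2x/u_+}$, so the inverted Weissman bound is exactly $\beta(u,\delta)$, not $\tfrac12\beta(u,\delta)$---there is no slack in the leading constant. This matters for your bookkeeping remark: the factor-of-$2$ overshoot in $\sum_{u\ge 0}1/(u_+(u+1))=2$ cannot be absorbed that way. The resolution (which you also noticed) is that the $u=0$ case holds deterministically since $\beta(0,\delta)\ge 2\sqrt{(\ln 2)/2}>1$, so no probability budget is spent there and the sum is only over $u\ge 1$, giving exactly $1$. The paper does it this way.
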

\begin{proof}
By abusing notation, for $u\ge 0$, let $\beta(u) = 
 2 \sqrt{\frac{\rS\ln 2 + \ln\frac{u_+(u+1)}\delta}{2 u_+}}$, where $u_+ = u \vee 1$.
We will prove below the following claim:

\noindent \underline{Claim:}
For any fixed state-action pair $(s,a)$, with probability $1-\delta$, 
\begin{align*}
\norm{ \hat{P}(\cdot|s,a) - P(\cdot|s,a) }_1 \le \beta( N(s,a) )\,.
\end{align*}

Clearly, from this claim the lemma follows by a union bound over the state-action pairs.
Hence, it remains to prove the claim.

For this fix $(s,a)\in \cS \times \cA$.
Recall that the data
$\cD = ((S_i,A_i,R_i,S_i')_{i\in [n]})$
 that is used to estimate $\hat P(\cdot|s,a)$ consists of $m$ trajectories of length $H$ obtained by following the logging policy $\logpi$ while the initial state is selected from $\mu$ at random.
In particular, for $j\in [m]$, the $j$th trajectory is
$( S_0^{(j)},  A_0^{(j)},  R_0^{(j)}, \dots,  S_{h_j-1}^{(j)},  A_{H-1}^{(j)},  R_{H-1}^{(j)},  S_{H}^{(j)})$, where $ S_0^{(j)}\sim\mu$, $ A_t^{(j)}\sim \logpi(\cdot| S_{t}^{(j)})$, $(R_t^{(j)}, S_{t+1}^{(j)}) \sim Q(\cdot |  S_t^{(j)},  A_t^{(j)})$. 
Clearly, if $q:=\Prob{ \exists 0\le i \le H-1\,:\, S_i^{(0)}=s,A_i^{(0)}=a}=0$ then $N(s,a)=0$ holds
with probability one.
The claim then follows since when $N(s,a)=0$, $\hat{P}(\cdot|s,a)$ is identically zero, hence,
\begin{align}
\norm{ \hat{P}(\cdot|s,a) - P(\cdot|s,a) }_1 = \norm{P(\cdot|s,a)}_1 = 1 \le 1.177\ldots \le \beta(0)\,.
\label{eq:nzero}
\end{align}

Hence, it remains to prove the claim for the case when $q>0$, which we assume from now on.
For convenience, append to the data infinitely many further trajectories, giving rise to the infinite sequence $(S_i,A_i,R_i,S_i')_{i\ge 0}$.
Let $\tau_0=0$ and 
for $u\ge 1$, let $\tau_u = \min\{ i\in \N \,:\, i > \tau_{u-1} \text{ and } S_i=s, A_i=a \}$ be the ``time'' indices when $(s,a)$ is visited,
where we define the minimum of an empty set to be infinite.
Since $q>0$, almost surely $(\tau_u)_{u\ge 0}$ is a well-defined sequence of \emph{finite} random variables.
Now let $X_u = S_{\tau_u}'$ be the ``next state'' upon the $u$th visit of $(s,a)$.
Let $\hat{p}_u(s') = \frac{\sum_{v=1}^u\one{X_v=s'}}{u}$. Note that 
\begin{align}
\hat{P}(\cdot|s,a)=\hat{p}_{N(s,a)}(\cdot)
\label{eq:pup}
\end{align}
provided that $N(s,a)>0$.
By the Markov property, it follows that $(X_v)_{v\ge 1}$ is an i.i.d. sequence of categorical variables with common distribution $p(\cdot):=P(\cdot|s,a)$.

Now,
\newcommand{\ip}[1]{\langle #1 \rangle }
\begin{align*}
\norm{ \hat{p}_u - p }_1 = \max_{y\in \{-1,+1\}^{\cS}} \ip{\hat{p}_u-p, y}\,,
\end{align*}
while
\begin{align*}
 \ip{\hat{p}_u-p, y} = \frac{1}{u} \sum_{v=1}^u \underbrace{y(X_v) - \sum_{s'} p(s') y(s')}_{\Delta_v}\,.
\end{align*}
Now, $(\Delta_v)_{1\le v \le u}$ is an i.i.d. sequence, $|\Delta_v|\le 2$ for any $v$ and $\EE{\Delta_v}=0$.
Hence, by Hoeffding's inequality, with probability $1-\delta$,
\begin{align*}
\frac{1}{u} \sum_{v=1}^u \Delta_v \le 2 \sqrt{\frac{\ln\frac1\delta}{2u}}\,.
\end{align*}
Since the cardinality of $\{-1,+1\}^{\cS}$ is $2^{\rS}$,
applying a union bound over $y\in \{-1,+1\}^{\cS}$, we get that with probability $1-\delta$,
\begin{align*}
\norm{ \hat{p}_u - p }_1 \le 2 \sqrt{\frac{\rS\ln 2 + \ln\frac1\delta}{2u}}\,.
\end{align*}
Applying another union bound over $u$, owing to that $\sum_{u=1}^{\infty} \frac{1}{u(u+1)}=1$,
we get that with probability $1-\delta$, for any $u \ge 1$,
\begin{align*}
\norm{ \hat{p}_u - p }_1 \le 2 \sqrt{\frac{\rS\ln 2 + \ln\frac{u(u+1)}\delta}{2u}} = \beta(u)\,.
\end{align*}
Since $\norm{ \hat{p}_0 - p }_1 \le 1 \le  \beta(0)$ (cf. \cref{eq:nzero}), the claim follows by \cref{eq:pup}.
\end{proof}

We now state a lemma that bounds, with high probability, the error of predicting the value of some fixed policy
when the prediction is based on a transition kernel $P'$ which is ``close'' to the true transition kernel $P$, where closedness is based on how often the individual state-action pairs have been visited. This notion of closedness is motivated by \cref{lem:empdev}; this lemma can be used when $P' = \hat{P}$, or some other transition kernel in the vicinity of $\hat{P}$. The former will be needed in the analysis of the plug-in method presented here; while the latter will be used in the next section where we analyze the pessimistic algorithm.
\begin{lemma}
\label{lem:fixed-pi-subopt-gap}
Let $\delta\in(0,1)$ and $m$ be the number of episodes collected by the logging policy and fix any policy $\pi$. 
For any $P'$ such that for any $(s,a)\in\cS\times\cA$,
\begin{align*}
\Vert P'(\cdot | s,a) - P(\cdot | s,a ) \Vert_1 \leq \frac{C}{\sqrt{N(s,a) \vee 1}},
\end{align*}
with probability at least $1-\delta$ for $C>0$,
 we have
\begin{align}
\label{eq:vdbd}
v^\pi(\mu) - {v}_{P'}^\pi(\mu)
\le
	\frac{4\gamma C \rS \rA^{\frac{\min(H, \rS)}{2} + 1}}{(1-\gamma)^{2}\sqrt{m}} 
	 +  \frac{8\gamma \rS \rA}{(1-\gamma)^2} \frac{\ln \frac{\rS \rA}{\delta} }{m}
	 + \varepsilon \, .
\end{align}
\end{lemma}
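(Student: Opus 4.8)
The plan is to pass to the finite horizon $H$, apply the value-difference decomposition of \cref{lem:H-step-error-decompose}, and then control the resulting weighted sum of one-step transition errors by combining \cref{prop:ratbd} (the absolute state-action ratio bound for the uniform logging policy $\logpi$, which is the logging policy in force throughout this section) with a multiplicative Chernoff lower bound on the visit counts. First I would truncate:
\[
v^\pi(\mu)-v_{P'}^\pi(\mu)=\big(v^\pi(\mu)-v^\pi_H(\mu)\big)+\big(v^\pi_H(\mu)-v^\pi_{P',H}(\mu)\big)+\big(v^\pi_{P',H}(\mu)-v_{P'}^\pi(\mu)\big),
\]
where $v^\pi_{P',k}$ denotes the $k$-step value function of $\pi$ in the MDP $(P',r)$. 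Because the mean rewards lie in $[-1,1]$, the first and last brackets are each at most $\gamma^H/(1-\gamma)$ in absolute value, and the choice $H=H_{\gamma,(1-\gamma)\varepsilon/(2\gamma)}$ makes their sum at most $\varepsilon$; this is the additive $\varepsilon$ in the statement. It remains to bound the middle bracket.

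For this I would apply \cref{lem:H-step-error-decompose}, equation \eqref{eq:ed1}, with $\hat P=P'$. I use \eqref{eq:ed1} rather than \eqref{eq:ed2} precisely because its factor $(\gamma P^\pi)^h$ uses the \emph{true} dynamics under $\pi$: evaluating at the initial distribution $\mu$ then turns $(\gamma P^\pi)^h$ into $\gamma^h$ times the $h$-step occupancy $\nu^\pi_{\mu,h}$ of $\pi$ in the true MDP, to which \cref{prop:ratbd} applies. This gives
\[
v^\pi_H(\mu)-v^\pi_{P',H}(\mu)=\gamma\sum_{h=0}^{H-1}\gamma^h\sum_{s,a}\nu^\pi_{\mu,h}(s,a)\,\big(P(\cdot\,|\,s,a)-P'(\cdot\,|\,s,a)\big)^{\!\top}v^\pi_{P',H-h-1},
\]
and since $\|v^\pi_{P',k}\|_\infty\le 1/(1-\gamma)$, each inner product is at most $\|P(\cdot|s,a)-P'(\cdot|s,a)\|_1/(1-\gamma)$.

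The heart of the argument is the next step. Split the pairs $(s,a)$ into \emph{well-sampled} ones, for which $q_{sa}:=\sP^{\logpi}(\exists\, t<H:\,S_t=s,A_t=a\mid S_0\sim\mu)\ge \tfrac{8}{m}\ln\tfrac{\rS\rA}{\delta}$, and the rest. For a well-sampled pair I would apply the multiplicative Chernoff lower tail (\cref{lem:ber}, via \cref{lem:chernoff-lower}) to the i.i.d.\ Bernoulli indicators "$(s,a)$ is visited during episode $j$", $j=1,\dots,m$, and union-bound over the $\le\rS\rA$ pairs: on a $(1-\delta)$-event one has $N(s,a)\ge\tfrac{1}{2} m\,q_{sa}\ge\tfrac{1}{2} m\,\nu^{\logpi}_{\mu,h}(s,a)$ for all $h<H$ (using $q_{sa}\ge\nu^{\logpi}_{\mu,h}(s,a)$). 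Feeding the hypothesis $\|P(\cdot|s,a)-P'(\cdot|s,a)\|_1\le C/\sqrt{N(s,a)\vee1}$ and \cref{prop:ratbd}, $\nu^\pi_{\mu,h}(s,a)\le\rA^{\min(h+1,\rS)}\nu^{\logpi}_{\mu,h}(s,a)$, into the sum yields, for each such pair,
\[
\frac{\nu^\pi_{\mu,h}(s,a)}{\sqrt{N(s,a)}}\le\frac{\sqrt{2}\,\nu^\pi_{\mu,h}(s,a)}{\sqrt{m\,\nu^{\logpi}_{\mu,h}(s,a)}}\le\frac{\sqrt{2}}{\sqrt{m}}\,\sqrt{\nu^\pi_{\mu,h}(s,a)}\;\rA^{\min(h+1,\rS)/2}.
\]
Summing over $(s,a)$ by Cauchy--Schwarz ($\sum_{s,a}\sqrt{\nu^\pi_{\mu,h}(s,a)}\le\sqrt{\rS\rA}$ since $\sum_{s,a}\nu^\pi_{\mu,h}(s,a)=1$), bounding $\rA^{\min(h+1,\rS)/2}\le\rA^{\min(H,\rS)/2}$, and using $\sum_h\gamma^h\le 1/(1-\gamma)$, the well-sampled pairs contribute $O\big(\gamma C\,\rS\,\rA^{\min(H,\rS)/2+1}/((1-\gamma)^2\sqrt{m})\big)$, i.e.\ the first term.

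For the remaining (under-sampled) pairs the Chernoff bound gives no useful count lower bound, so I would instead use the crude estimates $\|P(\cdot|s,a)-P'(\cdot|s,a)\|_1\le 2$ and $N(s,a)\vee1\ge1$ together with $\nu^\pi_{\mu,h}(s,a)\le\rA^{\min(h+1,\rS)}\nu^{\logpi}_{\mu,h}(s,a)$ and $\nu^{\logpi}_{\mu,h}(s,a)\le q_{sa}<\tfrac{8}{m}\ln\tfrac{\rS\rA}{\delta}$; summing over the $\le\rS\rA$ such pairs and over $h$ produces the second, lower-order term (pairs with $q_{sa}=0$ contribute nothing, since then $\nu^\pi_{\mu,h}(s,a)=0$ by \cref{prop:ratbd}). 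Adding the three truncation/transition pieces finishes the proof. The delicate point is the well-sampled estimate: the importance ratios $\nu^\pi_{\mu,h}/\nu^{\logpi}_{\mu,h}$ blow up like $\rA^{\min(h+1,\rS)}$, and it is only because they enter under a square root — matched against the $1/\sqrt{N(s,a)}$ transition-estimation error after the Chernoff count lower bound is fed in — that one gets $\rA^{\min(H,\rS)/2}$ rather than $\rA^{\min(H,\rS)}$; this is also why the under-sampled pairs, for which no count lower bound is available, have to be peeled off and bounded through the trivial $\ell_1$ estimate rather than through $1/\sqrt{N(s,a)}$.
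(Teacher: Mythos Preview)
Your overall strategy coincides with the paper's: truncate to horizon $H$, apply \cref{eq:ed1} from \cref{lem:H-step-error-decompose}, split the state--action pairs into well- and under-sampled, and combine \cref{prop:ratbd} with a multiplicative Chernoff lower bound on the counts. The one substantive difference is the \emph{splitting criterion}. You split on the logging policy's episode-level visit probability $q_{sa}$; the paper instead splits, for each step $h$ separately, on the \emph{target} policy's occupancy, setting $Z_h=\{(s,a):\nu^\pi_{\mu,h}(s,a)>\tfrac{8}{m}\ln\tfrac{\rS\rA}{\delta}\}$, and works with the per-step counts $N_h(s,a)$ (the number of episodes whose $h$-th pair is $(s,a)$) rather than $N(s,a)$.

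This difference only bites in the under-sampled piece, but there it is a real gap. Because the weights in the decomposition are $\nu^\pi_{\mu,h}$, splitting on $\nu^\pi_{\mu,h}$ gives $\nu^\pi_{\mu,h}(s,a)\le\tfrac{8}{m}\ln\tfrac{\rS\rA}{\delta}$ \emph{directly} for $(s,a)\notin Z_h$, and summing over at most $\rS\rA$ pairs per step yields the stated second term $\tfrac{8\gamma\rS\rA}{(1-\gamma)^2}\tfrac{\ln(\rS\rA/\delta)}{m}$ with no exponential factor. Your split is on a logging-policy quantity, so to control $\nu^\pi_{\mu,h}$ for an under-sampled pair you must pass through \cref{prop:ratbd}, which costs exactly a factor $\rA^{\min(h+1,\rS)}$. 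Your under-sampled contribution is therefore of order $\tfrac{\gamma\,\rS\rA^{\min(H,\rS)+1}}{(1-\gamma)^2}\tfrac{\ln(\rS\rA/\delta)}{m}$, which does not recover the second term of \cref{eq:vdbd} as stated. (It is harmless for the downstream \cref{thm:plug-in-ub}, where the $1/\sqrt{m}$ term dominates the sample complexity, but it does not prove the lemma in its stated form.) The fix is exactly the paper's: peel off pairs with small $\nu^\pi_{\mu,h}$ rather than small $q_{sa}$.

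On the well-sampled side your argument is fine, and your Cauchy--Schwarz step $\sum_{s,a}\sqrt{\nu^\pi_{\mu,h}(s,a)}\le\sqrt{\rS\rA}$ is in fact sharper than what the paper does (it simply bounds each per-pair summand by a constant and multiplies by $\rS\rA$), so your first-term constant could be tightened relative to what you wrote.
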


\begin{proof}
Note that 
\begin{align*}
\gamma^{H_{\gamma,\epsilon}} \le \gamma^{1 + \frac{\ln(1/\epsilon)}{\ln(1/\gamma)}} = \gamma \epsilon\,.
\end{align*}
Hence, for $H=H_{\gamma, (1-\gamma)\varepsilon/(2\gamma)}$,
\begin{align*}
\gamma^H \le \tfrac{1}{2}\, \epsilon(1-\gamma)\,.
\end{align*}
Owning to that the immediate rewards belong to $[-1,1]$, it follows that for any policy $\pi$,
\begin{align}
q^\pi - {q}_{P'}^\pi \leq q^\pi_{H} - {q}^\pi_{P', H} + \varepsilon \mathbf{1} \,,
\label{eq:trunc}
\end{align}
where we use $q^{\pi}_{P', H}$ to denote the $H$-step value function under transition model $P'$.  
Define $N_h(s,a)$ as the number of episodes when the $h$th state-action pair in the episode is $(s,a)$. Note that $N(s,a)\ge N_h(s,a)$. 
Let $Z_h = \{ (s,a)\in \cS \times \cA \,: \, \nu^\pi_{\mu, h}(s,a)> \frac{8}{m}\ln \frac{\rS \rA}{\delta} \}$
and let $\cF$ be the event when 
\begin{align*}
\frac{N_h(s,a)}{m} \ge \frac{\nu_{\mu,h}^{\logpi}(s,a)}{2}
\end{align*}
holds for any $(s,a)\in Z_h$.%
\footnote{Note that $Z_h$, and thus also $\cF$ depends on $\pi$, which is the reason that the result, as stated, holds only for a fixed policy.}
By \cref{lem:ber}, $\sP(\cF)\ge 1-\delta$.

Assume that $\cF$ holds. 
Combining \cref{eq:trunc} with \cref{lem:H-step-error-decompose}, we get that on this event
\begin{align*}
\MoveEqLeft v^\pi(\mu) - \hat{v}^\pi(\mu) 
 \leq (\mu^\pi)^\top (q^\pi_{H} - {q}^\pi_{P', H} ) + \varepsilon \\
&= \gamma \sum_{h=0}^{H-1} \gamma^h ({\nu}^\pi_{\mu, h})^\top ({P} - {P'} ) {v}^\pi_{P', H-h-1} + \varepsilon
		 \tag{by \cref{eq:ed1}} \\
& \leq  \frac{\gamma}{1-\gamma} \sum_{h=0}^{H-1}  \gamma^h \sum_{s,a} {\nu}^\pi_{\mu, h}(s,a) \Vert {P}(\cdot| s,a) - {P'}(\cdot| s,a)  \Vert_{1} + \varepsilon 
		 \tag{by $\|\hat v_{H-h-1}^\pi\|_\infty \le 1/(1-\gamma)$} \\
& \le    \frac{\gamma}{1-\gamma} \sum_{h=0}^{H-1}  \gamma^h \sum_{(s,a)\in Z_h} {\nu}^\pi_{\mu, h}(s,a) \Vert {P}(\cdot| s,a) - {P'}(\cdot| s,a)  \Vert_{1}  + \frac{8\gamma \rS \rA}{m(1-\gamma)^2} \ln \frac{\rS \rA}{\delta}  + \varepsilon 
\tag{by the definition of $Z_h$}
\\
& \le    \frac{\gamma}{1-\gamma} \sum_{h=0}^{H-1}  \gamma^h \sum_{(s,a)\in Z_h} \sqrt{{\nu}^\pi_{\mu, h}(s,a)} \Vert {P}(\cdot| s,a) - {P'}(\cdot| s,a)  \Vert_{1}  + \frac{8\gamma \rS \rA}{m(1-\gamma)^2} \ln \frac{\rS \rA}{\delta}  + \varepsilon 
\tag{by ${\nu}^\pi_{\mu, h}(s,a)\leq 1$}
\\
&\le \frac{\gamma\rA^{\min(H, \rS) / 2} }{1-\gamma} \sum_{h=0}^{H-1}  \gamma^h   \sum_{(s,a)\in Z_h} \sqrt{{\nu}^\logpi_{\mu, h}(s,a)}  \Vert {P'}(\cdot| s,a) - {P}(\cdot| s,a) \Vert_1 + \frac{8\gamma \rS \rA}{m(1-\gamma)^2} \ln \frac{\rS \rA}{\delta}  + \varepsilon   
			\tag{by \cref{prop:ratbd} } \\
&\le \frac{2\gamma\rA^{\min(H, \rS) / 2}C}{1-\gamma} \sum_{h=0}^{H-1}   \gamma^h \sum_{(s,a)\in Z_h} \sqrt{{\nu}^\logpi_{\mu, h}(s,a)} \frac{1}{\sqrt{N_h(s,a)\vee 1}} + \frac{8\gamma \rS \rA}{m(1-\gamma)^2} \ln \frac{\rS \rA}{\delta}  + \varepsilon   
\tag{by the definition of $P'$}
\\
&\le \frac{2\gamma\rA^{\min(H, \rS) / 2}C}{1-\gamma} \sum_{h=0}^{H-1}   \gamma^h \sum_{(s,a)\in Z_h} \sqrt{{\nu}^\logpi_{\mu, h}(s,a)} \sqrt{\frac{2}{m {\nu}^\logpi_{\mu, h}(s,a)}} + \frac{8\gamma \rS \rA}{m(1-\gamma)^2} \ln \frac{\rS \rA}{\delta}  + \varepsilon   
\tag{by the definitions of $\cF$ and $Z_h$} 
\\
&\leq \frac{4\gamma \rA^{\min(H, \rS) / 2}C}{(1-\gamma)^{2}\sqrt{m}}  {\rS \rA} +  \frac{8\gamma \rS \rA}{m(1-\gamma)^2} \ln \frac{\rS \rA}{\delta} + \varepsilon \, .
\end{align*}
This finishes the proof. 
\end{proof}

For the plug-in method we use the previous lemma with $P'=\hat P$, resulting in the following corollary:

\begin{corollary}
\label{cor:fixed-pi-hatP-subopt-gap}
Let $\delta\in(0,1)$ and $m$ be the number of episodes collected by the logging policy and fix any policy $\pi$. 
With probability at least $1-\delta$,
 we have
\begin{align*}
v^\pi(\mu) - {v}_{\hat{P}}^\pi(\mu)
\le
   \frac{8\gamma \rS \rA^{\frac{\min(H, \rS)}{2}+1}}{(1-\gamma)^{2}}
 		\sqrt{\frac{ \rS\ln 2 + \ln\frac{2n(n+1)\rS\rA}\delta}{2m}} 
	 +  \frac{8\gamma \rS \rA}{(1-\gamma)^2} \frac{\ln \frac{2\rS \rA}{\delta} }{m}
	 + \varepsilon \, .
\end{align*}
\end{corollary}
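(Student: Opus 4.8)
The plan is to obtain \cref{cor:fixed-pi-hatP-subopt-gap} as a direct instantiation of \cref{lem:fixed-pi-subopt-gap} with $P'=\hat P$; the only genuine work is to convert the high-probability $\ell_1$-deviation bound of \cref{lem:empdev} into the pointwise closeness hypothesis $\|P'(\cdot|s,a)-P(\cdot|s,a)\|_1\le C/\sqrt{N(s,a)\vee 1}$ that \cref{lem:fixed-pi-subopt-gap} requires, and then to keep track of the constants and the confidence budget.

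First I would apply \cref{lem:empdev} with confidence parameter $\delta/2$: on an event $\cE$ with $\sP(\cE)\ge 1-\delta/2$, every state-action pair satisfies $\|\hat P(\cdot|s,a)-P(\cdot|s,a)\|_1\le\beta(N(s,a),\delta/2)$, where $\beta(u,\delta/2)=2\sqrt{(\rS\ln 2+\ln(2u_+(u+1)\rS\rA/\delta))/(2u_+)}$ with $u_+=u\vee 1$. Reading off a clean $C/\sqrt{N(s,a)\vee 1}$ bound is then immediate once one bounds the only $u$-dependent factor inside the logarithm, $u_+(u+1)$, by $n(n+1)$ using $0\le N(s,a)\le n$; this yields the hypothesis with the deterministic constant $C=\sqrt{2(\rS\ln 2+\ln(2n(n+1)\rS\rA/\delta))}$.

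Then I would invoke \cref{lem:fixed-pi-subopt-gap} with $P'=\hat P$, this $C$, and confidence parameter $\delta/2$, which produces an event $\cF$ with $\sP(\cF)\ge 1-\delta/2$ on which \cref{eq:vdbd} holds (with $\delta$ there replaced by $\delta/2$, which is the source of the factors $2\rS\rA/\delta$ and $2n(n+1)\rS\rA/\delta$ in the logarithms of the statement). On $\cE\cap\cF$, which has probability at least $1-\delta$ by a union bound, the closeness hypothesis is in force and \cref{eq:vdbd} is valid, so it remains only to substitute $C$ into the first term of \cref{eq:vdbd}, using $4\gamma C=8\gamma\sqrt{(\rS\ln 2+\ln(2n(n+1)\rS\rA/\delta))/2}$, to recover the displayed bound verbatim. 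I do not expect any real mathematical obstacle; the one point requiring a little care is that \cref{lem:fixed-pi-subopt-gap}'s closeness hypothesis is stated for a fixed $P'$, whereas $\hat P$ meets it only on $\cE$, so one should first condition on $\cE$ --- on which $N(s,a)$, $\hat P$, and the $\ell_1$-bound are simultaneously realized --- before applying the lemma, and then intersect with $\cF$.
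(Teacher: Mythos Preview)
Your proposal is correct and follows the same route as the paper: split the confidence as $\delta/2+\delta/2$ between \cref{lem:empdev} and \cref{lem:fixed-pi-subopt-gap}, upper bound $u_+(u+1)\le n(n+1)$ inside $\beta$ to extract the constant $C=C_{\delta/2}=2\sqrt{(\rS\ln 2+\ln\frac{2n(n+1)\rS\rA}{\delta})/2}$, and substitute into \cref{eq:vdbd}.

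The one point where the paper is slightly cleaner than your sketch is the subtlety you flag at the end. You propose to ``condition on $\cE$'' and then apply \cref{lem:fixed-pi-subopt-gap} to $P'=\hat P$; but the lemma is stated for a $P'$ that satisfies the closeness inequality outright, so strictly speaking you would need to open its proof and observe that its internal high-probability event $\cF$ (the count lower bound $N_h(s,a)\ge m\nu_{\mu,h}^{\logpi}(s,a)/2$ on $Z_h$) is defined independently of $P'$, after which the intersection $\cE\cap\cF$ does the job. The paper sidesteps this by introducing a surrogate kernel $P'$ that equals $\hat P$ where the $\ell_1$-bound of \cref{lem:empdev} holds and equals $P$ otherwise; this $P'$ satisfies the closeness hypothesis \emph{always}, so \cref{lem:fixed-pi-subopt-gap} applies to it verbatim, and on $\cE_{\delta/2}$ one has $P'=\hat P$. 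Both arguments are valid; the surrogate trick just avoids peeking inside the lemma.
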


\begin{proof}
Fix $\delta\in (0,1)$.
Let $\cE_\delta$ be the event when for any $(s,a)\in \cS \times \cA$,
\begin{align}
\norm{ \hat{P}(\cdot|s,a) - P(\cdot|s,a) }_1 \le
\beta(N(s,a),\delta)\,,\label{eq:pdefplugin}
\end{align}
where $\beta$ is defined in \cref{lem:empdev}, which also gives that
$\sP(\cE_\delta)\ge 1-\delta$. Further, 
defining
\begin{align*}
C_\delta = 2 \sqrt{\frac{\rS\ln 2 + \ln\frac{n(n+1)\rS\rA}\delta}{2}}\,,
\end{align*}
note that
$
\beta(u,\delta)\leq {C_\delta} / {\sqrt{u\vee 1}}
$. 
Now, let $\cF_\delta$ be the event when the conclusion of  \cref{lem:fixed-pi-subopt-gap} holds.
Then, on 
the one hand, by a union bound, $\PP(\cE_{\delta/2}\cap \cF_{\delta/2})\ge 1-\delta$, while on the other hand on 
$\cE_{\delta/2}\cap \cF_{\delta/2}$, the condition of 
\cref{lem:fixed-pi-subopt-gap} holds for 
$P'$ defined so that
\begin{align*}
P'(\cdot|s,a) = \begin{cases}
\hat P(\cdot|s,a) \,, & \text{if } \|\hat P(\cdot|s,a)-P(\cdot|s,a)\|_1 \le \beta(N(s,a),\delta/2)\,;\\
P(\cdot|s,a)\,, & \text{otherwise}\,.
\end{cases}
\end{align*}
with $C:=C_{\delta/2}$. 

Furthermore, on $\cE_{\delta/2}$, $\hat P(\cdot|s,a) = P(\cdot|s,a)$ holds for any $(s,a)$ pair.
Hence, the result follows by replacing $\delta$ with $\delta/2$ in \cref{eq:vdbd} and plugging in $C_{\delta/2}$ in place of $C$.
\end{proof}

We now are ready to prove the upper bound of plug-in algorithm. 

\begin{theorem}[Restatement of \cref{thm:plug-in-ub}]
\label{thm:plug-in-ub-new}
Fix $\rS$, $\rA$, an MDP $M\in \cM(\rS,\rA)$ and a distribution $\mu$ on the state space of $M$.
Suppose $\delta>0$, $\varepsilon>0$, and $\varepsilon_{\text{opt}}>0$. 
Assume that the data is collected by following the uniform mix of all deterministic policies and it consists of $m$ episodes, each of length $H=H_{\gamma, (1-\gamma)\varepsilon/(2\gamma)}$.
Let $\hat{\pi}$ be any deterministic, $\varepsilon_{\text{opt}}$-optimal policy for $\hat{M}=(\hat{P},r)$ where $\hat{P}$ is the sample-mean based empirical estimate of the transition probabilities based on the data collected.
Then if 
\begin{align*}
m= \tilde{\Omega} \left(  \frac{\rS^3 \rA^{\min(H, \rS) + 2}  \ln\frac{1}\delta }{(1-\gamma)^{4}\epsilon^2} \right),
\end{align*}
where $\tilde\Omega$ hides log factors of $\rS,\rA$ and $H$,
we have 
$v^{\hat{\pi}}(\mu) \ge v^*(\mu) -4\varepsilon - \varepsilon_{\text{opt}}$
with probability at least $1-\delta$. 
\end{theorem}
\begin{proof}
We upper bound the suboptimality gap of $\hat{\pi}$ as follows: 
\begin{align*}
\MoveEqLeft v^*(\mu) - v^{\hat{\pi}}(\mu)  
= v^*(\mu) - \hat{v}^{\pi^*}(\mu) + \hat{v}^{\pi^*}(\mu)
	- \hat{v}^{\hat\pi}(\mu) + \hat{v}^{\hat\pi}(\mu) - v^{\hat\pi} (\mu)\\
& \le 
v^*(\mu) - \hat{v}^{\pi^*}(\mu) + \hat{v}^{\hat\pi} (\mu)- v^{\hat\pi}(\mu) + \varepsilon_{\text{opt}}\,.
			\tag{ $\hat{\pi}$ is $\varepsilon_{\text{opt}}$-optimal  in $\hat{M}$} 
\end{align*}
By \cref{cor:fixed-pi-hatP-subopt-gap} and a union bound, 
with probability at least $1-\delta$, for any deterministic policy $\pi$ obtained from the data $\cD$ we have
\begin{align*}
\MoveEqLeft v^{\pi}(\mu) - \hat{v}^{\pi}(\mu) \\
& \le
     \frac{8\gamma \rS \rA^{\frac{\min(H, \rS)}{2}+1}}{(1-\gamma)^{2}}
 		\sqrt{\frac{ \rS\ln 2 + \ln\frac{2n(n+1)\rS\rA}\delta + \rS\ln\rA}{2m}} 
	 +  \frac{8\gamma \rS \rA}{(1-\gamma)^2} \frac{\ln \frac{2\rS \rA}{\delta} + \rS \ln \rA }{m}
	 + \varepsilon 
\\
& \leq  \frac{8\gamma \rS^{\frac{3}{2}} \rA^{\frac{\min(H, \rS)}{2} + 1}}{(1-\gamma)^{2}}
 		\sqrt{\frac{ \ln 2 + \ln\frac{H^2 \rS\rA}\delta + \ln 2m + \ln\rA }{2m}} 
	 +  \frac{8\gamma \rS \rA}{(1-\gamma)^2} \frac{\ln \frac{2\rS \rA}{\delta} + \rS \ln \rA }{m}
	 + \varepsilon \, ,
\end{align*}
Thus, given that
\begin{align*}
m  = \tilde{\Omega} \left(  \frac{\rS^3 \rA^{\min(H, \rS) + 2}  \ln\frac{1}\delta }{(1-\gamma)^{4}\epsilon^2} \right)\, ,
\end{align*}
where $\tilde \Omega$ hides log-factors,
with probability at least $1-\delta$ we have,
\begin{align*}
v^*(\mu) - v^{\hat{\pi}}(\mu)   \leq  v^*(\mu) - \hat{v}^{\pi^*}(\mu) + \hat{v}^{\hat{\pi}^*}(\mu) - v^{\hat{\pi}^*}(\mu) + \varepsilon_{\text{opt}} \leq 4\varepsilon + \varepsilon_{\text{opt}}\, .
\end{align*}
\end{proof}


\subsection{Pessimistic Algorithm}
\label{sec:pess}
\newcommand{\eopt}{\epsilon_{\mathrm{opt}}}
We present a result in this section for the ``pessimistic algorithm'' in the discounted total expected reward criterion to complement the results in the main text \citep{jin2021pessimism,BuGeBe20,KiRaNeJo20,yu2020mopo,kumar2020conservative,liu2020provably,Yu2021combo}.
The sample complexity we show is the same as for the plug-in method. While this may be off by a polynomial factor, we do not expect the pessimistic algorithm to have a major advantage over the plug-in method in the worst-case setting.
In fact, the recent work of \citet{xiao2021optimality} established this in a rigorous fashion for the bandit setting by showing an algorithm independent lower bound that matched the upper bound for both the plug-in method and the pessimistic algorithm. 
As argued by \citet{xiao2021optimality} (and proved by \citet{jin2021pessimism} in the context of linear MDPs, which includes tabular MDPs), the advantage of the pessimistic algorithm is that it is weighted minimax optimal with respect to a special criterion.

The pessimistic algorithm with parameters $\delta\in (0,1)$ and $\eopt>0$ 
chooses a deterministic $\eopt$ policy $\tilde \pi$ of the MDP with reward $r$ and transition kernel $\tilde P$, the latter of which is obtained via
\begin{align*}
\tilde P = \argmin_{P'\in \cP_\delta} v_{P'}^*(\mu)\,,
\end{align*}
where for a transition kernel $P'$ we use $v_{P'}^*$ to denote the optimal value function in the MDP with immediate rewards $r$ and transition kernel $P'$,
and $\cP_\delta$ is is defined as
\begin{align*}
\cP_\delta = \left\{ P' \,:\, \text{for any } (s,a)\in \cS \times \cA\,,
\norm{ \hat{P}(\cdot|s,a) - P'(\cdot|s,a) }_1 \le
\beta(N(s,a),\delta)\, \right\}\,,
\end{align*}
where $\beta$ is defined in \cref{lem:empdev}.
Recall that the same result ensures that $P$, the ``true'' transition kernel belongs to $\cP_\delta$ with probability at least $1-\delta$.

\begin{theorem}[Pessimistic algorithm]
\label{thm:pess-ub-new}
Fix $\rS$, $\rA$, an MDP $M\in \cM(\rS,\rA)$ and a distribution $\mu$ on the state space of $M$.
Suppose $\delta>0$, $\varepsilon>0$, and $\eopt>0$. 
Assume that the data is collected by following the uniform mix of all deterministic policies and it consists of $m$ episodes, each of length $H=H_{\gamma, (1-\gamma)\varepsilon/(2\gamma)}$.
Then, if 
\begin{align*}
m= \tilde{\Omega} \left(  \frac{\rS^3 \rA^{\min(H, \rS) + 2}  \ln\frac{1}\delta }{(1-\gamma)^{4}\epsilon^2} \right),
\end{align*}
where $\tilde\Omega$ hides log factors of $\rS,\rA$ and $H$,
we have 
$v^{\tilde{\pi}}(\mu) \ge v^*(\mu) -2\varepsilon - \varepsilon_{\text{opt}}$
with probability at least $1-\delta$, where $\tilde \pi$ is the output of the pessimistic algorithm run with parameters $(\delta,\eopt)$. 
\end{theorem}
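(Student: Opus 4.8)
The plan is to run the standard pessimism argument: on a high-probability event I only need to control the estimation error of the \emph{single, fixed} policy $\pi^*$ (a true optimal policy of $M$), because the worst-case transition kernel $\tilde P$ absorbs the error incurred by the data-dependent returned policy $\tilde\pi$. Consequently, in contrast to the proof of \cref{thm:plug-in-ub-new}, no union bound over the $\rA^{\rS}$ deterministic policies is needed, and this is exactly what improves the suboptimality slack from $4\varepsilon$ to $2\varepsilon$.

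First I would fix the good event $\cE$. By \cref{lem:empdev}, with probability at least $1-\delta/2$ the true kernel satisfies $P\in\cP_{\delta/2}$; on this event every $P'\in\cP_{\delta/2}$ obeys $\norm{P'(\cdot|s,a)-P(\cdot|s,a)}_1\le 2\beta(N(s,a),\delta/2)\le 2C_{\delta/2}/\sqrt{N(s,a)\vee 1}$ for all $(s,a)$, with $\beta$ as in \cref{lem:empdev} and $C_{\delta/2}$ the constant from the proof of \cref{cor:fixed-pi-hatP-subopt-gap}. Intersecting this with the $(1-\delta/2)$-event furnished by \cref{lem:fixed-pi-subopt-gap} applied to the fixed policy $\pi^*$ with constant $C=2C_{\delta/2}$ gives the event $\cE$, of probability at least $1-\delta$. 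It is important here that the event inside \cref{lem:fixed-pi-subopt-gap} depends only on $\pi^*$ and that its conclusion is uniform over all kernels meeting the stated closeness bound; hence it applies to the data-dependent kernel $\tilde P\in\cP_{\delta/2}$ evaluated at $\pi^*$.

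Next, on $\cE$ I would decompose
\begin{align*}
v^*(\mu)-v^{\tilde\pi}(\mu)
 & = \big(v^*(\mu)-v_{\tilde P}^*(\mu)\big)
 + \big(v_{\tilde P}^*(\mu)-v_{\tilde P}^{\tilde\pi}(\mu)\big) \\
 & \quad + \big(v_{\tilde P}^{\tilde\pi}(\mu)-v^{\tilde\pi}(\mu)\big)
 =: T_1+T_2+T_3\, .
\end{align*}
Since $\tilde\pi$ is $\eopt$-optimal for $(r,\tilde P)$, we have $T_2\le\eopt$. Since $P\in\cP_{\delta/2}$ on $\cE$, $v_{\tilde P}^*(\mu)\ge v_{\tilde P}^{\pi^*}(\mu)\ge v^{\pi^*}(\mu)-B=v^*(\mu)-B$ with $B$ the right-hand side of \eqref{eq:vdbd}, where the middle inequality is \cref{lem:fixed-pi-subopt-gap} applied to $\pi^*$ with $P'=\tilde P$ and $C=2C_{\delta/2}$; hence $T_1\le B$. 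For $T_3$ I would use the theory of robust MDPs: the set $\cP_{\delta/2}$ is $(s,a)$-rectangular (its defining constraint decouples over state–action pairs), so $\tilde P=\argmin_{P'\in\cP_{\delta/2}}v_{P'}^*(\mu)$ is a worst-case kernel and, because $\mu$ has full support, an optimal policy of $(r,\tilde P)$ is robust-optimal; thus $v_{\tilde P}^{\tilde\pi}(\mu)=\min_{P'\in\cP_{\delta/2}}v_{P'}^{\tilde\pi}(\mu)\le v_{P}^{\tilde\pi}(\mu)=v^{\tilde\pi}(\mu)$, i.e.\ $T_3\le 0$ (any $\eopt$-slack here being already charged to $T_2$). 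Putting the three pieces together, $v^*(\mu)-v^{\tilde\pi}(\mu)\le B+\eopt$ on $\cE$, and choosing $m=\tilde\Omega\big(\rS^3\rA^{\min(H,\rS)+2}\ln(1/\delta)/((1-\gamma)^4\varepsilon^2)\big)$ exactly as in the proof of \cref{thm:plug-in-ub-new} makes $B\le 2\varepsilon$, which yields the theorem.

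I expect the delicate point to be the bound $T_3\le 0$: converting the defining property of $\tilde P$ into the pessimistic inequality $v_{\tilde P}^{\tilde\pi}(\mu)\le v^{\tilde\pi}(\mu)$ relies on the rectangular robust-MDP duality and on $\mu$ having full support, and it is the only ingredient that goes beyond the plug-in analysis. Everything else — that $\tilde P$ stays within $2\beta(\cdot,\delta/2)$ of $P$ on $\cE$, that \cref{lem:fixed-pi-subopt-gap} applies to this data-dependent kernel at the single policy $\pi^*$, and the final sample-size bookkeeping — is routine given the tools already developed for \cref{thm:plug-in-ub-new}.
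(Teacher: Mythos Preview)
Your decomposition $v^*(\mu)-v^{\tilde\pi}(\mu)=T_1+T_2+T_3$ and your treatment of $T_1$ and $T_2$ are fine, and the idea of applying \cref{lem:fixed-pi-subopt-gap} only to the fixed policy $\pi^*$ is sound. The gap is entirely in the step $T_3\le 0$.

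From $\tilde P\in\argmin_{P'\in\cP}v_{P'}^*(\mu)$ and $\tilde\pi$ being (even exactly) optimal for $(r,\tilde P)$ you \emph{cannot} conclude that $(\tilde\pi,\tilde P)$ is a saddle point of $(\pi,P')\mapsto v_{P'}^\pi(\mu)$; in particular the equality $v_{\tilde P}^{\tilde\pi}(\mu)=\min_{P'}v_{P'}^{\tilde\pi}(\mu)$ need not hold. Rectangularity guarantees that minimax equals maximin and that \emph{some} saddle point exists, but it does not guarantee that every pair built by ``first minimize over $P'$, then maximize over $\pi$'' is a saddle point when the outer $\argmin$ is not unique. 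A two-action, three-state example makes this concrete: from $s_0$ actions $a_1,a_2$ lead to an absorbing reward-$1$ state with probability in $[0.4,0.6]$ and $[0.3,0.7]$ respectively (else to an absorbing reward-$0$ state). Then $\tilde P$ with transition probabilities $(0.4,0.4)$ is a valid minimizer of $v_{P'}^*$, both actions are optimal for $\tilde P$, and choosing $\tilde\pi=a_2$ gives $v_{\tilde P}^{\tilde\pi}=0.4$ while $\min_{P'}v_{P'}^{\tilde\pi}=0.3$; if the true $P$ sits at that boundary then $T_3=0.1>0$. Full support of $\mu$ does not help here. Your intuition would be valid if the algorithm returned a \emph{robust-optimal} policy $\tilde\pi\in\argmax_\pi\min_{P'}v_{P'}^\pi(\mu)$ (every such policy is optimal for $\tilde P$ and forms a saddle point with it), but the paper's algorithm only asks for an $\eopt$-optimal policy of $(r,\tilde P)$, and that is strictly weaker.

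For comparison, the paper does \emph{not} try to make $T_3\le 0$. It keeps the term $\Delta^{\tilde\pi}=v_{\tilde P}^{\tilde\pi}(\mu)-v^{\tilde\pi}(\mu)$ and bounds it exactly as in the plug-in analysis, via the same concentration claim together with a union bound over all $\rA^{\rS}$ deterministic policies (since $\tilde\pi$ is data-dependent). So the paper's pessimistic proof is structurally the same as the plug-in proof; the $2\varepsilon$ versus $4\varepsilon$ in the two theorem statements is a matter of how constants are absorbed into the $\tilde\Omega$, not a consequence of avoiding the union bound.
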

\begin{proof}
Let us denote by $v_{P'}^\pi$ the value function of policy $\pi$ in the MDP with reward $r$ and transition kernel $P'$.
Let $\Delta^\pi = v_{\tilde{P}}^{\pi}(\mu)-{v}^{\pi}(\mu)$ and let $\pi^*$ is an deterministic 
optimal policy in the ``true'' MDP. Such a policy exists (e.g., see Theorem 6.2.10 of \cite{puterman2014markov}). 
We have
\begin{align*}
v^*(\mu) - v^{\tilde{\pi}}(\mu) 
& =   v^*(\mu)-v_{\tilde{P}}^{\tilde\pi}(\mu) + \Delta^{\tilde\pi} \\
& \le v^*(\mu)-v_{\tilde{P}}^{*}(\mu) + \Delta^{\tilde\pi} +\eopt
	\tag{by the definition of $\tilde\pi$}
	\\
& \le v^*(\mu)-v_{\tilde{P}}^{\pi^*}(\mu) + \Delta^{\tilde\pi} 	+\eopt
	\tag{because $v_{\tilde{P}}^{\pi^*}\le v_{\tilde{P}}^*$}
	\\
& \le \Delta^{\pi^*} + \Delta^{\tilde \pi}+\eopt\,.
	\tag{because $v^*(\mu) = v^{\pi^*}(\mu)$}
\end{align*}

Hence, it remains to upper bound $\Delta^{\pi^*}$ and $\Delta^{\tilde \pi}$.
For this, we make the following claim:

\noindent \underline{Claim:} 
Fix any policy $\pi$. Then, with probability at least $1-\delta$, we have
\begin{align*}
v^\pi(\mu) - {v}_{\tilde{P}}^\pi(\mu)
\le
   \frac{16\gamma\rS \rA^{\frac{\min(H, \rS)}{2}+1}}{(1-\gamma)^{2}}
 		\sqrt{\frac{ \rS\ln 2 + \ln\frac{2n(n+1)\rS\rA}\delta}{2m}} 
	 +  \frac{8\gamma \rS \rA}{(1-\gamma)^2} \frac{\ln \frac{2\rS \rA}{\delta} }{m}
	 + \varepsilon \, .
\end{align*}
\begin{proof}[Proof of Claim]
For the latter, note that the proof of 
\cref{cor:fixed-pi-hatP-subopt-gap} can be repeated with the only change that now instead of
\cref{eq:pdefplugin}, we have
\begin{align*}
\norm{ {P}(\cdot|s,a) - \tilde{P}(\cdot|s,a) }_1 
\le
\norm{ {P}(\cdot|s,a) - \hat{P}(\cdot|s,a) }_1 
\norm{ \hat{P}(\cdot|s,a) - \tilde{P}(\cdot|s,a) }_1 
\le
2\beta(N(s,a),\delta)\, .
\end{align*}
\end{proof}
From this claim,
by a union bound over all the $\rA^{\rS}$ deterministic policies, we get that
with probability $1-\delta$, for any deterministic policy $\pi$,
\begin{align*}
\MoveEqLeft v^\pi(\mu) - {v}_{\tilde{P}}^\pi(\mu) \\
& \le
   \frac{16\gamma \rS \rA^{\frac{\min(H, \rS)}{2}+1}}{(1-\gamma)^{2}}
 		\sqrt{\frac{ \left(\rS\ln 2 + \ln\frac{2n(n+1)\rS\rA}\delta + \rS\ln\rA\right)}{2m}} 
	 +  \frac{8\gamma \rS \rA}{(1-\gamma)^2} \frac{\ln \frac{2\rS \rA}{\delta}+ \rS\ln\rA}{m}
	 + \varepsilon \, .
\end{align*}
Since $\tilde\pi$, by definition is also a deterministic policy, the last display holds with probability $1-\delta$ for $\tilde \pi$ as well.
Putting things together gives that
\begin{align*}
\MoveEqLeft 
v^*(\mu) - v^{\tilde\pi}(\mu) \\
& \le
   \frac{32\gamma \rS \rA^{\frac{\min(H, \rS)}{2}+1}}{(1-\gamma)^{2}}
 		\sqrt{\frac{ \rS\ln 2 + \ln\frac{2n(n+1)\rS\rA}\delta + \rS\ln\rA}{2m}} 
	 +  \frac{16\gamma \rS \rA}{(1-\gamma)^2} \frac{\ln \frac{2\rS \rA}{\delta}+ \rS\ln\rA}{m}
	 + 2\varepsilon  +\eopt\, .
\end{align*}
The proof is finished by a calculation similar to that done at the end of the proof of \cref{thm:plug-in-ub-new}.
\end{proof}

\if0
\begin{lemma}
Let $\beta =  {c \gamma}/{(1-\gamma)}$ with scaling parameter $c\geq 1$. For any policy $\pi$, we have 
$v^\pi \geq \tilde{v}^\pi $ and $q^\pi \geq \tilde{q}^\pi$ with probability at least $1-\zeta$. 
\label{lem:pessimistic-value}
\end{lemma}
\begin{proof}
We first prove the claim for state-action value functions. 
Plugging in the definitions gives,
\begin{align*}
\tilde{q}^\pi - q^\pi & = \gamma \hat{P}^\pi  \tilde{q}^\pi -  \gamma P^\pi q^\pi - \beta u \\
& =  \gamma \hat{P}^\pi  \tilde{q}^\pi - \gamma \hat{P}^\pi {q}^\pi + \gamma  \hat{P}^\pi {q}^\pi -  \gamma P^\pi q^\pi - \beta u \\
&\leq  \gamma \hat{P}^\pi ( \tilde{q}^\pi  -  {q}^\pi ) + \gamma ( \hat{P} - P) v^\pi - \beta u \\
&\leq  \gamma \hat{P}^\pi ( \tilde{q}^\pi  -  {q}^\pi ) + (1-c) \gamma / (1-\gamma)  \Vert \hat{P} - P \Vert_1\, ,
\end{align*}
where $\Vert  \hat{P} - P \Vert_1$ takes the 1-norm for each row of $ \hat{P} - P$. \todoch{what's the proper notation for this operation?}
The last inequality follows that for any $s,a\in\cS\times\cA$, $\Vert P_{sa} - \hat{P}_{sa} \Vert_1 \leq u(s,a)$ with probability at least $1-\zeta$ by the definition of $u$. 

Next, note that $(I - \gamma \hat{P}^\pi)^{-1}$ is an unnormalized discounted occupancy matrix so that each row of it is non-negative. 
Rearrange the above inequality finishes the proof. 
The claim for the state value function can be proved similarly. 
\end{proof}

\begin{lemma}
Let $\beta =  {c \gamma}/{(1-\gamma)}$ with scaling parameter $c\geq 1$. For any policy $\pi$, define 
\begin{align*}
\Delta^\pi = r + \gamma P \tilde{v}^\pi - \tilde{q}^\pi \, .
\end{align*}
With probability at least $1-\zeta$, we have $0\leq \Delta^\pi \leq 2\beta u$. 
\label{lem:pessimistic-value-error}
\end{lemma}
\begin{proof}
Plugging in the definitions gives,
\begin{align*}
\Delta^\pi = r + \gamma P \tilde{v}^\pi - \tilde{q}^\pi =  r + \gamma P \tilde{v}^\pi - r - \gamma \hat{P} \tilde{v}^\pi + \beta u = \gamma (P - \hat{P}) \tilde{v}^\pi + \beta u\, .
\end{align*}
Next we define the event,
\begin{align*}
\forall s,a\in\cS\times\cA, \quad \sP\left( \Vert \hat{P}_{sa} - P_{sa} \Vert_1\leq u(s,a) \right) \geq 1-\zeta\, .
\addeq\label{eq:lem-pessimistic-value-error-eq1}
\end{align*}
To prove the upper bound, for any $s,a\in\cS\times\cA$,
\begin{align*}
\Delta^\pi(s,a) = \gamma (P_{sa} - \hat{P}_{sa})^\top \tilde{v}^\pi + \beta u(s,a) \leq \gamma / (1-\gamma) \Vert P_{sa} - \hat{P}_{sa} \Vert_1 + \beta u(s,a) \leq 2 \beta u(s,a) \, ,
\end{align*}
where the first inequality follows the high probability event (\ref{eq:lem-pessimistic-value-error-eq1}).  
For the lower bound, 
\begin{align*}
\Delta^\pi(s,a) = \gamma (P_{sa} - \hat{P}_{sa})^\top \tilde{v}^\pi + \beta u(s,a)  \geq  \gamma (P_{sa} - \hat{P}_{sa})^\top \tilde{v}^\pi + c\gamma / (1-\gamma) \Vert \hat{P}_{sa} - P_{sa} \Vert_1\geq 0\, ,
\end{align*}
where the last inequality uses $c\geq 1$. This concludes the proof. 
\end{proof}

Define the $H$-step pessimistic value function for $H > 0$,
\begin{align*}
\tilde{q}^\pi_H = \sum_{h=0}^{H-1} (\gamma \hat{P}^\pi)^h \tilde{r} \, .
\end{align*}
\begin{lemma}
Let $\pi^*$ be the optimal policy on $M$, $\tilde{\pi}$ be the optimal policy on $\tilde{M}$.  
For any $H>0$,
\begin{align*}
q^{\pi^*}_H - \tilde{q}_H^{\tilde{\pi}} = 2\beta \sum_{h=0}^{H-1} (\gamma P^{\pi^*})^h u + \frac{2\gamma}{(1-\gamma)^2} \, ,
\end{align*}
with probability at least $1-\zeta$. 
\label{lem:pessimistic-value-decompose}
\end{lemma}
\begin{proof}
For convenience, we re-express a policy $\pi$ as an $\rS\times\rS\rA$ matrix $\Pi$ such that $P^\pi = P\Pi$. 
Then for $h\in[H]$,
\begin{align*}
(\Pi^* - \tilde{\Pi} )\tilde{q}^{\tilde{\pi}}_{h} & \leq \Pi^* (\tilde{q}^{\tilde{\pi}}_{h} - \tilde{q}^{\tilde{\pi}}) + (\Pi^* - \tilde{\Pi}) \tilde{q}^{\tilde{\pi}} + \tilde{\Pi} (\tilde{q}^{\tilde{\pi}} - \tilde{q}^{\tilde{\pi}}_h) \leq \frac{2}{1-\gamma}\, ,
\end{align*}
where the last inequality follows that $\tilde{\pi}$ is the optimal policy on $\tilde{M}$.  
Next, we define 
\begin{align*}
\Delta^\pi_{h} = \gamma P^{{\pi}} \tilde{q}_{h}^{\pi} - \gamma \hat{P}^{{\pi}} \tilde{q}_h^\pi + \beta u 
\end{align*}
for policy $\pi$ and $h\in[H]$. It is easy to see that $\Delta^\pi_{h} \leq 2\beta u$ with probability at least $1-\zeta$ 
Then,
\begin{align*}
q^{\pi^*}_H - \tilde{q}_H^{\tilde{\pi}} &  = \gamma P^{\pi^*} q^{\pi^*}_{H-1} - \gamma \hat{P}^{\tilde{\pi}} \tilde{q}^{\tilde{\pi}}_{H-1} + \beta u \\
&  = \gamma P^{\pi^*} (q^{\pi^*}_{H-1} - \tilde{q}^{\tilde{\pi}}_{H-1}) +  \gamma  P (\Pi^* - \tilde{\Pi} )\tilde{q}^{\tilde{\pi}}_{H-1} +  \gamma P^{\tilde{\pi}} \tilde{q}^{\tilde{\pi}}_{H-1} -  \gamma \hat{P}^{\tilde{\pi}} \tilde{q}^{\tilde{\pi}}_{H-1} + \beta u \\
& =  \gamma P^{\pi^*} (q^{\pi^*}_{H-1} - \tilde{q}^{\tilde{\pi}}_{H-1}) +   \Delta_{H-1}^{\tilde{\pi}} +  {2\gamma}/{(1-\gamma)} \\
& = \cdots \\
& \leq 2\beta \sum_{h=0}^{H-1} (\gamma P^{\pi^*})^h u + \frac{2\gamma}{(1-\gamma)^2}\, ,
\end{align*}
where the last step follows by telescoping and the high probability event. 
\end{proof}

By Lemma \ref{lem:pessimistic-value-error}, $v^*(\mu) - v^{\tilde{\pi}}(\mu) \leq v^*(\mu) - \tilde{v}^{\tilde{\pi}}(\mu) \leq (\mu^{\pi^*})^\top(q^{\pi^*} - \tilde{q}^{\tilde{\pi}} )$. 
Applying Lemma \ref{lem:pessimistic-value-decompose} gives,
\begin{align*}
&\ v^*(\mu) - v^{\tilde{\pi}}(\mu) \leq (\mu^{\pi^*})^\top (q^{\pi^*}_{H_{\gamma, \varepsilon}} - \tilde{q}_{H_{\gamma, \varepsilon}}^{\tilde{\pi}} ) + \varepsilon \\
\leq&\ 2\beta (\mu^{\pi^*})^\top \sum_{h=0}^{H_{\gamma, \varepsilon}-1} (\gamma P^{\pi^*})^h u + \frac{2\gamma}{(1-\gamma)^2}
\end{align*}

\fi


\end{document}